\relax

\documentclass[10.5pt]{article}
\usepackage{fullpage}

\usepackage[round, sort]{natbib}
\usepackage[nottoc,notlot,notlof]{tocbibind}

\usepackage[colorlinks=true,linkcolor=blue, citecolor=blue]{hyperref}
\usepackage{url}            
\usepackage{booktabs}       
\usepackage{amsfonts}       
\usepackage{nicefrac}       
\usepackage{microtype}      

\usepackage{xcolor}
\usepackage{mathtools}

\usepackage{times}
\usepackage{epsfig}
\usepackage{graphicx}
\usepackage{amsmath}
\usepackage{amsthm}
\usepackage{amssymb}
\usepackage{bbm}
\usepackage[linewidth=1pt]{mdframed}
\usepackage{multirow}
\usepackage{booktabs}       
\usepackage{makecell}
\usepackage{placeins}
\usepackage{array}

\newtheorem{theorem}{Theorem}
\newtheorem{definition}[theorem]{Definition}
\newtheorem{proposition}[theorem]{Proposition}
\newtheorem{corollary}[theorem]{Corollary}
\newtheorem{lemma}[theorem]{Lemma}

\DeclareMathOperator*{\argmin}{arg\,min}
\DeclareMathOperator*{\argmax}{arg\,max}

\newcommand{\domx}{\mathcal{X}}
\newcommand{\domy}{\mathcal{Y}}

\newcommand{\vecq}{\boldsymbol{q}}

\newcommand{\veceta}{\boldsymbol{\eta}}
\newcommand{\vece}{\boldsymbol{e}}
\newcommand{\vecx}{\boldsymbol{x}}

\newcommand{\vecu}{\boldsymbol{u}}

\newcommand{\vecv}{\boldsymbol{v}}
\newcommand{\vecone}{\mathbbm{1}}

\newcommand{\zoloss}{\ell_{0\text{-}1}}
\newcommand{\focalloss}{\ell^\gamma_{\mathrm{FL}}}
\newcommand{\celoss}{\ell_{\mathrm{CE}}}
\newcommand{\Psibf}{\boldsymbol{\Psi}^\gamma}
\newcommand{\subunif}{\mathcal{S}^K}

\newcommand{\R}{\mathbb{R}}

\newcommand{\EJoint}{\mathop{\mathbb{E}}\limits_{(\vecx,y) \sim p(\vecx,y)}}

\newcommand{\resp}{\emph{resp.}}
\newcommand{\eg}{\emph{e.g.}}
\newcommand{\ie}{\emph{i.e.}}

\usepackage{graphicx}
\graphicspath{{./figures/}}
\newcommand\blfootnote[1]{%
  \begingroup
  \renewcommand\thefootnote{}\footnote{#1}%
  \addtocounter{footnote}{-1}%
  \endgroup
}

\begin{document}
\title{On Focal Loss for Class-Posterior Probability Estimation: \\A Theoretical Perspective}
\author{Nontawat Charoenphakdee$^{*1,2}$ \and Jayakorn Vongkulbhisal$^{*3}$ \and Nuttapong Chairatanakul$^{4,5}$ \and Masashi Sugiyama$^{2,1}$}
\date{
$^1$ The University of Tokyo 
$^2$ RIKEN AIP \\
$^3$ IBM Research 
$^4$ Tokyo Institute of Technology 
$^5$ RWBC-OIL, AIST \\
}
\maketitle
\blfootnote{*Nontawat and Jayakorn contributed equally.}
\begin{abstract}
The focal loss has demonstrated its effectiveness in many real-world
applications such as object detection and image classification, but
its theoretical understanding has been limited so far. In this paper,
we first prove that the focal loss is classification-calibrated, i.e.,
its minimizer surely yields the Bayes-optimal classifier and thus the
use of the focal loss in classification can be theoretically
justified. However, we also prove a negative fact that the focal loss
is not strictly proper, i.e., the confidence score of the classifier obtained
by focal loss minimization does not match the true class-posterior
probability and thus it is not reliable as a class-posterior
probability estimator. 
To mitigate this problem, we next prove that
a particular closed-form transformation of the confidence score allows us to
recover the true class-posterior probability. 
Through experiments on benchmark datasets, we demonstrate that our proposed transformation significantly improves the accuracy of class-posterior probability estimation.
\end{abstract}
\section{Introduction}
It is well-known that training classifiers with the same model architecture can have a huge performance difference if they are trained using different loss functions~\citep{bartlett2006,lin2017focal,ghosh2017robust}.
To choose an appropriate loss function, it is highly useful to know theoretical properties of loss functions.
For example, let us consider the hinge loss, which is related to the support vector machine~\citep{cortes1995support,bartlett2006}. 
This loss function is known to be suitable for classification since minimizing this loss can achieve the Bayes-optimal classifier.
However, it is also known that training with the hinge loss does not give the Bayes-optimal solutions for bipartite ranking~\citep{gao2015consistency,uematsu2017theoretically} and class-posterior probability estimation~\citep{reid2010}.
Such theoretical drawbacks of the hinge loss have been observed to be relevant in practice as well~\citep{platt1999probabilistic,uematsu2017theoretically}.
Not only the hinge loss, but many other loss functions have also been analyzed and their theoretical results have been used as a guideline to choose an appropriate loss function for many problems, \eg, classification from noisy labels~\citep{ghosh2017robust,charoenphakdee2019symmetric,liu2019peer}, classification with rejection~\citep{yuan2010,ni2019calibration}, and direct optimization of linear-fractional metrics~\citep{bao2020calibrated2, nordstrom2020calibrated}.

Recently, the focal loss has been proposed as an alternative to the popular cross-entropy loss~\citep{lin2017focal}.
This loss function has been shown to be preferable over the cross-entropy loss when facing the class imbalance problem.
Because of its effectiveness, it has been successfully applied in many applications, \eg, medical diagnosis~\citep{ulloa2020improving,xu2020focal,shu2019pathological,al2019dense}, speech processing~\citep{tripathi2019focal}, and natural language processing~\citep{shi2018normalized}.
Although the focal loss has been successfully applied in many real-world problems~\citep{ulloa2020improving,xu2020focal,shu2019pathological,chang2018brain,lotfy2019investigation,chen2019fr,romdhane2020electrocardiogram,tong2020pulmonary,sun2019drug,al2019dense}, considerably less attention has been paid to the theoretical understanding of this loss function.
For example, a fundamental question whether we can estimate a class-posterior probability from the classifier trained with the focal loss has remained unanswered. 
Knowing such a property is highly important when one wants to utilize the prediction confidence.
For example, one may defer the  decision to a human expert when a classifier has low prediction confidence~\citep{chow1970,yuan2010,ni2019calibration,mozannar2020consistent,charoenphakdee2020classification}, or one may use the prediction confidence to teach a new model, which has been studied in the literature of knowledge distillation~\citep{hinton2015distilling,vongkulbhisal2019unifying, menon2020distillation}.

Motivated by the usefulness of loss function analysis and the lack of theoretical understanding of the focal loss, the goal of this paper is to provide an extensive analysis of this loss function so that we can use it appropriately for the real-world applications. 
Our contributions can be summarized as follows:
\begin{itemize}
    \item In Sec.~\ref{sec:ClsCalibFocal}, we prove that the focal loss is classification-calibrated  (Thm.~\ref{thm:calib}), which theoretically confirms that the optimal classifier trained with the focal loss can achieve the Bayes-optimal classifier.
    \item In Sec.~\ref{sec:PropFocal}, we prove that learning with the focal loss can give both underconfident and overconfident classifiers (Thm.~\ref{thm:underover}). 
    Our result suggests that the simplex output of the classifier is not reliable as a class-posterior probability estimator~(Thm.~\ref{thm:not-proper}).
    \item In Sec.~\ref{sec:recover}, we prove that the true class-posterior probability can be theoretically recovered from the focal risk minimizer by our proposed novel transformation $\Psibf$ (Thm.~\ref{thm:recover}). 
    This allows us to calibrate the confidence score of the classifier, while maintaining the same decision rule
    ~(Prop.~\ref{prop:maintain}).
    \item We provide synthetic and benchmark experiments to empirically verify the behavior of the focal loss and the usefulness of our proposed transformation.
\end{itemize}
\section{Preliminaries}
In this section, we begin by describing the problem setting and notation we use in this paper. 
Then, we explain fundamental properties of loss functions used for classification, and end the section with a review of the focal loss. 

\subsection{Multiclass classification}
Let $\domx$ be an input space and $\domy=\{1, 2, \dots, K\}$ be a label space, where $K$ denotes the number of classes\footnote{Bold letters denote vectors, \eg, $\vecx$. Non-bold letters denote scalars, \eg, $x$. Subscripted letters denote vector elements, \eg, $x_i$ is element $i$ of $\vecx$. $\vecone_{[\cdot]}$ denotes the indicator function. $\vecx^\top$ denotes the transpose of $\vecx$.}. 
In multiclass classification, we are given labeled examples $\mathcal{D} = \{(\vecx_i, y_i) \}_{i=1}^n$ independently drawn from an unknown probability distribution (i.i.d.)~over $\domx \times \domy$ with density $p(\vecx, y)$. 
The goal of classification is to find a classifier $f\colon \domx \to \domy$ that minimizes the following classification risk:
\begin{align}
\label{eq:exp_risk}
    R^{\zoloss}(f) = \EJoint[\zoloss(f(\vecx), y)],
\end{align}
where $\zoloss$ is the zero-one loss $\zoloss(f(\vecx),y) = \vecone_{[f(\vecx) \neq y]}$.
Next, let us define the true class-posterior probability vector as $\veceta(\vecx) = [\eta_1(\vecx), \ldots, \eta_K(\vecx)]^\top$, where $\eta_y(\vecx) = p(y|\vecx)$ denotes the true class-posterior probability for a class $y$.
It is well-known that the Bayes-optimal classifier $f^{\zoloss,*}$, which minimizes the expected classification risk in Eq.~\eqref{eq:exp_risk}, can be defined as follows:.
\begin{definition}[Bayes-optimal classifier~\citep{zhang2004statisticalmulti}]
\rm{The Bayes-optimal solution of multiclass classification, $f^{\zoloss,*}=\argmin_f\,R^{\zoloss}(f)$, can be expressed as
\begin{equation}
\label{eq:bayes-cls}
    f^{\zoloss,*}(\vecx) = \argmax_y \eta_y(\vecx).
\end{equation}
}
\end{definition} 

As suggested in Eq.~\eqref{eq:bayes-cls}, knowing the true class-posterior probability $\veceta$ can give the Bayes-optimal classifier but the converse is not necessarily true~\citep{bartlett2006, vernet2011composite}. 
The support vector machine~\citep{cortes1995support} is a good example of a learning method that achieves the Bayes-optimal classifier but its confidence score is not guaranteed to obtain the true class-posterior probability~\citep{cortes1995support, platt1999probabilistic}. 

\subsection{Surrogate loss}
A common practice to learn a classifier using a neural network is to learn a mapping $\vecq: \mathcal{X} \to \Delta^K$, which maps an input to a $K$-dimensional simplex vector.
The simplex output of $\vecq$ is often interpreted as a probability distribution over predicted output classes. 
We denote $\vecq(\vecx) = [q_1(\vecx), \ldots, q_K(\vecx)]^\top$, where $q_y \colon \domx \to [0,1]$ is a score for class $y$ and $\sum_{y=1}^K q_y(\vecx)=1$. 
One typical choice of a mapping $\vecq$ would be a deep convolutional neural network with a softmax function as the output layer.
Given an example $\vecx$ and a trained mapping function $\vecq$, a decision rule $f^{\vecq}$ can be inferred by selecting a class with the largest score:
\begin{align}
    f^{\vecq}(\vecx) = \argmax_y q_y(\vecx). 
\end{align}

In classification, although the goal is to minimize the classification risk in Eq.~\eqref{eq:exp_risk}, it is not straightforward to minimize the classification risk in practice.
The first reason is we are given finite examples, not the full distribution.
Another reason is minimizing the risk w.r.t \emph{the zero-one loss} is known to be computationally infeasible~\citep{zhang2004statisticalmulti,bartlett2006}. 
As a result, it is common to minimize an \emph{empirical surrogate risk}~\citep{bartlett2006,vapnik1998statistical}.
Let $\ell\colon\Delta^K \times \Delta^K \to \R$ be a \emph{surrogate loss} and $\vece_{y} \in \{0,1\}^K$ be a one-hot vector with $1$ at the $y$-th index and $0$ otherwise.
By following the empirical risk minimization approach~\citep{vapnik1998statistical}, we minimize the following empirical surrogate risk:
\begin{align}
    \widehat{R}^\ell(\vecq) = \frac{1}{n} \sum_{i=1}^n \ell(\vecq(\vecx_i),\vece_{y_i}),
\end{align}
where regularization can also be added to avoid overfitting.

Note that the choice of a surrogate loss is not straightforward and can highly influence the performance of a trained classifier.
Necessarily, we should use a surrogate loss that is easier to minimize than the zero-one loss. 
Moreover, the surrogate risk minimizer should also minimize the expected classification risk in Eq.~\eqref{eq:exp_risk} as well.

\subsection{Focal loss}
In this paper, we focus on a surrogate loss $\ell\colon\Delta^K \times \Delta^K \to \R$ that receives two simplex vectors as arguments. 
Let $\vecu \in \Delta^K$, $\vecv \in \Delta^K$, and $\gamma \geq 0$ be a nonnegative scalar.
The focal loss $\focalloss:\Delta^K\times\Delta^K\rightarrow\R$ is defined as follows~\citep{lin2017focal}:
\begin{equation}
    \focalloss(\vecu, \vecv) = -\sum_{i=1}^K v_i(1-u_i)^\gamma \log(u_i).
\end{equation}
It can be observed that the focal loss with $\gamma = 0$ is equivalent to the well-known cross-entropy loss, \ie,~\citep{lin2017focal}:
\begin{equation}
    \label{eq:ce_loss}
    \ell_{\text{CE}}(\vecu, \vecv) = -\sum_{i=1}^K v_i \log(u_i).
\end{equation}
Unlike the cross-entropy loss that has been studied extensively~\citep{zhang2004statisticalmulti, buja2005loss, vernet2011composite}, we are not aware of any theoretical analysis on the fundamental properties of the focal loss.
Most analyses of the focal loss are based on an analysis of its gradient and empirical observation~\citep{lin2017focal,mukhoti2020calibrating}. 
In this paper, we will study the properties of \emph{classification-calibration}~\citep{bartlett2006, tewari2007consistency} (Sec.~\ref{sec:ClsCalibFocal}) and \emph{strict properness}~\citep{shuford1966admissible,buja2005loss, gneiting2007strictly} (Sec.~\ref{sec:PropFocal}) to provide a theoretical foundation to the focal loss.

\section{Focal loss is classification-calibrated}\label{sec:ClsCalibFocal}
In this section, we theoretically prove that minimizing the focal risk $R^{\focalloss}$ can give the Bayes-optimal classifier, which guarantees to maximize the expected accuracy in classification~\citep{zhang2004statisticalmulti}. 
We show this fact by proving that the focal loss is \emph{classification-calibrated}~\citep{bartlett2006, tewari2007consistency}.

First, let us define the pointwise conditional risk $W^\ell$ of an input $\vecx$ with its class-posterior probability $\veceta(\vecx)$:
\begin{align}
    W^\ell \big(\vecq(\vecx); \veceta(\vecx) \big) = \sum_{y \in \domy} \eta_y(\vecx) \ell\big( \vecq(\vecx), \vece_y \big).
\end{align}
Intuitively, the pointwise conditional risk $W^\ell$ corresponds to the expected penalty for a data point $\vecx$ when using $\vecq(\vecx)$ as a score function. 
Next, we give the definition of a classification-calibrated loss.
\begin{definition}[Classification-calibrated loss~\citep{bartlett2006,tewari2007consistency}]
\label{def:classification-calibration}
Consider a surrogate loss $\ell$. 
Let $\vecq^{\ell,*}=\argmin_{\vecq}\,W^{\ell}\big(\vecq(\vecx); \veceta(\vecx) \big)$ be the minimizer of the pointwise conditional risk. 
If $R^{\zoloss}(f^{\vecq^{\ell,*}}) = R^{\zoloss}(f^{\zoloss,*})$, then $\ell$ is classification-calibrated. 
\end{definition}

Classification-calibration guarantees that the minimizer of the pointwise conditional risk of a surrogate loss will give the Bayes-optimal classifier.
Definition~\ref{def:classification-calibration} suggests that by minimizing a classification-calibrated loss, even if $\vecq^{\ell,*}(\vecx)$ is not equal to the true class-posterior probability $\veceta(\vecx)$, we can still achieve the Bayes-optimal classifier from $\vecq^{\ell,*}(\vecx)$ as long as their decision rule matches. 

For notational simplicity, we use $\vecq^{\gamma,*}$ to denote $\vecq^{\focalloss,*}$, \ie, the focal risk minimizer with the parameter $\gamma$.
The following theorem guarantees that the focal loss is classification-calibrated (its proof can be found in Appx.~\ref{app:proof-calib}).

\begin{theorem}
    \label{thm:calib}
    For any $\gamma \geq 0$, the focal loss $\focalloss$ is classification-calibrated.
\end{theorem}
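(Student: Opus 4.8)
The plan is to work directly with the pointwise conditional risk $W^{\focalloss}$ and to show that its minimizer $\vecq^{\gamma,*}$, although generally not equal to $\veceta(\vecx)$, preserves the coordinatewise ordering of $\veceta(\vecx)$; by Definition~\ref{def:classification-calibration} this is exactly what is needed. Fix $\vecx$ and abbreviate $\veceta=\veceta(\vecx)$, $\vecq=\vecq(\vecx)$. Since $\vece_y$ is one-hot, $\focalloss(\vecq,\vece_y)=-(1-q_y)^\gamma\log q_y$, so setting $\phi(u):=-(1-u)^\gamma\log u$ for $u\in(0,1]$ gives the separable form $W^{\focalloss}(\vecq;\veceta)=\sum_{y\in\domy}\eta_y\,\phi(q_y)$, to be minimized over the probability simplex.

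First I would record the elementary properties of $\phi$: it is continuous on $(0,1]$, satisfies $\phi(1)=0$ and $\phi(u)\to+\infty$ as $u\to0^+$, and is \emph{strictly decreasing} on $(0,1]$. The last point is the one genuine computation: $\phi'(u)=\gamma(1-u)^{\gamma-1}\log u-(1-u)^\gamma/u$, and since $\log u\le0$ on $(0,1]$ the first term is $\le0$ for every $\gamma\ge0$ (the case $\gamma=0$, where it vanishes; and the cases $0<\gamma<1$ and $\gamma\ge1$, which differ only in the sign/behaviour of $(1-u)^{\gamma-1}$, handled by a short limiting argument near $u=1$), while the second term is strictly negative on $(0,1)$; hence $\phi'(u)<0$ there. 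Coercivity of $\phi$ at $0$ together with continuity on the simplex yields existence of a minimizer $\vecq^{\gamma,*}$, and since $\phi$ is strictly decreasing every minimizer must put $q_y=0$ on classes with $\eta_y=0$, so $\vecq^{\gamma,*}$ lies in the relative interior of the face supported on $\{y:\eta_y>0\}$. Note that this argument does not use convexity of $\phi$, which in fact fails for large $\gamma$.

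Next I would establish ordering preservation. Suppose $\eta_j>\eta_k$. If $q^{\gamma,*}_k>q^{\gamma,*}_j$, then swapping the $j$-th and $k$-th coordinates of $\vecq^{\gamma,*}$ changes $W^{\focalloss}$ by $(\eta_j-\eta_k)\big(\phi(q^{\gamma,*}_k)-\phi(q^{\gamma,*}_j)\big)<0$, because $\phi$ is strictly decreasing, contradicting optimality; hence $q^{\gamma,*}_j\ge q^{\gamma,*}_k$. If instead $q^{\gamma,*}_j=q^{\gamma,*}_k=:q\in(0,1)$, then moving an infinitesimal mass $\epsilon>0$ from coordinate $k$ to coordinate $j$ changes $W^{\focalloss}$ by $\epsilon\,\phi'(q)(\eta_j-\eta_k)+o(\epsilon)<0$, again contradicting optimality. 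Therefore $q^{\gamma,*}_j>q^{\gamma,*}_k$ whenever $\eta_j>\eta_k$, so $\argmax_y q^{\gamma,*}_y\subseteq\argmax_y\eta_y$, hence $f^{\vecq^{\gamma,*}}(\vecx)\in\argmax_y\eta_y(\vecx)$ for every $\vecx$; combined with Eq.~\eqref{eq:bayes-cls} this gives $R^{\zoloss}(f^{\vecq^{\gamma,*}})=R^{\zoloss}(f^{\zoloss,*})$, which is the claim.

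I expect the main obstacle to be technical rather than conceptual: carefully verifying $\phi'(u)<0$ uniformly over all $\gamma\ge0$ (the case split on $(1-u)^{\gamma-1}$, plus the behaviour at the endpoints) and handling the boundary of the simplex, i.e.\ the classes with $\eta_y=0$ and the fact that $\vecq^{\gamma,*}$ is interior on the remaining face so that the first-order perturbation step is legitimate. One should also remark that when $\argmax_y\eta_y(\vecx)$ is not a singleton the argument still shows that any coordinatewise-maximal entry of $\vecq^{\gamma,*}$ sits at a Bayes-optimal label, so every tie-breaking choice of $f^{\vecq^{\gamma,*}}$ is Bayes-optimal and the conclusion is unaffected.
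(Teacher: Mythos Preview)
Your argument is correct, and it takes a genuinely different route from the paper's proof. The paper does not work with $\phi$ directly; instead it first derives the Lagrangian stationarity conditions for the constrained problem, obtaining the closed-form recovery map $\Psibf$ of Theorem~\ref{thm:recover}, namely $\eta_i = h^\gamma(q^{\gamma,*}_i)/\sum_l h^\gamma(q^{\gamma,*}_l)$ with $h^\gamma(v)=v/\varphi^\gamma(v)$. It then proves a separate lemma that $h^\gamma$ is strictly increasing on $(0,1)$, which is a somewhat lengthy computation requiring a case split on $\gamma\ge 1$ versus $0<\gamma<1$. Order preservation then follows because $\Psi_i^\gamma$ has a common denominator and a strictly increasing numerator. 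By contrast, you bypass the explicit characterization of $\vecq^{\gamma,*}$ entirely: the exchange argument (swap two coordinates) plus the first-order perturbation at a tie needs only that $\phi$ is strictly decreasing, which is a one-line derivative check. Your proof is therefore shorter and more self-contained for the purpose of calibration alone; the paper's route has the advantage that the KKT computation and the monotonicity of $h^\gamma$ are needed anyway for Theorem~\ref{thm:recover} and Proposition~\ref{prop:maintain}, so once those are in place the calibration proof comes essentially for free.

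Two minor remarks. First, your verification that $\phi'(u)<0$ can be streamlined: for every $\gamma\ge 0$ and $u\in(0,1)$ both summands $\gamma(1-u)^{\gamma-1}\log u$ and $-(1-u)^\gamma/u$ are nonpositive and the second is strictly negative, so no case split or limiting argument near $u=1$ is actually needed. Second, in the tie-breaking step you use $\phi'(q)<0$ at the common value $q=q_j^{\gamma,*}=q_k^{\gamma,*}$; you should note explicitly that $q\in(0,1)$ there (indeed $q\le 1/2$ since $q_j+q_k\le 1$, and $q>0$ by your coercivity argument when $\eta_k>0$, while the case $\eta_k=0$ gives $q_k^{\gamma,*}=0<q_j^{\gamma,*}$ directly), so the derivative is well defined and strictly negative.
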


Our proof is based on showing that the focal loss has the \emph{strictly order-preserving property}, which is sufficient for classification-calibration~\citep{zhang2004statisticalmulti}. 
The order-preserving property suggests that for $\veceta(\vecx)$, the pointwise conditional risk $W^{\focalloss}$ has the minimizer $\vecq^{\gamma,*}(\vecx)$ such that $q^{\gamma,*}_i(\vecx) < q^{\gamma,*}_j(\vecx) \Rightarrow \eta_i(\vecx) < \eta_j(\vecx)$. 
Since $\vecq^{\gamma,*}$ preserves the order of $\veceta$, it is straightforward to see that $\argmax_y q^{\gamma,*}_y(\vecx)$ and $\argmax_y \eta_y(\vecx)$ are identical and thus the Bayes-optimal classifier can be achieved by minimizing the focal risk minimizer, \ie, $R^{\zoloss}(f^{\vecq^{\gamma,*}}) = R^{\zoloss}(f^{\zoloss,*})$. 
Our result agrees with the empirical effectiveness observed in the previous work~\citep{lin2017focal}, where evaluation metrics are based on accuracy or ranking such as mean average precision. 

\section{On confidence score of classifier trained with focal loss}\label{sec:PropFocal}
In this section, we analyze the focal loss for the class-posterior probability estimation problem. 
We theoretically prove that the simplex output of the focal risk minimizer~$\vecq^{\gamma,*}$ does not give the true class-posterior probability.
Further, we reveal that the focal loss can yield both underestimation and overestimation of the true class-posterior probability.

\subsection{Focal loss is \emph{not} strictly proper}
To ensure that a surrogate loss is appropriate for class-posterior probability estimation, it is required that a surrogate loss is \emph{strictly proper}, which is defined as follows.
\begin{definition}[Strictly proper loss~\citep{shuford1966admissible,buja2005loss, gneiting2007strictly}]
We say that a loss $\ell: \Delta^K \times \Delta^K \to \R$ is strictly proper if $\ell(\vecu,\vecv)$ is minimized if and only if $\vecu=\vecv$.
\end{definition}

The notion of strict properness can be seen as a natural requirement of a loss when one wants to estimate the true class-posterior probability~\citep{williamson2016composite}.
When comparing between the ground truth probability $\vecv$ and its estimate $\vecu$, we want a loss function to be minimized if and only if $\vecu=\vecv$, meaning that the probability estimation is correct.
Note that strict properness is a stronger requirement of a loss than classification-calibration because all strictly proper losses are classification-calibrated but the converse is false~\citep{reid2010,williamson2016composite}.

Here, we prove that the focal loss is not strictly proper in general (its proof is given in Appx.~\ref{app:proof-not-proper}). 
In fact, it is strictly proper if and only if $\gamma=0$, \ie, when it coincides with the cross-entropy loss.
\begin{theorem}
     For any $\gamma > 0$, the focal loss $\focalloss$ is not strictly proper.
     \label{thm:not-proper}
\end{theorem}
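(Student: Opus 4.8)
The plan is to refute strict properness by exhibiting, for each $\gamma>0$, a single target vector $\vecv$ together with a distinct $\vecu\in\Delta^K$ achieving a strictly smaller focal risk, i.e.\ $\focalloss(\vecu,\vecv)<\focalloss(\vecv,\vecv)$. It is enough to work on a two-dimensional face of the simplex. Fix $v\in(0,1)$, put $\vecv=(v,\,1-v,\,0,\ldots,0)^\top\in\Delta^K$, and consider candidates $\vecu=(u,\,1-u,\,0,\ldots,0)^\top$ with $u\in(0,1)$, which lie on the same face. Along this face the focal risk collapses to the scalar function
\[
  \phi(u)\;:=\;\focalloss\big((u,\,1-u,\,0,\ldots,0)^\top,\vecv\big)\;=\;-\,v\,(1-u)^\gamma\log u\;-\;(1-v)\,u^\gamma\log(1-u).
\]
If $\focalloss(\cdot,\vecv)$ were minimized only at $\vecv$, then $\phi$ would be minimized on $(0,1)$ at the interior point $u=v$, forcing $\phi'(v)=0$; so the whole claim reduces to finding a $v$ with $\phi'(v)\neq 0$.

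First I would differentiate. A direct calculation gives
\[
  \phi'(v)\;=\;\gamma\,v\,(1-v)^{\gamma-1}\log v\;-\;(1-v)^\gamma\;-\;\gamma\,(1-v)\,v^{\gamma-1}\log(1-v)\;+\;v^\gamma .
\]
To certify this is nonzero I would not analyze it for general $v$, but instead take the limit $v\to 1^-$: since $\log v=O(1-v)$ and $t\log t\to 0$ as $t\to 0^+$, the first three terms vanish (in particular $\gamma\,v\,(1-v)^{\gamma-1}\log v=O\big((1-v)^\gamma\big)\to0$, which is the only place the sign of $\gamma-1$ matters), while $v^\gamma\to1$. Hence $\phi'(v)\to1$, so there is $v_0\in(0,1)$ with $\phi'(v_0)>0$. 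Then $\phi$ is strictly increasing near $v_0$, so $\phi(v_0-\varepsilon)<\phi(v_0)$ for small $\varepsilon>0$; equivalently, the vector $\vecu:=(v_0-\varepsilon,\,1-v_0+\varepsilon,\,0,\ldots,0)^\top\in\Delta^K$ differs from $\vecv_0:=(v_0,\,1-v_0,\,0,\ldots,0)^\top$ yet satisfies $\focalloss(\vecu,\vecv_0)<\focalloss(\vecv_0,\vecv_0)$. Thus $\focalloss(\cdot,\vecv_0)$ is not uniquely minimized at $\vecv_0$, so $\focalloss$ is not strictly proper, for every $\gamma>0$.

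I expect the only subtle point to be this limit, and specifically the case $\gamma\in(0,1)$ where $(1-v)^{\gamma-1}\to\infty$: one has to observe that it is multiplied by $\log v=O(1-v)$, so the product still tends to $0$. It is reassuring that the argument degenerates exactly at $\gamma=0$: there the first and third terms of $\phi'(v)$ are absent and $\phi'(v)=v^0-(1-v)^0=0$, consistent with the cross-entropy loss being strictly proper. As an optional remark one can push the derivative computation further to show $\phi'(v)\neq0$ for all $v\neq 1/2$ (the expression appears to carry the sign of $2v-1$), which would identify precisely which targets are misestimated and foreshadow the over/under-confidence dichotomy, but this is not needed for the statement.
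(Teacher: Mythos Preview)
Your proof is correct and takes a genuinely different route from the paper. The paper does not argue via the first-order condition at $\vecu=\vecv$; instead it invokes Theorem~\ref{thm:recover} (the recovery transformation $\Psibf$), observes that $\focalloss(\vecq,\veceta)=W^{\focalloss}(\vecq;\veceta)$, and notes that since $\Psibf$ is not the identity for $\gamma>0$, the minimizer $\vecq^{\gamma,*}$ cannot equal $\veceta$ in general. That argument is short but leans on the full Lagrangian derivation of $\Psibf$, so it is not self-contained. Your approach, by contrast, is elementary and standalone: you restrict to a one-dimensional face, compute $\phi'(u)\big|_{u=v}$, and kill three of the four terms by the limit $v\to1^-$, leaving $v^\gamma\to1$. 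The tradeoff is that the paper's route simultaneously identifies \emph{where} the minimizer actually sits (via $\Psibf$), which feeds directly into the later under/overconfidence analysis, whereas your argument only certifies that $\vecv$ is not the minimizer. Your closing remark that $\phi'(v)$ appears to carry the sign of $2v-1$ is exactly the binary-case content of Corollary~\ref{cor:binaryunder}, and indeed the paper's alternative proof of that corollary (Appendix~A.12) manipulates essentially the same expression you wrote down for $\phi'(v)$.

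One cosmetic point: for $K\ge3$ your choice $\vecu=(u,1-u,0,\ldots,0)$ puts $u_i=0$ in the tail coordinates, so $\focalloss(\vecu,\vecv)$ contains terms $0\cdot\log 0$. This is harmless under the standard convention, but you could sidestep it entirely by stating the counterexample for $K=2$ (which already refutes strict properness), or by noting that the tail terms carry $v_i=0$ and hence vanish regardless of $u_i$.
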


Our Thm.~\ref{thm:not-proper} suggests that to minimize the focal loss, the simplex output of a classifier does not necessarily need to coincide with the true class-posterior probability.
Surprisingly, a recent work~\citep{mukhoti2020calibrating} suggested that training with the focal loss can give a classifier with reliable confidence.
Although their finding seems to contradict with the fact that the focal loss is not strictly proper, we will discuss in Sec.~\ref{sec:exp-dis} that this phenomenon could occur in practice due to the fact that deep neural networks (DNNs) can suffer from overconfident estimation of the true class-posterior probability~\citep{guo2017calibration}.

\subsection{Focal loss gives under/overconfident classifier}

Motivated by the fact that the focal loss is not strictly proper and the intriguing effect of the focal loss for calibration of deep models~\citep{mukhoti2020calibrating}, we take a closer look at the behavior of the simplex output of the focal risk minimizer~$\vecq^{\gamma,*}$ compared with the true class-posterior probability.

We begin by pointing out that there exists the case where $\vecq^{\gamma,*}(\vecx)$ coincides with $\veceta(\vecx)$ (its proof can be found in Appx.~\ref{app:proof-focalcorrect}).

\begin{proposition}
    \label{thm:focalcorrect}
    Define $\subunif = \{\vecv \in \Delta^K: v_i \in \{0, \max_j v_j\}\}$. 
    If $\vecq^{\gamma,*}(\vecx) \in \subunif$, then $\vecq^{\gamma,*}(\vecx) = \veceta(\vecx)$.
\end{proposition}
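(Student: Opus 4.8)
The plan is to combine the explicit form of the pointwise conditional risk with a first-order optimality argument. Fix $\vecx$ and abbreviate $\vecq^{*} = \vecq^{\gamma,*}(\vecx)$ and $\veceta = \veceta(\vecx)$. Since $\focalloss(\vecq^{*}, \vece_y) = -(1-q^{*}_y)^\gamma \log q^{*}_y$, the pointwise conditional risk is $W^{\focalloss}(\vecq^{*};\veceta) = -\sum_{i=1}^K \eta_i\,(1-q^{*}_i)^\gamma \log q^{*}_i$. By definition of $\subunif$, every nonzero entry of $\vecq^{*}$ equals $\max_j q^{*}_j$, and because $\vecq^{*}\in\Delta^K$ they all equal $1/|S|$, where $S = \{\,i : q^{*}_i > 0\,\}$ denotes the support of $\vecq^{*}$. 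It therefore suffices to show that $\eta_i = 1/|S|$ for $i\in S$ and $\eta_i = 0$ for $i\notin S$.

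The first step pins down the support of $\veceta$. Each summand $-\eta_i(1-q^{*}_i)^\gamma\log q^{*}_i$ is nonnegative, and it equals $+\infty$ whenever $q^{*}_i = 0$ while $\eta_i > 0$. Since the uniform vector attains the finite value $W^{\focalloss}\big((1/K,\dots,1/K);\veceta\big) = (1-1/K)^\gamma\log K$, the minimal pointwise risk is finite, so $W^{\focalloss}(\vecq^{*};\veceta) < \infty$; this forces $\eta_i = 0$ for every $i\notin S$, i.e.\ $\mathrm{supp}(\veceta)\subseteq S$. If $|S| = 1$, we are already done: two probability vectors supported on the same single coordinate must coincide, hence $\veceta = \vecq^{*}$.

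For $|S|\ge 2$, suppose toward a contradiction that $\veceta\neq\vecq^{*}$. Because $\sum_{i\in S}\eta_i = 1$ while $q^{*}_i = 1/|S|$ for all $i\in S$, there exist $i_0,j_0\in S$ with $\eta_{i_0} < \eta_{j_0}$. Consider the perturbation $\vecq(\epsilon)$ obtained from $\vecq^{*}$ by replacing its $i_0$-th entry with $1/|S| - \epsilon$ and its $j_0$-th entry with $1/|S| + \epsilon$, leaving the rest fixed; since $0 < 1/|S| < 1$, this lies in $\Delta^K$ for all small $\epsilon > 0$, and $\epsilon\mapsto W^{\focalloss}(\vecq(\epsilon);\veceta)$ is differentiable near $0$. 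Writing $\phi(q) = (1-q)^\gamma\log q$, a short computation shows that the derivative of this map at $\epsilon=0$ equals $(\eta_{i_0}-\eta_{j_0})\,\phi'(1/|S|)$; and since $\phi'(q) = (1-q)^{\gamma-1}\big(-\gamma\log q + (1-q)/q\big) > 0$ for all $q\in(0,1)$, this derivative is strictly negative. Hence $W^{\focalloss}(\vecq(\epsilon);\veceta) < W^{\focalloss}(\vecq^{*};\veceta)$ for small $\epsilon > 0$, contradicting the optimality of $\vecq^{*}$. Therefore $\eta_i = 1/|S|$ for all $i\in S$, and $\veceta = \vecq^{*}$.

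I expect the only delicate point to be the bookkeeping near the boundary of the simplex: ruling out positive mass of $\veceta$ on coordinates where $\vecq^{*}$ vanishes, and keeping the perturbation feasible and the risk differentiable along it (which is precisely why the case $|S|=1$, where $1/|S|=1$ sits on the boundary, must be dispatched separately). An alternative, essentially equivalent route is to invoke the strict order-preserving property of $\focalloss$ established in the proof of Thm.~\ref{thm:calib}: equal coordinates of $\vecq^{*}$ force equal coordinates of $\veceta$, which together with $\mathrm{supp}(\veceta)\subseteq S$ and normalization gives the claim — the first-order computation above is just a self-contained proof of that ``equal $\Rightarrow$ equal'' implication.
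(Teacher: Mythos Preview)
Your argument is correct and self-contained, but it proceeds along a genuinely different route from the paper. The paper's proof invokes Thm.~\ref{thm:recover}: it uses the explicit recovery formula $\veceta(\vecx)=\Psibf(\vecq^{\gamma,*}(\vecx))$ with $\Psi_i^\gamma(\vecv)=h^\gamma(v_i)/\sum_l h^\gamma(v_l)$, and then simply evaluates $\Psibf$ at a vector in $\subunif$. Since $h^\gamma(0)=0$ and all nonzero coordinates of $\vecq^{\gamma,*}(\vecx)$ equal $1/k$, the normalization immediately gives $\Psi_i^\gamma(\vecq^{\gamma,*}(\vecx))=q_i^{\gamma,*}(\vecx)$ in both cases. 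By contrast, you bypass Thm.~\ref{thm:recover} entirely and argue directly from first-order optimality: first pinning down $\mathrm{supp}(\veceta)\subseteq S$ via finiteness of the optimal risk, then using a mass-transfer perturbation together with $\phi'(q)>0$ on $(0,1)$ to force all $\eta_i$ on $S$ to coincide. Your approach is more elementary in that it does not depend on the Lagrangian derivation behind $\Psibf$, and it makes transparent why equality of the $q^{*}_i$ forces equality of the $\eta_i$; the paper's approach is shorter once $\Psibf$ is in hand and fits naturally into its proof-dependency graph, where Prop.~\ref{thm:focalcorrect} is downstream of Thm.~\ref{thm:recover}. The alternative you mention at the end---using the order-preserving property from the proof of Thm.~\ref{thm:calib}---is essentially the paper's route in disguise, since that property is itself derived from $\Psibf$.
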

The set $\subunif$ is the set of probability vectors where a subset of classes has uniform probability and the rest has zero probability, \eg, the uniform vector and one-hot vectors.
Prop.~\ref{thm:focalcorrect} indicates that, although the focal loss is not strictly proper, the focal risk minimizer can give the true class-posterior probability if  $\vecq^{\gamma,*}(\vecx) \in \subunif$.

For the rest of this section, we assume that $\vecq^{\gamma,*}(\vecx) \notin \subunif$ for readability.
Next, to analyze the focal loss behavior in general, we propose the notion of $\veceta$-underconfidence and $\veceta$-overconfidence of the risk minimizer $\vecq^{\ell,*}$ as follows.
\begin{definition}[$\veceta$-under/overconfidence of risk minimizer] 
\label{def:underover}
We say that the risk minimizer $\vecq^{\ell,*}$ is $\veceta$-underconfident ($\veceta$UC) at $\vecx$
if
\begin{equation}
     \max_y q^{\ell,*}_{y}(\vecx) - \max_y \eta_y(\vecx) < 0.
\end{equation}
Similarly, $\vecq^{\ell,*}$ is said to be $\veceta$-overconfident~($\veceta$OC) at $\vecx$ 
if
\begin{equation}
     \max_y q^{\ell,*}_{y}(\vecx) - \max_y \eta_y(\vecx) > 0.
\end{equation}
\end{definition}

Def.~\ref{def:underover} can be interpreted as follows. 
If $\vecq^{\ell,*}$ is $\veceta$UC (\resp, $\veceta$OC) at $\vecx$, then the confidence score $\max_y q^{\ell,*}_{y}(\vecx)$ for the predicted class must be lower (\resp, higher) than that of the true class-posterior probability $\max_y \eta_y(\vecx)$.
It is straightforward to see that the risk minimizer of any strictly proper loss does not give an $\veceta$UC/$\veceta$OC classifier because $\vecq^{\ell,*}$ must be equal to the true class-posterior probability~$\veceta$.
Thus, Def.~\ref{def:underover} is not useful for characterizing strictly proper losses but it is highly useful for analyzing the behavior of the focal loss.

We emphasize that the notion of $\veceta$-under/overconfidence of the risk minimizer is significantly different from the notion of \emph{overconfidence} that has been used in the literature of confidence-calibration~\citep{guo2017calibration,kull2019beyond, mukhoti2020calibrating}.
In that literature, \emph{overconfidence} was used to describe the \emph{empirical performance} of modern neural networks~\citep{degroot1983comparison,niculescu2005predicting}, where a classifier outputs an average confidence score higher than its average accuracy for a set of data points.
In our case, $\veceta$OC and $\veceta$UC are based on the behavior of the risk minimizer of the loss function, which does not concern with the empirical validation.

\begin{figure}
\centering
\includegraphics[scale=1.8]{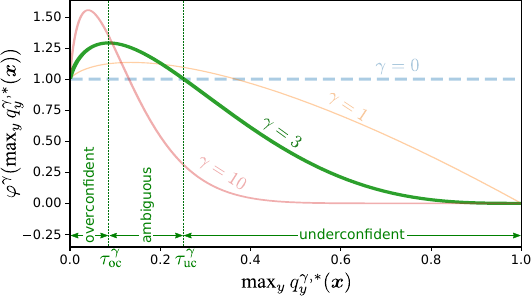}
\caption{\label{fig:psi} 
The function $\varphi^\gamma(v)$ for various $\gamma$. 
Visualization of the region where $\vecq^{\gamma,*}$ can be $\veceta$UC and $\veceta$OC are emphasized for $\gamma=3$. 
Whether $\vecq^{\gamma,*}$ is $\veceta$OC or $\veceta$UC can be largely determined by the relation between $\varphi^\gamma$ and the maximum predicted score $\max_y q^{\gamma,*}_y(\vecx)$.
See details in Thm.~\ref{thm:underover}.}
\end{figure}

To study the behavior of $\vecq^{\gamma,*}$, let us define a function $\varphi^\gamma:[0,1]\to\R$ as
\begin{align}
    \varphi^\gamma(v) =    (1-v)^\gamma-\gamma(1-v)^{\gamma-1}v\log v.\label{eq:varphi}
\end{align}
This function plays a key role in characterizing if $\vecq^{\gamma,*}$ is $\veceta$UC/$\veceta$OC.
See Appx.~\ref{app:proof} for more details on how $\varphi^\gamma$ was derived.
Next, we state our main theorem that characterizes the $\veceta$UC/$\veceta$OC behaviors of the risk minimizer of the focal loss $\vecq^{\gamma,*}$ (its proof is given in Appx.~\ref{app:proof-underover}).

\begin{theorem}
\label{thm:underover}
Consider the focal loss $\focalloss$ where $\gamma > 0$.
Define $\tau^{\gamma}_\mathrm{oc} = \argmax_v \varphi^\gamma(v)$ and $\tau^{\gamma}_\mathrm{uc} \in (0,1)$ such that $\varphi^\gamma(\tau^{\gamma}_\mathrm{uc})=1$. 
If $\vecq^{\gamma,*}(\vecx) \notin \subunif$, we have
\begin{enumerate}
    \item $0 < \tau^{\gamma}_\mathrm{oc} < \tau^{\gamma}_\mathrm{uc} < 0.5$.
    \item $\vecq^{\gamma,*}$ is $\veceta$OC for $  \max_y q^{\gamma,*}_y(\vecx) \in(0, \tau^{\gamma}_\mathrm{oc}]$.
    \item $\vecq^{\gamma,*}$ is $\veceta$UC for $  \max_y q^{\gamma,*}_y(\vecx) \in[\tau^{\gamma}_\mathrm{uc}, 1 )$ .
\end{enumerate}
\end{theorem}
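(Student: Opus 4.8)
The plan is to reduce the theorem to a one-dimensional study of $\varphi^\gamma$, linked to the minimization problem through the first-order optimality conditions. Since $\focalloss(\vecq,\vece_y) = -(1-q_y)^\gamma\log q_y$, the pointwise conditional risk is $W^{\focalloss}(\vecq;\veceta) = -\sum_{i=1}^K \eta_i(1-q_i)^\gamma\log q_i$, which I would minimize over the simplex with a Lagrange multiplier $\lambda$ for $\sum_i q_i = 1$ (we may assume $\eta_i>0$ for all $i$ by dropping the null classes; the risk is then $+\infty$ unless every $q^{\gamma,*}_i>0$, so the minimizer is interior). A short computation shows that, after multiplying the stationarity equation through by $q_i$, it becomes exactly $\eta_i\,\varphi^\gamma(q^{\gamma,*}_i) = \lambda\,q^{\gamma,*}_i$ for every $i$ --- this is precisely where $\varphi^\gamma$ of Eq.~\eqref{eq:varphi} enters. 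As $\varphi^\gamma>0$ on $(0,1)$ we get $\lambda>0$, and writing $r_i := \eta_i/q^{\gamma,*}_i = \lambda/\varphi^\gamma(q^{\gamma,*}_i)$, the identity $\sum_i\eta_i=\sum_i q^{\gamma,*}_i=1$ gives $\sum_i q^{\gamma,*}_i(r_i-1)=0$, i.e.\ the $\vecq^{\gamma,*}$-weighted mean of the ratios $r_i$ equals $1$. As shown in the proof of Thm.~\ref{thm:calib}, $\focalloss$ is strictly order-preserving, so an index $j$ attaining $\max_i q^{\gamma,*}_i$ also attains $\max_i\eta_i$; hence $\max_y q^{\gamma,*}_y - \max_y\eta_y = q^{\gamma,*}_j(1-r_j)$, and the whole theorem reduces to determining the sign of $1-r_j$ according to where $q^{\gamma,*}_j$ sits.

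The heart of the argument is the shape of $\varphi^\gamma$ on $[0,1]$. One checks $\varphi^\gamma(0)=1$, $\varphi^\gamma(1)=0$, and $(\varphi^\gamma)'(v) = \gamma(1-v)^{\gamma-2}h(v)$ with $h(v) = -2-\log v + 2v + \gamma v\log v$, so on $(0,1)$ the sign of $(\varphi^\gamma)'(v)$ is the sign of $h(v)$. Then $h''(v) = v^{-2}+\gamma v^{-1}>0$, so $h$ is strictly convex; together with $h'(0^+)=-\infty$, $h'(1)=1+\gamma>0$, $h(0^+)=+\infty$ and $h(1)=0$, convexity forces $h$ to have exactly one zero $\tau^\gamma_\mathrm{oc}\in(0,1)$, positive on its left and negative on its right. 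Consequently $\varphi^\gamma$ is strictly increasing on $[0,\tau^\gamma_\mathrm{oc}]$ (so $\varphi^\gamma(\tau^\gamma_\mathrm{oc})>1$) and strictly decreasing on $[\tau^\gamma_\mathrm{oc},1]$ down to $0$; this simultaneously identifies $\tau^\gamma_\mathrm{oc}=\argmax_v\varphi^\gamma(v)$ and yields a unique $\tau^\gamma_\mathrm{uc}\in(\tau^\gamma_\mathrm{oc},1)$ with $\varphi^\gamma(\tau^\gamma_\mathrm{uc})=1$ and $\varphi^\gamma<1$ on $(\tau^\gamma_\mathrm{uc},1)$. The bounds $\tau^\gamma_\mathrm{oc}<\tau^\gamma_\mathrm{uc}<0.5$ of item~1 then follow by evaluating at $v=0.5$: $h(0.5)=-1+(1-\gamma/2)\log 2<0$ for all $\gamma\ge0$ forces $\tau^\gamma_\mathrm{oc}<0.5$, and $\varphi^\gamma(0.5)=2^{-\gamma}(1+\gamma\log 2)<1$ (because $e^t>1+t$ for $t=\gamma\log 2>0$) forces $\tau^\gamma_\mathrm{uc}<0.5$.

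Finally I would combine the two ingredients. For item~2, if $q^{\gamma,*}_j\le\tau^\gamma_\mathrm{oc}$ then every $q^{\gamma,*}_i\le q^{\gamma,*}_j\le\tau^\gamma_\mathrm{oc}$ lies on the increasing branch, so $\varphi^\gamma(q^{\gamma,*}_i)\le\varphi^\gamma(q^{\gamma,*}_j)$ and hence $r_i\ge r_j$ for all $i$; since the weighted mean of the $r_i$ is $1$ we get $r_j\le1$, strictly because $\vecq^{\gamma,*}\notin\subunif$ rules out the only tie-case (all $q^{\gamma,*}_i$ equal) --- so $\vecq^{\gamma,*}$ is $\veceta$OC. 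For item~3, if $q^{\gamma,*}_j\ge\tau^\gamma_\mathrm{uc}$ then the shape of $\varphi^\gamma$ gives $\min_{v\in(0,\,q^{\gamma,*}_j]}\varphi^\gamma(v)=\varphi^\gamma(q^{\gamma,*}_j)$ (values exceed $1\ge\varphi^\gamma(q^{\gamma,*}_j)$ to the left of $\tau^\gamma_\mathrm{oc}$, and $\varphi^\gamma$ decreases on $[\tau^\gamma_\mathrm{oc},q^{\gamma,*}_j]$), so $\varphi^\gamma(q^{\gamma,*}_i)\ge\varphi^\gamma(q^{\gamma,*}_j)$ and $r_i\le r_j$ for all $i$; hence $r_j\ge1$, again strict by the same exclusion, so $\vecq^{\gamma,*}$ is $\veceta$UC. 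The main obstacle I expect is the $\varphi^\gamma$ shape analysis --- especially establishing unimodality cleanly --- and the convexity of the auxiliary function $h$ is exactly what makes this go through. Some care is also needed at the boundary: zero coordinates of $\vecq^{\gamma,*}$ (handled by dropping null classes), ties for $\max_i q^{\gamma,*}_i$ (the Lagrange relation forces equal ratios there, so $q^{\gamma,*}_j(1-r_j)$ is unambiguous), and the non-smoothness of $(1-v)^{\gamma-2}$ near $v=1$ when $\gamma<2$ (irrelevant, since only the sign of $h$ on $(0,1)$ is used).
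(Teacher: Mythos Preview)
Your proposal is correct and follows essentially the same route as the paper: derive the stationarity relation $\eta_i\,\varphi^\gamma(q^{\gamma,*}_i)=\lambda\,q^{\gamma,*}_i$ (this is the content of Thm.~\ref{thm:recover}), establish the unimodal shape of $\varphi^\gamma$ via convexity of the auxiliary function $h$ (the paper calls it $s$ in Lem.~\ref{lem:varphiprop}), and then use a convex-combination argument to compare $\varphi^\gamma(q^{\gamma,*}_j)$ against the other $\varphi^\gamma(q^{\gamma,*}_i)$. Your packaging via the ratios $r_i=\eta_i/q^{\gamma,*}_i$ and the identity $\sum_i q^{\gamma,*}_i r_i=1$ is a mild streamlining of the paper's inequality $\sum_l q^{\gamma,*}_l/\varphi^\gamma(q^{\gamma,*}_l)\gtrless 1/\varphi^\gamma(q^{\gamma,*}_j)$, and your direct verification $\varphi^\gamma(0.5)=2^{-\gamma}(1+\gamma\log 2)<1$ from $e^t>1+t$ is cleaner than the paper's argument (which differentiates $\varphi^\gamma(0.5)$ with respect to $\gamma$), but the ideas are the same.
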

Thm.~\ref{thm:underover} suggests that \textbf{training with focal loss can lead to both $\veceta$UC and $\veceta$OC classifiers}.
It also indicates that we can determine if $\vecq^{\gamma,*}$ is $\veceta$OC or $\veceta$UC at $\vecx$ if $\max_y q^{\gamma,*}_y(\vecx)$ is in $(0, \tau^{\gamma}_\mathrm{oc}]$ or $[\tau^{\gamma}_\mathrm{uc}, 1)$.
For $\max_y q^{\gamma,*}_y(\vecx)\in(\tau^{\gamma}_\mathrm{oc}, \tau^{\gamma}_\mathrm{uc})$, we may require the knowledge of $q_{y'}$ for all $y' \in \domy$ to determine if $\vecq^{\gamma,*}$ is $\veceta$UC or $\veceta$OC. 
Nevertheless, in Sec.~\ref{sec:recover}, we will show that given any $\vecq^{\gamma,*}(\vecx)$, $\veceta$UC and $\veceta$OC can be determined everywhere \emph{including the ambiguous region} $(\tau^{\gamma}_\mathrm{oc}, \tau^{\gamma}_\mathrm{uc})$ by using our novel transformation~$\Psibf$.    
Fig.~\ref{fig:psi} illustrates the overconfident, ambiguous, and underconfident regions of $\vecq^{\gamma,*}$.
Interestingly, the fact that $\vecq^{\gamma,*}$ can be $\veceta$OC cannot be explained by the previous analysis~\citep{mukhoti2020calibrating}, which only implicitly suggested that $\vecq^{\gamma,*}$ is $\veceta$UC by interpreting focal loss minimization as the minimization of an upper bound of the regularized Kullback-Leibler divergence.

Since calculating $\tau^\gamma_\mathrm{oc}$ and $\tau^\gamma_\mathrm{uc}$ is not straightforward because their simple close-form solutions may not exist for all~$\gamma$, 
we provide the following corollary to show that there exists a region where $\vecq^{\gamma,*}$ is always $\veceta$UC regardless of the choice of $\gamma$ (its proof is given in Appx.~\ref{app:proof-onehalfunder}).
\begin{corollary}
\label{cor:onehalfunder}
 For all $\gamma > 0$, $\vecq^{\gamma,*}$ is $\veceta$UC if $  \max_y q^{\gamma,*}_y(\vecx) \in(0.5, 1)$.
\end{corollary}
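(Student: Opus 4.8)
The plan is to read Cor.~\ref{cor:onehalfunder} off Thm.~\ref{thm:underover}. The one thing that needs care is that Thm.~\ref{thm:underover} is stated under the standing assumption $\vecq^{\gamma,*}(\vecx) \notin \subunif$, which is not part of the corollary's hypothesis, so I would first check that $\max_y q^{\gamma,*}_y(\vecx) \in (0.5,1)$ already forces $\vecq^{\gamma,*}(\vecx) \notin \subunif$. This is immediate: if $\vecv \in \subunif$ has exactly $m$ nonzero coordinates, then all of them equal $\max_j v_j$ and they sum to one, so $\max_j v_j = 1/m \in \{1,\tfrac12,\tfrac13,\dots\}$, and none of these values lies in the open interval $(0.5,1)$.

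Once $\vecq^{\gamma,*}(\vecx) \notin \subunif$ is in hand, Thm.~\ref{thm:underover} applies at $\vecx$ and the conclusion drops out: part~1 gives $\tau^\gamma_{\mathrm{uc}} < 0.5$, hence $(0.5,1) \subseteq (\tau^\gamma_{\mathrm{uc}},1) \subseteq [\tau^\gamma_{\mathrm{uc}},1)$, and part~3 states that $\vecq^{\gamma,*}$ is $\veceta$UC on $[\tau^\gamma_{\mathrm{uc}},1)$. Since this holds for every $\gamma>0$, the corollary follows. This is the version I would actually write up; its whole content is the $\gamma$-independent bound ``$\tau^\gamma_{\mathrm{uc}}<0.5$'' already recorded in Thm.~\ref{thm:underover}, so there is no real obstacle here.

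If one instead wanted a derivation that avoids referring to the non-closed-form quantity $\tau^\gamma_{\mathrm{uc}}$, I would use the fixed-point characterization of the minimizer behind Thm.~\ref{thm:underover}. Writing $\vecq = \vecq^{\gamma,*}(\vecx)$, the stationarity conditions for $W^{\focalloss}(\,\cdot\,;\veceta(\vecx))$ rearrange to $\eta_i(\vecx) = \frac{q_i/\varphi^\gamma(q_i)}{\sum_{j} q_j/\varphi^\gamma(q_j)}$, with $\varphi^\gamma>0$ on $(0,1)$; letting $i^\star$ be the top coordinate of $\vecq$ (which, by the order-preserving property used in the proof of Thm.~\ref{thm:calib}, is also $\argmax_y \eta_y(\vecx)$), a short manipulation shows that $\veceta$UC at $\vecx$ is equivalent to $\sum_{j\neq i^\star} q_j/\varphi^\gamma(q_j) < (1-q_{i^\star})/\varphi^\gamma(q_{i^\star})$, and since $\sum_{j\neq i^\star} q_j = 1-q_{i^\star}$ it suffices to prove $\varphi^\gamma(q_j) > \varphi^\gamma(q_{i^\star})$ for every $j\neq i^\star$.

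The hard part in this second route is exactly that inequality, because the off-top coordinates can lie anywhere in $(0,\,1-q_{i^\star})\subset(0,0.5)$ while $q_{i^\star}\in(0.5,1)$, and $\varphi^\gamma$ is not monotone on $(0,1)$. I would dispatch it using $\varphi^\gamma(0^+)=1$ together with the unimodality of $\varphi^\gamma$ (increasing up to $\tau^\gamma_{\mathrm{oc}}$, decreasing afterwards) established while proving Thm.~\ref{thm:underover}: from $q_{i^\star}>0.5>\tau^\gamma_{\mathrm{uc}}$ one gets $\varphi^\gamma(q_{i^\star})<1$, so $\varphi^\gamma(q_j)\ge 1>\varphi^\gamma(q_{i^\star})$ whenever $q_j\le\tau^\gamma_{\mathrm{uc}}$, while for $q_j\in(\tau^\gamma_{\mathrm{uc}},0.5)$ both points lie past $\tau^\gamma_{\mathrm{oc}}$ with $q_j<q_{i^\star}$, so the monotone decrease of $\varphi^\gamma$ gives $\varphi^\gamma(q_j)>\varphi^\gamma(q_{i^\star})$ again. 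Either way the corollary follows.
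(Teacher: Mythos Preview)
Your primary argument is correct and is essentially the paper's own proof: use $\tau^\gamma_{\mathrm{uc}}<0.5$ from Thm.~\ref{thm:underover} to get $(0.5,1)\subset[\tau^\gamma_{\mathrm{uc}},1)$ and invoke Item~3. Your extra check that $\max_y q^{\gamma,*}_y(\vecx)\in(0.5,1)$ automatically rules out $\subunif$ is a nice addition the paper leaves implicit (it carries $\vecq^{\gamma,*}(\vecx)\notin\subunif$ as a standing assumption), and your sketched alternative via $\varphi^\gamma$ parallels the paper's separate ``alternative proof,'' though the paper's version is fully independent of Thm.~\ref{thm:underover} while yours still leans on the unimodality and $\tau^\gamma_{\mathrm{uc}}$ facts established there.
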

Cor.~\ref{cor:onehalfunder} suggests that $\vecq^{\gamma,*}$ is $\veceta$UC when the label is not too ambiguous.
In practice, a classifier is more likely to be $\veceta$UC but it still could be $\veceta$OC when the number of classes $K$ is large and $\gamma$ is small.
Fig.~\ref{fig:underover}b demonstrates that $\vecq^{\gamma,*}$ can be $\veceta$OC when having $1000$ classes for different $\gamma$.\footnote{We numerically found that $\vecq^{\gamma,*}$ can be $\veceta$OC with as minimum as $K=5$ classes with $\gamma \leq 0.03$ when $\max_y q_y^{\gamma,*}(\vecx)\rightarrow1/K$.}

We also provide the following corollary, which is an immediate implication from Cor.~\ref{cor:onehalfunder} for the binary classification scenario (its proof is given in Appx.~\ref{app:proof-binaryunder}).
\begin{corollary}
\label{cor:binaryunder}
For all $\gamma > 0$, $\vecq^{\gamma,*}$ is always $\veceta$UC in binary classification unless $\vecq^{\gamma,*}(\vecx)$ is uniform or a one-hot vector. 
\end{corollary}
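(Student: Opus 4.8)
The plan is to reduce the binary case to Corollary~\ref{cor:onehalfunder} by a direct inspection of the two-dimensional simplex. First I would unpack the set $\subunif$ when $K=2$: the defining condition $v_i \in \{0, \max_j v_j\}$ together with $v_1 + v_2 = 1$ forces either $v_1 = v_2$ (hence both equal $\tfrac12$) or one coordinate to vanish (hence a one-hot vector). So $\subunif$ consists of exactly the three points $(\tfrac12,\tfrac12)$, $(1,0)$, $(0,1)$, and the hypothesis ``$\vecq^{\gamma,*}(\vecx)$ is neither uniform nor one-hot'' is precisely the statement $\vecq^{\gamma,*}(\vecx) \notin \subunif$; in that case $\vecq^{\gamma,*}(\vecx) = (v, 1-v)$ for some $v \in (0,1)\setminus\{\tfrac12\}$.

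Next I would locate the maximum score $\max_y q^{\gamma,*}_y(\vecx) = \max\{v, 1-v\}$ inside the open interval $(0.5, 1)$. Since $v \ne \tfrac12$, exactly one of $v$ and $1-v$ exceeds $\tfrac12$, so the maximum is strictly larger than $0.5$; and since $v \notin \{0,1\}$, the maximum is strictly smaller than $1$. Hence $\max_y q^{\gamma,*}_y(\vecx) \in (0.5, 1)$, which is exactly the hypothesis of Corollary~\ref{cor:onehalfunder}.

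Finally I would apply Corollary~\ref{cor:onehalfunder}, which guarantees that for every $\gamma > 0$ the risk minimizer $\vecq^{\gamma,*}$ is $\veceta$UC whenever $\max_y q^{\gamma,*}_y(\vecx) \in (0.5,1)$; this immediately yields the claim. (Alternatively, one could invoke part~3 of Theorem~\ref{thm:underover} together with the bound $\tau^{\gamma}_\mathrm{uc} < 0.5$ from part~1, so that $\max_y q^{\gamma,*}_y(\vecx) > 0.5 > \tau^{\gamma}_\mathrm{uc}$ places the maximum score in the underconfident interval $[\tau^{\gamma}_\mathrm{uc}, 1)$.) I do not expect a genuine obstacle here; the only point requiring a little care is the bookkeeping at the boundary, namely confirming that removing the uniform and one-hot vertices leaves the maximum coordinate strictly inside $(0.5,1)$ so that the open-interval hypothesis is met, and noting that—unlike the general multiclass setting—there is no leftover ambiguous region $(\tau^{\gamma}_\mathrm{oc}, \tau^{\gamma}_\mathrm{uc})$ to address, since that region lies in $(0,0.5)$ whereas the binary maximum score always lies in $[0.5,1)$.
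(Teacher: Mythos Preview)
Your proposal is correct and follows essentially the same route as the paper's primary proof: observe that in the binary simplex the excluded vectors (uniform and one-hot) are exactly $\subunif$, so the remaining maximum score lies strictly in $(0.5,1)$, and then invoke Corollary~\ref{cor:onehalfunder}. The paper also supplies a second, independent proof via a direct computation of $\eta$ in terms of $q$ from the first-order condition, but your argument matches the short corollary-style proof given first.
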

Fig.~\ref{fig:underover}a demonstrates that $\vecq^{\gamma,*}$ is $\veceta$UC in binary classification, where a larger $\gamma$ causes a larger gap between $\max_y q_y^{\gamma,*}$ and the true class-posterior probability.
\begin{figure}
\centering
\includegraphics[scale=1.8]{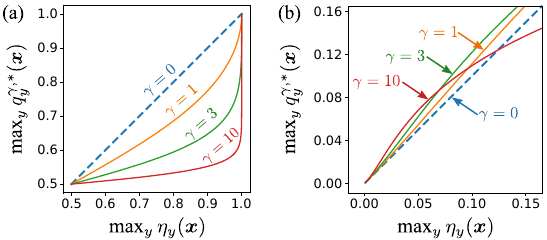}
\caption{\label{fig:underover} Relation between $\max_y \eta_y(\vecx)$ and $\max_y q^{\gamma,*}_y(\vecx)$ under different~$\gamma$. (a) shows the relation in the binary case, where $\vecq^{\gamma,*}$ can be $\veceta$UC and the effect of $\veceta$UC is stronger as $\gamma$ increases. 
On the other hand, (b) shows the relation of a 1000-way classification where the focal loss can also exhibit $\veceta$OC.}
\end{figure}
\section{Recovering class-posterior probability from classifiers trained with focal loss}
\label{sec:recover}
In this section, we propose a novel transformation $\Psibf$ to recover the true class-posterior probability from the focal risk minimizer with theoretical justification. 
Then, we provide a numerical example to demonstrate its effectiveness.
\subsection{Proposed transformation $\Psibf$}
Our following theorem reveals that there exists a transformation that can be computed \emph{in a closed form} to recover the true class-posterior probability from the focal risk minimizer (its proof is given in Appx.~\ref{app:proof-recover}).

\begin{theorem}
    \label{thm:recover}
    Let $\veceta(\vecx)$ be the true class-posterior probability of an input $\vecx$ and $\vecq^{\gamma,*}=\argmin_{\vecq}\,W^{\focalloss}\big(\vecq(\vecx), \veceta(\vecx) \big)$ be the focal risk minimizer, where $\gamma \geq 0$.
 Then, the true class-posterior probability $\veceta(\vecx)$ can be recovered from $\vecq^{\gamma,*}$ with the transformation $\Psibf:\Delta^K\to\Delta^K$, \ie,
    \begin{align}
     \veceta(\vecx)=
     \Psibf(\vecq^{\gamma,*}(\vecx)),
     \end{align}\label{eq:Psigamma_def}
     where 
    \begin{align}
     \small
     \Psibf(\vecv) &= [\Psi_1^\gamma(\vecv), \ldots, \Psi_K^\gamma(\vecv)]^\top
     ,\\
     \Psi_i^\gamma(\vecv) &= \frac{h^\gamma(v_i)}{\sum_{l=1}^K h^\gamma(v_l)}
     ,\\
        h^\gamma(v)&=\frac{v}{\varphi^\gamma(v)}\hspace{-.5px}=\hspace{-.5px}\frac{v}{(1-v)^\gamma-\gamma(1-v)^{\gamma-1}v\log v}.
    \end{align}
\end{theorem}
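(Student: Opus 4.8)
The plan is to pin down the focal risk minimizer $\vecq^{\gamma,*}$ through the first-order optimality conditions of the pointwise conditional risk and then read off $\Psibf$ directly from them. Since $\focalloss(\vecq,\vece_y)=-(1-q_y)^\gamma\log q_y$, the pointwise conditional risk is
\[
W^{\focalloss}\big(\vecq,\veceta\big)\;=\;\sum_{y\in\domy}\eta_y\,\focalloss(\vecq,\vece_y)\;=\;-\sum_{i=1}^K\eta_i(1-q_i)^\gamma\log q_i ,
\]
which we minimize over $\vecq\in\Delta^K$ pointwise in $\vecx$. First I would show that any minimizer shares the support of $\veceta$: if $\eta_i>0$ and $q_i=0$, then the term $-\eta_i(1-q_i)^\gamma\log q_i$ diverges, so $q_i^{*}>0$; conversely, since $q\mapsto-(1-q)^\gamma\log q$ is strictly decreasing on $(0,1)$ for every $\gamma\ge 0$, any probability mass placed on a coordinate $i$ with $\eta_i=0$ can be moved onto the (nonempty) support of $\veceta$ so as to strictly decrease $W^{\focalloss}$, hence $q_i^{*}=0$ there. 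This reduces the minimization to a smooth problem on the relative interior of the face of $\Delta^K$ indexed by $\{i:\eta_i>0\}$.

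On that face I would introduce a Lagrange multiplier $\lambda$ for $\sum_i q_i=1$ and set the gradient of $-\sum_i\eta_i(1-q_i)^\gamma\log q_i+\lambda(\sum_i q_i-1)$ to zero. Using $\frac{d}{dq}\big[(1-q)^\gamma\log q\big]=-\gamma(1-q)^{\gamma-1}\log q+(1-q)^\gamma/q$ and rearranging, the stationarity condition for each $i$ in the support becomes
\[
\eta_i\cdot\frac{(1-q_i^{*})^\gamma-\gamma(1-q_i^{*})^{\gamma-1}q_i^{*}\log q_i^{*}}{q_i^{*}}\;=\;\lambda,
\qquad\text{i.e.}\qquad
\eta_i\;=\;\lambda\,\frac{q_i^{*}}{\varphi^\gamma(q_i^{*})}\;=\;\lambda\,h^\gamma(q_i^{*}),
\]
with $\eta_i=0=h^\gamma(0)$ off the support, consistently. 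Since $\varphi^\gamma(v)>0$ on $(0,1)$ for all $\gamma\ge 0$ — the term $-\gamma(1-v)^{\gamma-1}v\log v$ is nonnegative because $\log v<0$, and $(1-v)^\gamma>0$ — each $h^\gamma(q_i^{*})$ is well defined and positive. Summing the identity over $i$ and using $\sum_i\eta_i=1$ gives $\lambda=1/\sum_l h^\gamma(q_l^{*})$, and substituting back yields $\eta_i=h^\gamma(q_i^{*})/\sum_l h^\gamma(q_l^{*})=\Psi_i^\gamma(\vecq^{\gamma,*})$, which is exactly $\veceta=\Psibf(\vecq^{\gamma,*})$.

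The only case this does not cover directly is $\vecq^{\gamma,*}\in\subunif$, where a coordinate of $\vecq^{\gamma,*}$ may equal $1$ (one-hot vectors) so that $\varphi^\gamma$ vanishes there and the Lagrangian manipulation is not literally valid; here I would invoke Prop.~\ref{thm:focalcorrect}, which already gives $\vecq^{\gamma,*}=\veceta$, together with the elementary check that $\Psibf$ restricts to the identity on $\subunif$ under the conventions $h^\gamma(0)=0$ and the limiting value at a one-hot vector. I expect the main obstacle to be precisely this boundary bookkeeping — establishing that the support of $\vecq^{\gamma,*}$ coincides with that of $\veceta$ and that KKT stationarity is genuinely necessary on the relevant face (immediate, since $\Delta^K$ is cut out by affine constraints) — rather than the interior Lagrange computation, which is routine; and as a side point it is worth noting that the argument uses only that \emph{every} minimizer is a KKT point, so it does not rely on uniqueness of $\vecq^{\gamma,*}$. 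For $\gamma=0$ one has $\varphi^0\equiv 1$, $h^0(v)=v$, and $\Psibf$ collapses to the identity, recovering the familiar strict properness of the cross-entropy loss as a sanity check.
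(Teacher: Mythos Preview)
Your approach is essentially the same as the paper's: set up the constrained minimization of $W^{\focalloss}$ over $\Delta^K$, argue the inequality constraints are inactive on the support of $\veceta$, write the Lagrangian stationarity condition $\eta_i\varphi^\gamma(q_i^*)/q_i^*=\lambda$, and eliminate $\lambda$ by summing. One small caveat: in the paper's logical structure Prop.~\ref{thm:focalcorrect} is \emph{derived from} Thm.~\ref{thm:recover}, so invoking it here to dispose of the one-hot boundary case is circular; but your parenthetical ``elementary check that $\Psibf$ restricts to the identity on $\subunif$'' is itself enough to close that case, so simply drop the reference to Prop.~\ref{thm:focalcorrect} and keep the direct verification.
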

For completeness, we also define $\Psi^\gamma_i(\vecv) = v_i$ if $\vecv$ is a one-hot vector.
Note that if $\gamma=0$, then $\eta_i(\vecx)=\Psi_i^\gamma(\vecq^{\gamma,*}(\vecx)) = q^{\gamma,*}_i(\vecx)$. 
Hence, our result suggests that the cross-entropy risk minimizer $\vecq^{\celoss,*}$ does not need to apply any additional transformation to obtain the true class-posterior probability, which coincides with the known analysis of the cross-entropy loss~\citep{feuerverger1992some,gneiting2007strictly}.
On the other hand, an additional step of applying $\Psibf$ is required when $\gamma \neq 0$ for the focal loss.
We also want to emphasize that for any given $\max_y q_y^{\gamma,*}(\vecx)$ in the ambiguous region~(see Fig.~\ref{fig:psi}), one can easily determine if it is $\veceta$UC or $\veceta$OC by comparing $\max_y q_y^{\gamma,*}(\vecx)$ and $\max_y\Psi_y^\gamma( \vecq^{\gamma,*}(\vecx))$.

Next, we confirm that our proposed transformation $\Psibf$ does not degrade the classification performance of the classifier by proving that $\Psibf$ preserves the decision rule (its proof is given in Appx.~\ref{app:proof-maintain}). 

\begin{proposition}
\label{prop:maintain}
    Given $\vecv \in \Delta^K$ and $\gamma \geq 0$,
    we have 
    \begin{align}
         \argmax_i\ \Psi_i^\gamma(\vecv)= \argmax_i v_i.
    \end{align}
\end{proposition}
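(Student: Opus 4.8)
The plan is to reduce the statement to a one-dimensional monotonicity fact about $h^\gamma$. Since $\Psi_i^\gamma(\vecv) = h^\gamma(v_i)/\sum_{l=1}^K h^\gamma(v_l)$ and the denominator does not depend on $i$, scaling by the positive constant $1/\sum_l h^\gamma(v_l)$ preserves the $\argmax$, so $\argmax_i \Psi_i^\gamma(\vecv) = \argmax_i h^\gamma(v_i)$ whenever that denominator is positive; hence it suffices to prove that $h^\gamma$ is \emph{strictly increasing} on $[0,1)$, so that $v_i \mapsto h^\gamma(v_i)$ is order-preserving and the two $\argmax$ sets coincide. I would first dispose of the easy cases: if $\gamma = 0$ then $\varphi^0 \equiv 1$, so $h^0(v) = v$ (indeed $\Psi_i^0(\vecv) = v_i$); and if $\vecv$ is a one-hot vector the claim is immediate from the supplementary definition $\Psi_i^\gamma(\vecv) = v_i$. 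Thus I may assume $\gamma > 0$ and that $\vecv$ is not one-hot, which forces $v_i \in [0,1)$ for every $i$. I would then record that $h^\gamma$ is well defined and positive on $(0,1)$: for $v \in (0,1)$ we have $(1-v)^\gamma > 0$ and $-\gamma(1-v)^{\gamma-1} v \log v > 0$ since $\log v < 0$, so $\varphi^\gamma(v) > 0$, while $\varphi^\gamma(0) = 1$ so $h^\gamma(0) = 0$; since $\vecv \in \Delta^K$ has a positive coordinate, $\sum_l h^\gamma(v_l) > 0$, validating the reduction.

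The core step is to show $h^\gamma$ is strictly increasing on $(0,1)$; together with $h^\gamma(0) = 0 < h^\gamma(v)$ this gives strict monotonicity on all of $[0,1)$. Differentiating $\log h^\gamma(v) = \log v - \log \varphi^\gamma(v)$, the sign of $(\log h^\gamma)'(v)$ equals that of $\varphi^\gamma(v) - v(\varphi^\gamma)'(v)$. A direct computation of $(\varphi^\gamma)'$, in which the two terms proportional to $(1-v)^{\gamma-1} v \log v$ cancel, gives
\[
\varphi^\gamma(v) - v(\varphi^\gamma)'(v) = (1-v)^{\gamma-2}\bigl[(1-v)^2 + 2\gamma v(1-v) - \gamma(\gamma-1) v^2 \log v\bigr].
\]
For $\gamma \ge 1$ all three bracketed terms are nonnegative and the first two are positive, so the bracket is positive. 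For $0 < \gamma < 1$ the last term is negative, and here I would use the elementary bound $-\log v \le (1-v)/v$ to get $-\gamma(\gamma-1) v^2 \log v = \gamma(1-\gamma) v^2 (-\log v) \le \gamma(1-\gamma) v(1-v)$, so the bracket is at least $(1-v)\bigl[(1-v) + \gamma v(1+\gamma)\bigr] > 0$. In either case $\varphi^\gamma(v) - v(\varphi^\gamma)'(v) > 0$ on $(0,1)$, so $h^\gamma$ is strictly increasing there, and the proposition follows.

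The main obstacle is the sign analysis for $0 < \gamma < 1$: there $-\gamma(\gamma-1) v^2 \log v$ is negative, and one must argue that the remaining positive terms dominate — which is precisely where the inequality $-\log v \le (1-v)/v$ is needed. The algebraic reduction of $\varphi^\gamma - v(\varphi^\gamma)'$ to the displayed product form is the other mildly delicate point, though purely mechanical. As a fallback I would instead extract the strict monotonicity of $h^\gamma$ from the analysis already underlying Thm.~\ref{thm:recover}, rather than rederiving it here.
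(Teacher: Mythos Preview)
Your approach is the paper's: reduce to strict monotonicity of $h^\gamma$, which is exactly Lemma~\ref{lem:hstrictly}, after which Proposition~\ref{prop:maintain} follows in one line since the denominator $\sum_l h^\gamma(v_l)$ is common to all $i$. Your expression $(1-v)^{\gamma-2}\bigl[(1-v)^2 + 2\gamma v(1-v) - \gamma(\gamma-1)v^2\log v\bigr]$ coincides (after factoring) with the paper's numerator $\phi_1^\gamma$, and for $\gamma\ge 1$ the sign argument is identical. For $0<\gamma<1$ you take a cleaner route than the paper: instead of the paper's multi-step analysis of an auxiliary $\phi_2^\gamma$ via convexity of its derivative and endpoint values, you bound the single troublesome term directly with the logarithmic inequality $-\log v\le (1-v)/v$. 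One correction: your intermediate equality $-\gamma(\gamma-1)v^2\log v = \gamma(1-\gamma)v^2(-\log v)$ is off by a sign; the quantity actually equals $-\gamma(1-\gamma)v^2(-\log v)$, and multiplying $-\log v \le (1-v)/v$ by the negative factor $-\gamma(1-\gamma)v^2$ flips the inequality to give the \emph{lower} bound $-\gamma(\gamma-1)v^2\log v \ge -\gamma(1-\gamma)v(1-v)$, from which your final estimate $(1-v)\bigl[(1-v)+\gamma(1+\gamma)v\bigr]>0$ is indeed correct.
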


In summary, if one wants to recover the true class-posterior probability from the focal risk minimizer with $\gamma\neq0$, an additional step of applying $\Psibf$ is suggested by Thm.~\ref{thm:recover}.
However, if one only wants to know which class has the highest prediction probability, then applying~$\Psibf$ is unneeded since it does not change the prediction result.
We want to emphasize that that using the transformation $\Psibf$ to recover the true class-posterior probability is significantly different and orthogonal from using a heuristic technique such as Platt scaling~\citep{platt1999probabilistic}.
The differences are: (1) Using $\Psibf$ is theoretically guaranteed given the risk minimizer and (2) No additional training is involved since the transformation~$\Psibf$ does not contain any tuning parameter, whereas Platt scaling requires additional training, which can be computationally expensive when using a large training dataset. 
Note that a transformation such as $\Psibf$ that relates a risk minimizer to the true class-posterior probability is not guaranteed to exist for every loss, \eg, there is no such transformation for the hinge loss~\citep{cortes1995support,platt1999probabilistic,reid2010}.

\subsection{Numerical illustration}\label{sec:numerical_illus}
\begin{figure*}
\includegraphics[width=\textwidth]{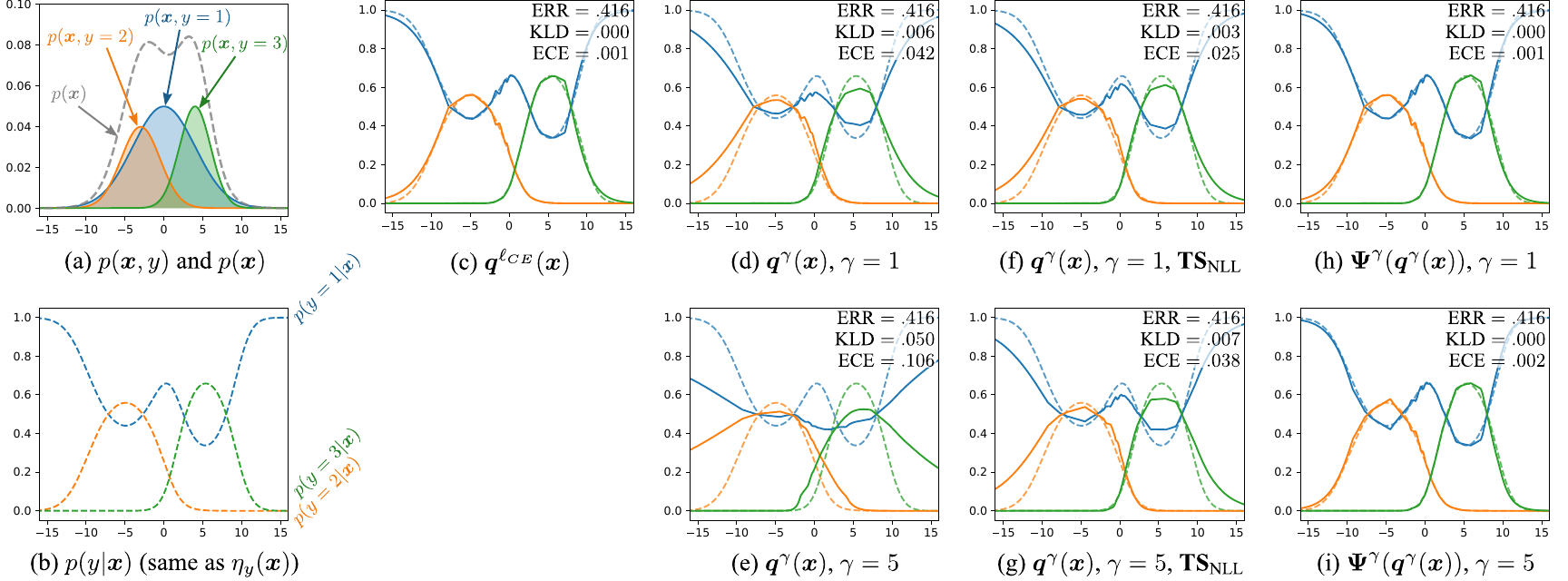}
\caption{\label{fig:fig_synth}Demonstration of the  underconfident ($\veceta$UC) property of the focal loss and the result of the transformation $\Psibf$. 
(a) and (b) show $p(\vecx)$, $p(\vecx,y)$, and $p(y|\vecx)$ used for training the MLPs. 
For (c-i), solid graphs are the raw or transformed predicted scores from the MLPs while dashed graphs are $p(y|\vecx)$ (same as (b)). 
The ERR, KLD, and ECE values are reported on the top-right of each subfigure. 
(c) shows $\vecq^\celoss(\vecx)$ of an MLP trained with $\celoss$ while
(d) and (e) show $\vecq^\gamma(\vecx)$ of MLPs trained with $\focalloss$ with $\gamma=1 \text{ and } 5$.
(f) and (g) show the scores after processing with \textbf{TS}$_\textrm{NLL}$. 
(h) and (i) show the scores after using the our proposed $\Psibf$ in Eq.~\eqref{eq:Psigamma_def}.
See Sec.~\ref{sec:numerical_illus} for details.
}
\end{figure*}
Here, we use synthetic data to demonstrate the $\veceta$UC property of the focal loss and show that applying $\Psibf$ can successfully recover the true class-posterior probability.
The purpose of using the synthetic data is because we know the true class-posterior probability $\veceta$ in this problem.
Unlike many real-world datasets where only hard labels are given, we can directly evaluate the quality of class-posterior probability estimation using the Kullback-Leibler divergence (KLD), which is defined as
    $\mathrm{KL}(\veceta(\vecx)||\vecq(\vecx)) = \sum_{i=1}^K \eta_i(\vecx)\log\frac{\eta_i(\vecx)}{q_i(\vecx)}.$

We simulate a $1$-dimensional $3$-class classification problem with the distribution given in Fig.~\ref{fig:fig_synth}a. 
We then trained three-layer multilayered perceptrons (MLPs) with $\ell_\mathrm{CE}$ and $\ell^\gamma_\mathrm{FL}$ ($\gamma=1 \text{ and } 5$) using data sampled from the distribution. 
The estimated confidence scores $q^\ell_y(\vecx)$ of all losses are shown in Fig.~\ref{fig:fig_synth}c,d,e. 
We can see that all MLPs can correctly identify the class having the highest class-posterior probability for the whole $\domx$
and achieve roughly the same classification error (ERR), which corresponds to the fact that both $\ell_\mathrm{CE}$ and $\focalloss$ are classification-calibrated. 
However, while $q^{\ell_\mathrm{CE}}_y(\vecx)$ in Fig.~\ref{fig:fig_synth}c could correctly estimate $\eta_y(\vecx)$, $q^{\gamma}_y(\vecx)$ in Fig.~\ref{fig:fig_synth}d,e do not match $\eta_y(\vecx)$, which agrees with our result that the focal loss is not strictly proper. 
More precisely, the value of the $\max_y q^{\gamma}_y(\vecx)$ is lower than $\max_y \eta_y(\vecx)$, which indicates that $\vecq^{\gamma}(\vecx)$ is $\veceta$UC. 
With a larger $\gamma$, we can observe this trend more significantly by looking at KLD and the expected calibration error (ECE)~\citep{naeini2015obtaining,guo2017calibration}, where low ECE indicates good \emph{empirical confidence}.

One well-known approach to improve confidence estimation in neural networks is temperature scaling (TS)~\citep{guo2017calibration}. 
We applied TS with negative log-likelihood (NLL) as the validation objective (\textbf{TS}$_\mathrm{NLL}$) to the MLPs trained with the focal loss. 
We can see from Fig.~\ref{fig:fig_synth}f,g that while \textbf{TS}$_\mathrm{NLL}$ made the $q^{\gamma}_y(\vecx)$ move closer to $\eta_y(\vecx)$, a large gap between them still exists, suggesting that \textbf{TS}$_\mathrm{NLL}$ fails to obtain the true class-posterior probability.

By using the transformation $\Psibf$, we can plot Fig.~\ref{fig:fig_synth}h,i and see that $\Psibf(\vecq^\gamma(\vecx))$ can improve the quality of the estimation, where both KLD and ECE are almost zero.
Recall that $\Psibf$ can be applied without any additional data or changing decision rule, thus the ERR remains exactly the same. 
This synthetic experiment demonstrates that the simplex outputs of neural networks trained with the focal loss is likely to be $\veceta$UC, and this can be effectively fixed using the transformation $\Psibf$.

\section{Experimental results}
In this section, we perform experiments to study the behavior of the focal loss and validate the effectiveness of $\Psibf$ under different training paradigms. 
To do so, we use the CIFAR10~\citep{cifar10} and SVHN~\citep{svhn} datasets as the benchmark datasets.
The details of the experiments are as follows.

\textbf{Models:} 
To see the influence of the model complexity on classifiers trained with the focal loss, we used the residual network (ResNet) family~\citep{he2016deep}, \ie, ResNet$L$ with $L=8,20,44,110$, where complexity increases as $L$ increases.

\textbf{Methods:} 
We compared the networks that use $\Psibf$ after the softmax layer to those that do not.
Note that both methods have the same accuracy since $\Psibf$ does not affect the decision rule (Prop.~\ref{prop:maintain}). 
We used the focal loss with $\gamma\in \{0, 1, 2, 3\}$ in this experiment, and conducted 10 trials for each experiment setting. 
\begin{figure*}
\hspace{-0.2em}
\includegraphics[width=\textwidth]{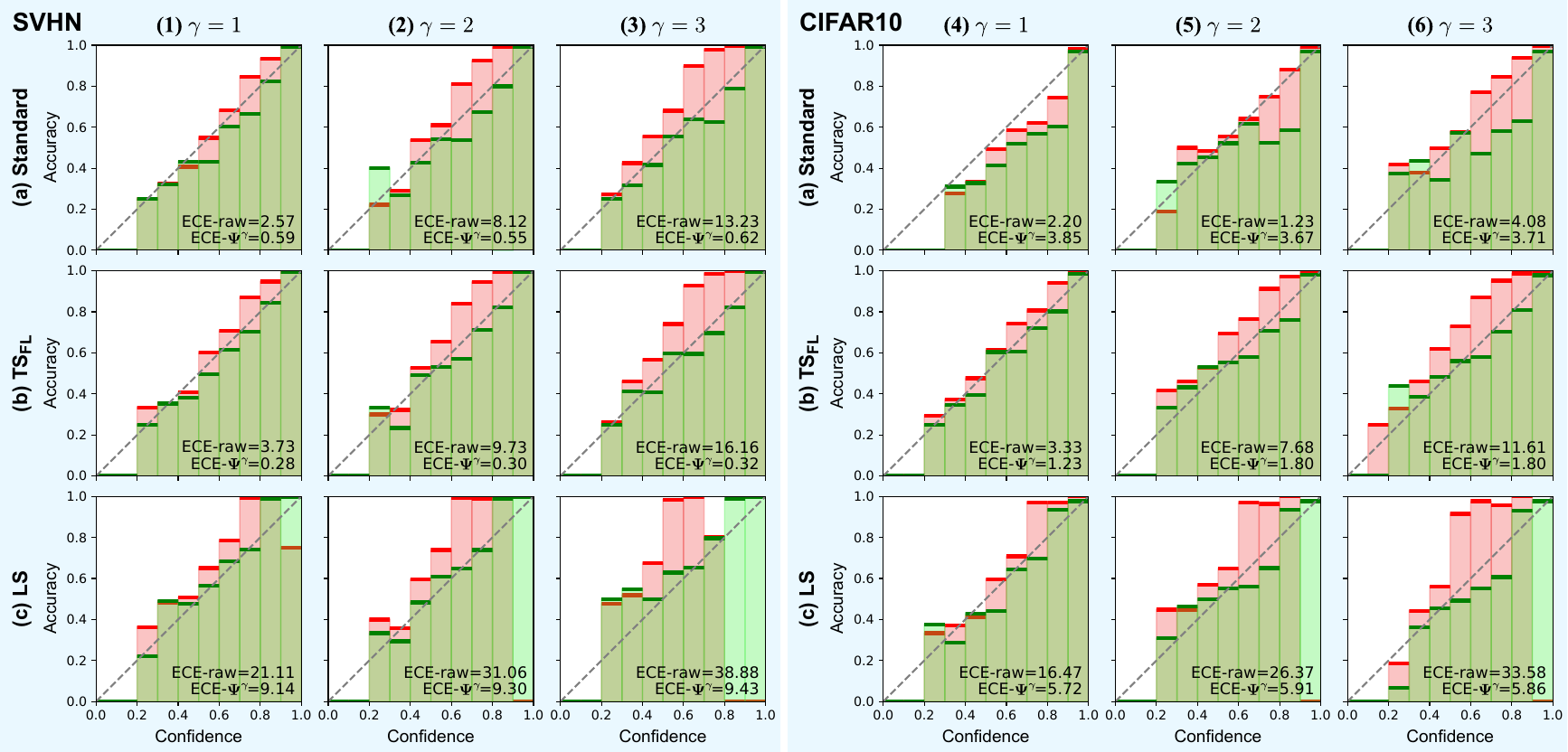}
\caption{\label{fig:reliabilitydiagram} Reliability diagrams of ResNet110 trained with $\focalloss, \gamma=1,2,3$ on SVHN and CIFAR10 datasets.
ECE-$\Psibf$ (\resp, ECE-raw) denotes the ECE of the networks that use (\resp, do not use) $\Psibf$ and their diagrams are plotted in green (\resp, red).
Each row shows the results of different training paradigms: (a) \textbf{Standard}, (b) $\textbf{TS}_{\textrm{FL}}$, and (c) \textbf{LS}.
See Sec.~\ref{sec:ece-three} for details.}
\end{figure*}
\textbf{Evaluation metrics:} Since true class-posterior probability labels are not available, a common practice is to use ECE to evaluate the quality of prediction confidence~\citep{naeini2015obtaining,guo2017calibration}.  
In this paper, we used $10$ as the number of bins.
ECE-$\Psibf$ (\resp, ECE-raw) denotes the ECE of the networks that use (\resp, do not use) $\Psibf$.
We found that NLL is highly correlated with ECE and we report full results on more evaluation metrics and models in Appx.~\ref{app:exp}. 

\textbf{Hyperparameters:} 
For all models, the number of epochs was $200$ for CIFAR10 and $50$ for SVHN. 
The batch size was $128$.
We used SGD with momentum of $0.9$, where the initial learning rate was $0.1$, which was then divided by $10$ at epoch $80$ and $150$ for CIFAR10 and at epoch $25$ and $40$ for SVHN. 
The weight decay parameter was $5 \times 10^{-4}$.

\subsection{ECE of different training paradigms}
\label{sec:ece-three}
We trained models using three different paradigms: (1) \textbf{Standard} uses one-hot ground truth vectors, which is known to be susceptible to overconfidence~\citep{guo2017calibration}; 
(2) $\textbf{TS}_{\textrm{FL}}$ post-processes the output of \textbf{Standard} with TS that uses the focal loss in the validation objective; and
(3) \textbf{LS} uses label smoothing to smoothen one-hot labels to soft labels, which has been reported to alleviate the overconfidence issue in DNNs~\citep{muller2019does}. 
The label smoothing parameter was $0.1$.

Fig.~\ref{fig:reliabilitydiagram} shows the reliability diagrams for ResNet110 trained with the focal loss using different $\gamma$. 
We can see that $\Psibf$ substantially improves ECE for most settings.
This demonstrates that our theoretically-motivated transformation $\Psibf$ can be highly relevant in practice.
For \textbf{LS}, ECE-raw drastically increases as $\gamma$ increases, whereas the value of $\gamma$ does not significantly affect ECE-$\Psibf$. 
Next, in \textbf{TS}$_\mathrm{FL}$, if $\Psibf$ is not applied, we can see that ECE-raw degrades compared with that of \textbf{Standard}.
On the other hand, our transformation $\Psibf$ can further improve the performance of \textbf{Standard}. 
This could be due to \textbf{TS}$_\mathrm{FL}$ giving a more accurate estimate of the focal risk minimizer $\vecq^{\gamma,*}$, but $\vecq^{\gamma,*}$ does not coincide with the true class-posterior probability $\veceta$ if $\Psibf$ is not applied, as proven in Thm.~\ref{thm:recover}.
Apart from \textbf{Standard} in CIFAR10, underconfident bins (\ie, the bins that align above the diagonal of the reliability diagram) can be observed especially when $\gamma$ is large.
The results indicate that the focal loss is susceptible to be underconfident as $\gamma$ increases, which agrees with our analysis that the focal loss is not strictly proper (Thm.~\ref{thm:not-proper}) and prone to $\veceta$UC (Cor.~\ref{cor:onehalfunder}). 

\subsection{Why does $\Psibf$ not always improve ECE?}

\begin{figure*}
\centering
\includegraphics[width=\textwidth]{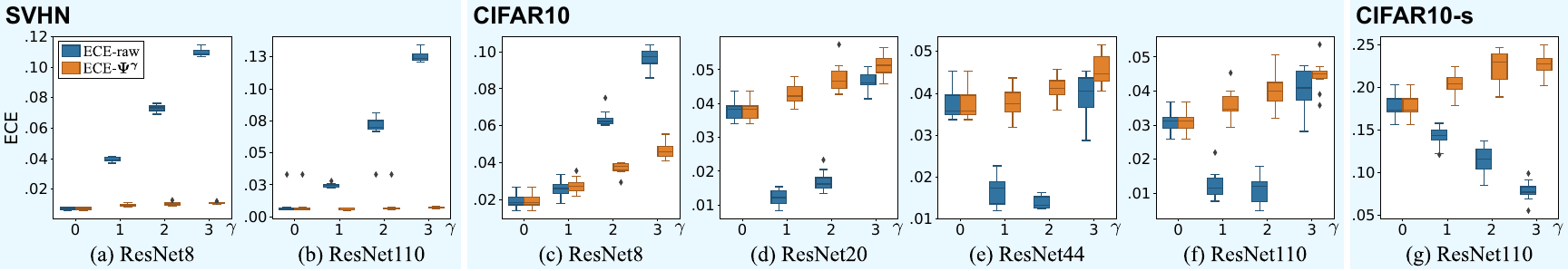}
\caption{\label{fig:boxplot1} Box plots of ECEs for \textbf{Standard} with varying $\gamma$ using different models on (a-b) SVHN, (c-f) CIFAR10, and (g) CIFAR10-s.
}
\end{figure*}

\begin{figure}
\centering
\includegraphics{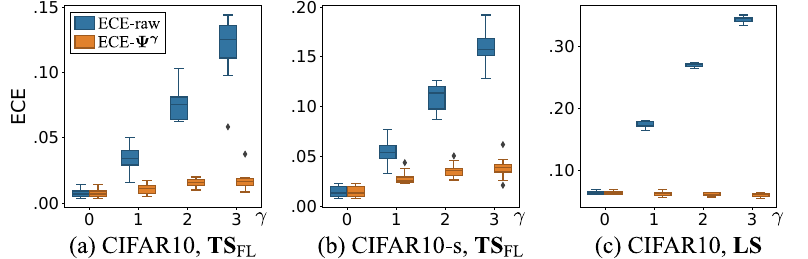}
\caption{\label{fig:boxplot2} Box plots of ECEs with varying $\gamma$ for ResNet110 using \textbf{TS}$_\mathrm{FL}$ for CIFAR10 and CIFAR10-s, and \textbf{LS} for CIFAR10. It can be observed that using the transformation $\Psibf$ is preferable.}
\vspace{-0.4em}
\end{figure}
In Fig.~\ref{fig:reliabilitydiagram}, although our transformation $\Psibf$ can greatly improve the performance for \textbf{Standard} in SVHN, it worsens the performance for \textbf{Standard} in CIFAR10. 
This demonstrates that our proposed transformation does not always improve the performance in practice, which could occur when the focal risk minimizer $\vecq^{\gamma,*}$ is not successfully learned.
Note that if $\vecq^{\gamma,*}$ is obtained, the transformation $\Psibf$ is \emph{the only mapping to obtain the true class-posterior probability $\veceta$ from $\vecq^{\gamma,*}$, i.e., $\Psibf \circ \vecq^{\gamma,*}$ = $\veceta$} (Thm.~\ref{thm:recover}). 

Here, we take a closer look at the scenario where $\Psibf$ could be less effective. 
We hypothesize that there are two potential reasons: (1) DNNs can overfit the one-hot vector, which leads to overconfident prediction~\citep{guo2017calibration}.
By using one-hot vectors as labels, perfectly minimizing the empirical risk implies making the confidence score close to a one-hot vector. 
(2) The amount of data could be insufficient for correctly estimating the true class-posterior probability. 

To justify our claim, we conducted experiments with different models on SVHN, CIFAR10, and CIFAR10-s, where CIFAR10-s is CIFAR10 that uses only $10\%$ of training data for each class.
Note that SVHN has a larger number of data than CIFAR10, and ResNet$L$ is more complex as $L$ increases. 
Fig.~\ref{fig:boxplot1} illustrates ECEs with different dataset size and models. 
In Fig.~\ref{fig:boxplot1}b,f,g, where the same model was used, we can observe that $\Psibf$ becomes less effective as the dataset size gets smaller.
Also, in Fig.~\ref{fig:boxplot1}c-f, where the different models were used in CIFAR10, $\Psibf$ becomes less effective as the model becomes more complex.
Therefore, the results agree with our hypotheses that the model complexity and dataset size play a role in the effectiveness of $\Psibf$.
Note that $\Psibf$ is still effective in SVHN regardless of the model size in our experiments, as can be seen in Fig.~\ref{fig:boxplot1}a,b, since the size of SVHN may be sufficiently large to accurately estimate $\vecq^{\gamma,*}$ for complex models.

It is also insightful to observe the best value of $\gamma$ in different settings.
More precisely, we see from Fig.~\ref{fig:boxplot1}c-g that the best $\gamma$ for CIFAR10 is $\gamma=0$ for  ResNet8, $\gamma=1$ for ResNet20, and $\gamma=2$ for ResNet110, while the best $\gamma$ for CIFAR10-s and ResNet110 is $\gamma=3$. 
Therefore, we can conclude that the best $\gamma$ increases as the data size decreases or the model becomes more complex.
Nevertheless, For \textbf{LS} and \textbf{TS}$_\mathrm{FL}$, we observe that a larger $\gamma$ always leads to worse performance and $\Psibf$ can effectively mitigate this problem for every dataset, as illustrated in Fig.~\ref{fig:boxplot2}.

\subsection{Discussion}
\label{sec:exp-dis}
Recently,~\citet{mukhoti2020calibrating} studied the relation between the focal loss and the confidence issue of DNNs, 
and showed that without post-processing, training with the focal loss can achieve lower ECE than that of the cross-entropy loss.
Our results indicate that this is not always the case (see SVHN for example).
In Appx.~\ref{app:exp}, we provide additional experimental results on $30$ datasets to show that the focal loss is less desirable compared with the cross-entropy loss in most datasets, and that $\Psibf$ can successfully improve ECE to be comparable with that of the cross-entropy loss.
Nevertheless, the focal loss can also outperform the cross-entropy loss as shown in Fig.~\ref{fig:boxplot1}, which agrees with the previous work~\citep{mukhoti2020calibrating}.  
This could occur when classifiers (especially DNNs) suffer from overconfidence due to empirical estimation~\citep{guo2017calibration}.
Since the focal loss tends to give an $\veceta$UC classifier, there may exist a sweet spot for $\gamma > 0$ that gives the best ECE because the overconfident and underconfident effects cancel each other out.

In addition, it has been observed that applying TS w.r.t. NLL or ECE on a classifier trained with the focal loss can be empirically effective to reduce ECE~\citep{guo2017calibration,mukhoti2020calibrating}.
Nevertheless, for a classifier trained with the focal loss, Fig.~\ref{fig:fig_synth} illustrates that using such heuristics may fail to recover the true class-posterior probability.
Theoretically, since TS only tunes one scalar to optimize the validation objective, it may suffer from model misspecification and could fail to achieve the optimal NLL/ECE w.r.t.~all measurable functions~\citep{reid2010,williamson2016composite}, \ie, it may fail to adjust $\vecq^{\gamma,*}$ to $\veceta$. 
Our $\Psibf$ is the only mapping that can recover the true class-posterior probability $\veceta$ given focal risk minimizer $\vecq^{\gamma,*}$.

\section{Conclusions}
We proved that the focal loss is classification-calibrated but not strictly proper.
We further investigated and pointed out that focal loss can give both underconfident and overconfident classifiers.
Then, we proposed a transformation that can theoretically recover the true class-posterior probability from the focal risk minimizer.
Experimental results showed that the proposed transformation can improve the performance of class-posterior probability estimation. 
\section*{Acknowledgment}
We would like to thank Zhenguo Wu, Yivan Zhang, Zhenghang Cui, and Han Bao for helpful discussion.
Nontawat Charoenphakdee was supported by MEXT scholarship and Google PhD Fellowship program.
Nuttapong Chairatanakul was supported by MEXT scholarship. 
Part of this work is conducted as research activities of AIST - Tokyo Tech Real World Big-Data Computation Open Innovation Laboratory (RWBC-OIL).
Masashi Sugiyama was supported by JST CREST Grant Number JPMJCR18A2.

\appendix
\onecolumn
\begin{figure}[h]
\includegraphics[scale=0.65]{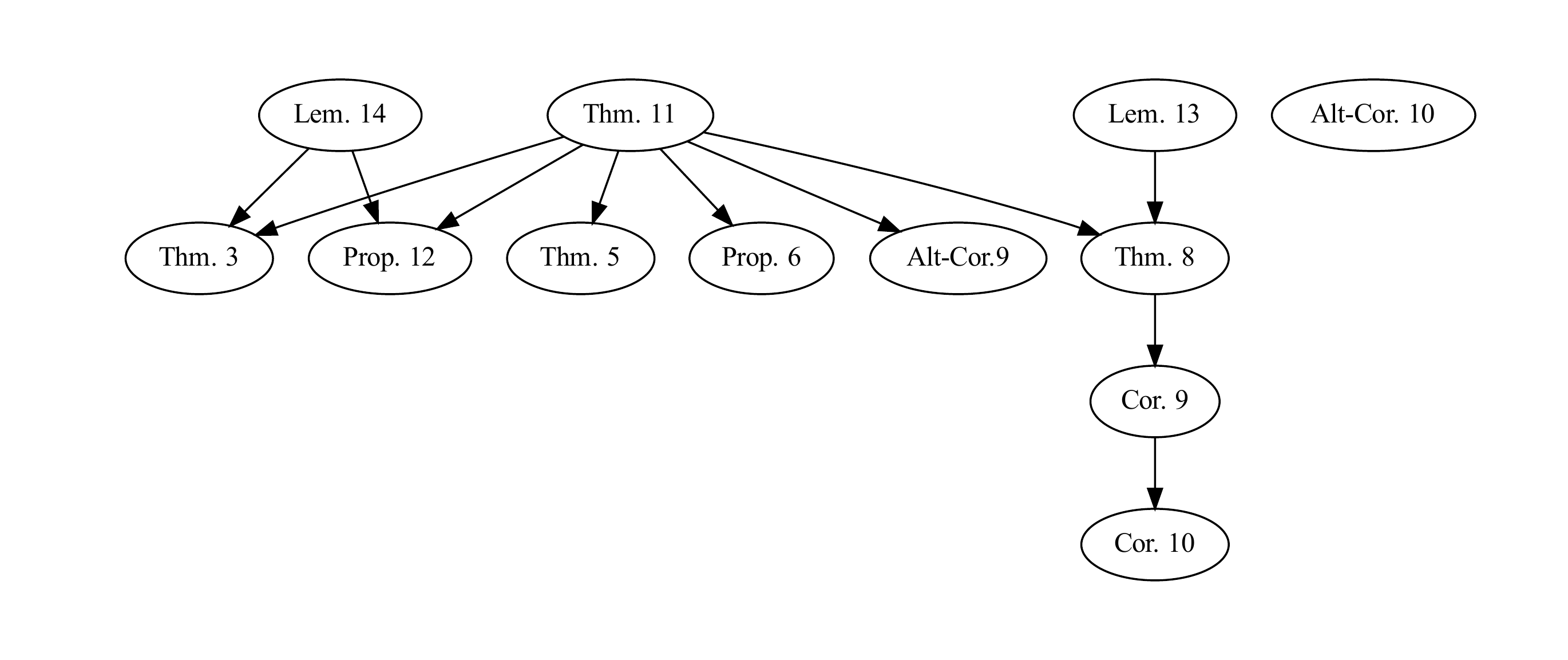}
\caption{\label{fig:proof-dependency} 
The dependency of the proofs. An arrow from node $A$ to node $B$ indicates that the result of node $A$ is required to prove the result of node $B$. Alt-Cor.~\ref{cor:onehalfunder} (\resp, Alt-Cor.~\ref{cor:binaryunder})  denotes an alternative proof of Cor.~\ref{cor:onehalfunder} (\resp, Cor.~\ref{cor:binaryunder}).}
\end{figure}
\section{Proofs}
\label{app:proof}
In this section, we provide the proofs of the results given in the main paper.  To keep the notation uncluttered, sometimes we omit $\vecx$ and use $\eta_i$ and $q_i$ to denote the true class-posterior probability of class $i$ and the score function of class $i$ for the focal loss, where $\gamma$ corresponds to $q_i$ is also omitted but it can be straightforwardly inferred by the context. 
Figure~\ref{fig:proof-dependency} indicates the dependency of the proofs. 
For example, to prove Thm.~\ref{thm:calib}, we can utilize the result of  Lem.~\ref{lem:hstrictly} and Thm.~\ref{thm:recover}.

\subsection*{Proof index:}
\begin{itemize}
    \item Sec.~\ref{app:proof-recover}: Proof of Thm.~\ref{thm:recover}: Recovering class-posterior probability from the focal loss minimizer
    \item Sec.~\ref{app:lemvarphi}: Lem.~\ref{lem:varphiprop}: Properties of $\varphi^\gamma$
    \item Sec.~\ref{app:proof-hstrictly}: Lem.~\ref{lem:hstrictly}: $h^\gamma$ is a strictly increasing function
    \item Sec.~\ref{app:proof-calib}: Proof of Thm.~\ref{thm:calib}: Focal loss is classification-calibrated
    \item Sec.~\ref{app:proof-not-proper}: Proof of Thm.~\ref{thm:not-proper}: Focal loss is not strictly proper
    \item Sec.~\ref{app:proof-focalcorrect}: Proof of Prop.~\ref{thm:focalcorrect}: Where risk minimizer correctly gives the true class-posterior probability
    \item Sec.~\ref{app:proof-underover}: Proof of Thm.~\ref{thm:underover}: Focal loss gives under/overconfident classifier
    \item Sec.~\ref{app:proof-onehalfunder}: Proof of Cor.~\ref{cor:onehalfunder}: Focal loss gives an underestimation of the true class-posterior probability
    \item Sec.~\ref{app:proof-binaryunder}: Proof of Cor.~\ref{cor:binaryunder}: Focal loss gives underconfident classifier in binary classification
    \item Sec.~\ref{app:proof-maintain}: Proof of Prop.~\ref{prop:maintain}: Transformation $\Psibf$ preserves the decision rule
    \item Sec.~\ref{app:proof-alt-onehalfunder}: Alternative proof of Cor.~\ref{cor:onehalfunder}: $\vecq^{\gamma,*}$ is $\veceta$UC if $ \frac{1}{2} \leq \max_y q_y^{\gamma,*}(\vecx) < 1$ and $\vecq^{\gamma,*}(\vecx) \notin \subunif$
    \item Sec.~\ref{app:proof-alt-binaryunder}: Alternative proof of Cor.~\ref{cor:binaryunder}: $\veceta$UC property for $\vecq^{\gamma,*}$ in binary classification where $K=2$
\end{itemize}
\newpage
\subsection{Proof of Thm.~\ref{thm:recover}: Recovering class-posterior probability from the focal loss minimizer}
\label{app:proof-recover}
\begin{proof}
In order to derive a transformation $\Psibf$ that recovers the true class-posterior probability $\eta_i=p(y=i|\vecx)$ for all $i$ from the focal loss minimizer $\vecq^*$\footnote{We omit dependence on $\gamma$ for brevity.}, first consider the following optimization formulation which optimizes $W^{\focalloss}$:
\begin{align}
    \underset{\vecq}{\text{minimize}}
    & 
    -\sum_{i=1}^{K} \eta_i (1-q_i)^\gamma \log q_i \label{eq:minFLobj}
    \\
    \text{subject to}
    &
    \sum_{i=1}^{K}q_i = 1,
    \\
    &
    \vecq \geq \boldsymbol{0}_K. \label{eq:minFLconst2}
\end{align}
Note that this optimization problem is convex with a bounded feasible set, thus an optimal solution exists. Recall that $\vecq^*$ denotes the minimizer of the above optimization problem. Without loss of generality, assume $\eta_i=p(y=i|\vecx)>0$ for all $i$\footnote{If there exists a class $j$ with $\eta_j=p(y=j|\vecx)=0$, then we have $q_j^*=0$. To see this, first let us define $\psi^\gamma(v)=-(1-v)^\gamma\log v$. We can see that $\frac{d}{d v}\psi^\gamma(v) < 0$ for $v>0$. This means that if $q_j^*>0$, then we can transfer $q^*_j$ to other class $k$ with $\eta_k>0$, \eg, $q^*_k:= q^*_k+q^*_j$, then the objective in \eqref{eq:minFLobj} would decrease, which means the original $\vecq^*$ is not the optimum.}.
Observe that $\vecq^*$ must have $q^*_i>0$ for all $i$ since any $q^*_i=0$ will make the objective goes to infinity. With this fact, we can say that the equality in \eqref{eq:minFLconst2} never holds at optimum. 
Therefore, by complementary slackness, the Lagrangian multipliers for constraints in \eqref{eq:minFLconst2} would be zero~\cite{boyd2004convex}, and we can consider the following Lagrangian equation:
\begin{align}
    \mathcal{L}(\vecq, \lambda) = -\sum_{i=1}^{K} \eta_i (1-q_i)^\gamma \log q_i + \lambda\left(\sum_{i=1}^{K}q_i - 1\right)
\end{align}
where $\lambda$ is the Lagrangian multiplier for the equality constraint. Next, we take the derivative with respect to $q_i$ and set to $0$, then solve for $\lambda$ at the optimum $\vecq^*$:
\begin{align}
    \left.\frac{\partial}{\partial q_i}\mathcal{L}(\vecq, \lambda)\right\vert_{\vecq=\vecq^*} = 0 
    & =\eta_i \gamma(1-q^*_i)^{\gamma-1} \log q^*_i - \eta_i\frac{(1-q^*_i)^{\gamma}}{q^*_i}+ \lambda 
    \\
    \lambda 
    &= \eta_i\left(\frac{(1-q^*_i)^{\gamma}-\gamma(1-q^*_i)^{\gamma-1}q^*_i\log q^*_i}{q^*_i} \right)
    \\
    \eta_i\label{eq:minFLetalambda}
    & =
    \frac{\lambda q^*_i}{(1-q^*_i)^{\gamma}-\gamma(1-q^*_i)^{\gamma-1}q^*_i\log q^*_i} 
    \\
    1 = 
    \sum_{i=1}^K\eta_i
    &= \lambda\sum_{i=1}^K \frac{q^*_i}{(1-q^*_i)^{\gamma}-\gamma(1-q^*_i)^{\gamma-1}q^*_i\log q^*_i} 
    \\
    \lambda
    &=
    \frac{1}{\sum_{i=1}^K \frac{q^*_i}{(1-q^*_i)^{\gamma}-\gamma(1-q^*_i)^{\gamma-1}q^*_i\log q^*_i} }.
\end{align}
By replacing the above $\lambda$ in \eqref{eq:minFLetalambda}, we can write $\eta_i$ as a function of $\vecq^*$ as:
\begin{align}
    \eta_i & = \frac{\frac{ q^*_i}{(1-q^*_i)^{\gamma}-\gamma(1-q^*_i)^{\gamma-1}q^*_i\log q^*_i}}{\sum_{j=1}^K \frac{q^*_j}{(1-q^*_j)^{\gamma}-\gamma(1-q^*_j)^{\gamma-1}q^*_j\log q^*_j} }
    \\
    &= \frac{\frac{q^*_i}{\varphi^\gamma(q^*_i)}}{\sum_{j=1}^K\frac{q^*_j}{\varphi^\gamma(q^*_j)}} \\
    &= \boldsymbol{\Psi}_i^\gamma(q^*),
\end{align}
where $\varphi^\gamma(v)=(1-v)^{\gamma}-\gamma(1-v)^{\gamma-1}v\log v$ is the same function defined in \eqref{eq:varphi}. 
As a result, given $\vecq^{\gamma,*}$, one can recover the true class-posterior probability $\veceta$, by using the transformation $\Psibf$.
\end{proof}

\subsection{Lem.~\ref{lem:varphiprop}: Properties of $\varphi^\gamma$}
\label{app:lemvarphi}
We present the following lemma, which describes the properties of the function $\varphi^\gamma:[0,1]\rightarrow\R$, defined as $\varphi(v)=(1-v)^{\gamma}-\gamma(1-v)^{\gamma-1}v\log v$, which plays a vital role in the analysis of the focal loss. 

\begin{lemma}
\label{lem:varphiprop}
(Properties of $\varphi^\gamma$) The function $\varphi^\gamma:[0,1]\to\R$  for all $\gamma>0$ has the following properties:
\begin{enumerate}
    \item $\varphi^\gamma(0)=1$ and $\varphi^\gamma(1)=0$,
    \item $\frac{d}{dv}\varphi^\gamma(v)$ changes sign from positive to negative only once at a point $\tilde{v}\in(0,1)$. In other words, there exists a unique $\tilde{v}\in(0,1)$ such that 
    \begin{enumerate}
        \item  $\frac{d}{dv}\varphi^\gamma(v)>0$ for all $v<\tilde{v}$,
        \item  $\frac{d}{dv}\varphi^\gamma(v)=0$ for $v=\tilde{v}$,
        \item  $\frac{d}{dv}\varphi^\gamma(v)<0$ for all $v>\tilde{v}$.
    \end{enumerate}
    \item There exists a unique $\acute{v} \in (0, 0.5)$ such that $\varphi^\gamma(\acute{v})=1$.
    \item $\varphi^\gamma$ has a unique maximum $\tilde{v}$, where $\tilde{v}\in(0,\acute{v})$.
\end{enumerate}
\end{lemma}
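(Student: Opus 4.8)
The plan is to reduce the whole lemma to elementary calculus on $\varphi^\gamma$ and, for the delicate monotonicity claim, on a simpler auxiliary function that controls the sign of $(\varphi^\gamma)'$. For Property~1 I would simply substitute the endpoints: at $v=0$ we have $(1-v)^\gamma=1$ and $\gamma(1-v)^{\gamma-1}v\log v\to 0$ because $v\log v\to 0$, so $\varphi^\gamma(0)=1$; at $v=1$, $(1-v)^\gamma=0$ for $\gamma>0$, and since $v\log v=-(1-v)+O((1-v)^2)$ near $v=1$ the term $(1-v)^{\gamma-1}v\log v$ also tends to $0$ (even when $\gamma<1$), giving $\varphi^\gamma(1)=0$; this also shows $\varphi^\gamma$ is continuous on all of $[0,1]$.

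For Property~2 I would differentiate and pull out a positive prefactor. A direct computation gives
\[
(\varphi^\gamma)'(v)=\gamma(1-v)^{\gamma-2}\,g(v),\qquad g(v):=-2(1-v)-(1-\gamma v)\log v,
\]
and $\gamma(1-v)^{\gamma-2}>0$ on $(0,1)$ irrespective of the sign of $\gamma-2$, so $(\varphi^\gamma)'$ has the same sign as $g$ there. The key point is that $g$ is strictly convex: $g''(v)=(1+\gamma v)/v^2>0$. Together with the boundary behaviour $g(0^+)=+\infty$, $g(1)=0$, $g'(0^+)=-\infty$ and $g'(1)=1+\gamma>0$, strict convexity of $g$ forces $g'$ to have a unique zero $v^\star\in(0,1)$, hence $g$ strictly decreases on $(0,v^\star)$ and strictly increases on $(v^\star,1)$. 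Since $g(1)=0$ and $g$ is strictly increasing on $(v^\star,1)$ we get $g(v^\star)<0$, so $g$ is strictly positive on $(0,\tilde v)$ and strictly negative on $(\tilde v,1)$ for a unique $\tilde v\in(0,v^\star)\subset(0,1)$. This is exactly Property~2, and it already shows $\varphi^\gamma$ strictly increases on $(0,\tilde v)$ and strictly decreases on $(\tilde v,1)$, so $\tilde v$ is the unique maximizer of $\varphi^\gamma$ (the first half of Property~4).

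For Property~3 and the remaining half of Property~4 I would combine the monotonicity picture with one explicit evaluation. By Properties~1--2, $\varphi^\gamma$ rises strictly from $\varphi^\gamma(0)=1$ to the maximum $\varphi^\gamma(\tilde v)>1$ and then decreases strictly to $\varphi^\gamma(1)=0$; strict monotonicity on each branch yields a unique $\acute v$ with $\varphi^\gamma(\acute v)=1$, necessarily in $(\tilde v,1)$, giving $\tilde v\in(0,\acute v)$. To get $\acute v<1/2$ I would compute $\varphi^\gamma(1/2)=(1/2)^\gamma(1+\gamma\log 2)$ and invoke the elementary inequality $2^\gamma>1+\gamma\log 2$ for $\gamma>0$ (immediate since $h(\gamma)=2^\gamma-1-\gamma\log 2$ satisfies $h(0)=0$ and $h'(\gamma)=\log 2\,(2^\gamma-1)>0$), so $\varphi^\gamma(1/2)<1$. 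Because $\varphi^\gamma>1$ on $(0,\tilde v]$, this forces $\tilde v<1/2$; and since $\varphi^\gamma$ strictly decreases on $(\tilde v,1)$ with $\varphi^\gamma(\acute v)=1>\varphi^\gamma(1/2)$, it forces $\acute v<1/2$, i.e. $\acute v\in(0,1/2)$.

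The main obstacle is Property~2: as written, $(\varphi^\gamma)'$ is an awkward mixture of $(1-v)^{\gamma-1}$, $(1-v)^{\gamma-2}$ and $\log v$ terms whose sign is not obvious, and the claim of a \emph{single} sign change must hold for all $\gamma>0$ at once. The trick that makes it go through is exactly the reduction above --- factoring off the positive prefactor and noticing that the remaining factor $g$ is strictly convex --- after which the sign-change statement is just bookkeeping on the endpoint values of $g$ and $g'$. The only other care needed is in handling the $0\cdot\log 0$ and $0^{\gamma-1}$ indeterminacies at the endpoints when $\gamma<1$, which the expansions used for Property~1 already resolve.
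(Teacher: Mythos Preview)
Your argument is correct and follows essentially the same route as the paper: factor $(\varphi^\gamma)'(v)=\gamma(1-v)^{\gamma-2}g(v)$ with $g(v)=2v-2-\log v+\gamma v\log v$, use strict convexity of $g$ together with the boundary data $g(0^+)=+\infty$, $g(1)=0$, $g'(1)>0$ to force a single sign change, and then locate $\acute v$ via $\varphi^\gamma(1/2)<1$. The only cosmetic difference is in how you establish $\varphi^\gamma(1/2)<1$: the paper differentiates $\varphi^\gamma(1/2)$ in $\gamma$, while you invoke $2^\gamma>1+\gamma\log 2$ directly --- both are one-line calculus facts.
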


\begin{proof}
\textbf{\underline{Item 1}}: 
We can see that
\begin{align}
    \varphi^\gamma(0)
    &=(1-0)^{\gamma}-\gamma(1-0)^{\gamma-1}\cdot0\cdot\log 0 = 1-0 = 1,
    \\
    \varphi^\gamma(1)
    &=(1-1)^{\gamma}-\gamma(1-1)^{\gamma-1}\cdot1\cdot\log 1 = 0-0 = 0.
\end{align}

\textbf{\underline{Item 2}}: 
To show that $\varphi^\gamma$ changes sign from positive to negative only once in $(0,1)$, 
first we take its derivative and rearrange:
\begin{align}
    \frac{d}{dv}\varphi^\gamma(v)
    &=-\gamma(1-v)^{\gamma-1}+\gamma(\gamma-1)(1-v)^{\gamma-2}v \log v-\gamma(1-v)^{\gamma-1}\log v -\gamma(1-v)^{\gamma-1}
    \\
    &=\gamma(1-v)^{\gamma-2}\left((v-1)+(\gamma-1)v\log v+(v-1)\log v+(v-1)\right)
    \\
    &=\underbrace{\gamma(1-v)^{\gamma-2}}_{=:t(v)\geq0}\underbrace{\left(2v-2-\log v+\gamma v\log v\right)}_{=:s(v)}.\label{eq:varphiFactorSignChange}
\end{align}
From the above, we can see that $t(v)>0$ for $v\in(0,1)$, thus we only need to show that $s(v)$ changes sign only once in for $\frac{d}{dv}\varphi^\gamma(v)$ to also changes sign once in $(0,1)$. 
To see that, notice that $s(v)$ is convex since its second order derivative is always positive:
\begin{align}
    \frac{d}{dv}s(v) 
    &= 2-\frac{1}{v}+\gamma\log v+\gamma,
    \\
    \frac{d^2}{dv^2}s(v)
    &=\frac{1}{v^2}+\frac{\gamma}{v} > 0 \text{ for all } v\in (0,1).
\end{align}
Also, we can compute the following:
\begin{align}
    s(0) &= \infty, \\
    s(1) &= 0, \\
    \frac{d}{dv}s(1) &= 1+\gamma > 0 \text{ for all } v\in (0,1).
\end{align}
From $s(1)=0$ and $\frac{d}{dv}s(1)>0$, we know that there exists $\hat{v}\in(0,1)$ such that $s(\hat{v})<0$. 
With such $\hat{v}$ and that $s(0)=\infty$,  by the intermediate value theorem, there exists $\tilde{v}\in(0,\hat{v})$ such that $s(\tilde{v})=0$. 
Since $s(v)$ is convex in $v$, this $\tilde{v}$ is unique. Therefore, $s(v)$ changes sign only once (from positive to negative) in the range $(0,1)$ at $\tilde{v}$. 
As a result, $\frac{d}{dv}\varphi^\gamma(v)$ also changes sign only once (from positive to negative) at $\tilde{v}\in(0,1)$ (recall Eq.~\eqref{eq:varphiFactorSignChange}). 
This also implies  $\frac{d}{dv}\varphi^\gamma(v)=0$ at $\tilde{v}$ and that $\varphi^\gamma$ has a unique maximum at $\tilde{v}$.

\textbf{\underline{Item 3}}: First, note that $\varphi^\gamma(0.5)=1$ when $\gamma=0$. Next, we can show that $\varphi^\gamma(0.5)$ is a decreasing function in $\gamma$:
\begin{align}
     \left.\frac{d}{d\gamma}\varphi^\gamma(v)\right|_{v=\frac{1}{2}}
     &= \left[(1-v)^\gamma\log v - (1-v)^{\gamma-1}v\log v -\gamma (1-v)^{\gamma-1}v\log^2 v\right]_{v=\frac{1}{2}}
     \\
     & = -\gamma 0.5^\gamma \log^2 0.5 
     \\
     &\leq 0,\label{eq:propVarphiItem3Ineq}
\end{align}
where the equality in Eq.~\eqref{eq:propVarphiItem3Ineq} holds only when $\gamma=0$. 
This implies that we have $\varphi^\gamma(0.5)<1$ for all $\gamma>0$. 
Since for all $\gamma>0$, we have $\varphi^\gamma(0)=1$ and $\frac{d}{dv}\varphi^\gamma(0)=\infty$, there exists $\check{v}\in(0,1)$ where $\varphi^\gamma(\check{v})>1$. 
Thus, by the intermediate value theorem, there exists $\acute{v}\in(\check{v},0.5)\subset(0, 0.5)$ such that $\varphi^\gamma(\acute{v})=1$. 
Since $\frac{d}{dv}\varphi^\gamma(v)$ changes sign from positive to negative only once and together with the above result from the intermediate value theorem, this value $\acute{v}$ must lie on the descending side of $\varphi^\gamma$ and thus has to be unique.

\textbf{\underline{Item 4}}: Since from \textbf{Item 2}, we know that  $\frac{d}{dv}\varphi^\gamma(v)$ changes sign from positive to negative only once at a point $\tilde{v}\in(0,1)$, \ie, $\frac{d}{dv}\varphi^\gamma(\tilde{v})=0$, thus $\varphi^\gamma$ has a unique maximum at $\tilde{v}$. This fact together with that $\varphi^\gamma(0)=1$ (\textbf{Item 1}) and that there exists $\acute{v}\in(0, 0.5)$ with $\varphi^\gamma(\acute{v})=1$ (\textbf{Item 3}), we conclude that the unique maximum $\tilde{v}$ of $\varphi^\gamma$ must be in the range $(0,\acute{v})$.
\end{proof}

\subsection{Lem.~\ref{lem:hstrictly}: $h^\gamma$ is a strictly increasing function}
\label{app:proof-hstrictly}
We present the following lemma, which is highly useful for proving that the focal loss is classification-calibrated and the transformation $\Psibf$ does not change the classifier's decision rule.
\begin{lemma}
\label{lem:hstrictly}
For any $\gamma > 0$ and $v \in (0,1)$,
\begin{align} 
 h^\gamma(v)&=\frac{v}{\varphi^\gamma(v)} = \frac{v}{(1-v)^\gamma-\gamma(1-v)^{\gamma-1}v\log v}
\end{align}
is a strictly increasing function, meaning that $h^\gamma(u) > h^\gamma(v)$ if and only if $u > v$.
\end{lemma}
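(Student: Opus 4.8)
The plan is to pass to the reciprocal $g(v) := 1/h^\gamma(v) = \varphi^\gamma(v)/v$ and show that $g$ is \emph{strictly decreasing} on $(0,1)$. Since $\varphi^\gamma(v) = (1-v)^{\gamma-1}\bigl[(1-v)-\gamma v\log v\bigr] > 0$ on $(0,1)$ (because $\log v < 0$ there), $h^\gamma$ is positive and well-defined, so strict monotonicity of $g$ is equivalent to the claimed strict monotonicity of $h^\gamma$. Rewriting, $g(v) = (1-v)^{\gamma-1}\bigl(\tfrac{1-v}{v} - \gamma\log v\bigr) = p(v)\,r(v)$ with $p(v) := (1-v)^{\gamma-1}$ and $r(v) := \tfrac{1-v}{v} + \gamma\log(1/v)$. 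Both factors are positive on $(0,1)$, and $r'(v) = -\tfrac{1}{v^2} - \tfrac{\gamma}{v} < 0$, so $r$ is strictly decreasing.

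First I would dispatch the case $\gamma \ge 1$ with no computation: there $p'(v) = (1-\gamma)(1-v)^{\gamma-2} \le 0$, so $p$ is positive and non-increasing while $r$ is positive and strictly decreasing, and a product of a positive non-increasing function with a positive strictly decreasing function is strictly decreasing. Hence $g$ — and therefore $h^\gamma$ — is strictly monotone as claimed.

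The remaining case $0 < \gamma < 1$ is the crux, since there $p$ is \emph{increasing} while $r$ is decreasing, so the product must be analyzed directly. Differentiating, $g'(v) = (1-v)^{\gamma-2}\bigl[(1-\gamma)r(v) + (1-v)r'(v)\bigr]$, and as $(1-v)^{\gamma-2} > 0$ it suffices to show the bracket is negative. Multiplying the bracket by $v^2 > 0$ and rearranging, this is equivalent to
\begin{align*}
\gamma(1-\gamma)\,v^2\,(-\log v) < (1-v)\bigl[1 + (2\gamma-1)v\bigr].
\end{align*}
I would then invoke the elementary bound $-\log v < (1-v)/v$ for $v \in (0,1)$ (strict, since $\log x \le x-1$ with equality only at $x = 1$), which bounds the left-hand side above by $\gamma(1-\gamma)v(1-v)$; dividing through by $(1-v) > 0$ reduces the claim to $v\bigl(1-\gamma-\gamma^2\bigr) < 1$, which is immediate because $v < 1$ and $1-\gamma-\gamma^2 < 1$ for every $\gamma > 0$. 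This yields $g'(v) < 0$ on $(0,1)$, finishing the proof.

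The main obstacle is exactly this case $0 < \gamma < 1$: the factorization $g = p\cdot r$ pits an increasing against a decreasing factor, so monotonicity is not automatic and one must extract from $g'$ an inequality that is both true and simple enough to check — the logarithmic estimate $-\log v < (1-v)/v$ is precisely what collapses the residual polynomial inequality to the trivial $v(1-\gamma-\gamma^2) < 1$. (Alternatively one could prove the displayed inequality for all $\gamma > 0$ at once, observing that for $\gamma \ge 1$ its left-hand side is non-positive while its right-hand side is positive, thereby avoiding the case split entirely.)
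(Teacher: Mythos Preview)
Your proof is correct and takes a genuinely different route from the paper's. The paper computes $(h^\gamma)'$ directly, shows its numerator $\phi_1^\gamma(v)=(1-v)^\gamma+2\gamma v(1-v)^{\gamma-1}-(\gamma-1)\gamma v^2(1-v)^{\gamma-2}\log v$ is positive, and in the hard case $0<\gamma<1$ factors out $(1-v)^{\gamma-2}$, discards the nonnegative term $-\gamma^2 v^2\log v$, and is left with $\phi_2^\gamma(v)=(1-v)^2+2\gamma v(1-v)+\gamma v^2\log v$; it then argues $\phi_2^\gamma>0$ via boundary values and by showing $(\phi_2^\gamma)'<0$, the latter established from \emph{its} boundary values together with convexity of $(\phi_2^\gamma)'$. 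Your approach --- passing to the reciprocal $g=\varphi^\gamma/v$ and factorizing $g=(1-v)^{\gamma-1}\bigl(\tfrac{1-v}{v}-\gamma\log v\bigr)$ --- makes the case $\gamma\ge 1$ a one-liner (product of a positive non-increasing and a positive strictly decreasing function) and, for $0<\gamma<1$, reduces $g'<0$ to the single inequality $\gamma(1-\gamma)v^2(-\log v)<(1-v)\bigl[1+(2\gamma-1)v\bigr]$, which collapses via the standard bound $-\log v<(1-v)/v$ to the trivial $v(1-\gamma-\gamma^2)<1$. The student route is shorter and exposes more structure; the paper's route is more computational but avoids needing the logarithmic estimate. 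One cosmetic point: your ``immediate because $v<1$ and $1-\gamma-\gamma^2<1$'' tacitly uses that when $1-\gamma-\gamma^2<0$ the product is negative, hence $<1$ as well --- worth one clause to make explicit, since for $\gamma\in\bigl(\tfrac{\sqrt{5}-1}{2},1\bigr)$ the coefficient is indeed negative.
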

\begin{proof}
Since $h^\gamma$ is differentiable, it suffices to prove that $h^\gamma$ is strictly increasing if $\frac{d}{dv}h^\gamma(v) > 0$.
By taking the derivative of $h^\gamma(v)$, we have
\begin{align}
    \frac{d}{dv}h^\gamma(v) &= \frac{1}{(1-v)^\gamma-\gamma(1-v)^{\gamma-1}v\log v} - \frac{-2\gamma v(1-v)^{\gamma-1} + (\gamma-1)\gamma v^2 (1-v)^{\gamma-2}\log v-\gamma (1-v)^{\gamma-1}v\log v}{[(1-v)^\gamma-\gamma(1-v)^{\gamma-1}v\log v]^2} \\
    &= \frac{(1-v)^\gamma-\gamma(1-v)^{\gamma-1}v\log v +2\gamma v(1-v)^{\gamma-1} - (\gamma-1)\gamma v^2 (1-v)^{\gamma-2}\log v+\gamma (1-v)^{\gamma-1}v\log v}{[(1-v)^\gamma-\gamma(1-v)^{\gamma-1}v\log v]^2} \\
        &= \frac{(1-v)^\gamma+2\gamma v(1-v)^{\gamma-1} - (\gamma-1)\gamma v^2 (1-v)^{\gamma-2}\log v}{[(1-v)^\gamma-\gamma(1-v)^{\gamma-1}v\log v]^2}.
\end{align}
Since the goal is to show that $\frac{d}{dv}h^\gamma(v) > 0$, the denominator of $\frac{d}{dv}h^\gamma(v)$ can be ignored because $[(1-v)^\gamma-\gamma(1-v)^{\gamma-1}v\log v]^2 > 0$ for $v \in (0,1)$. 
We denote $\phi_1^\gamma$ the numerator of $\frac{d}{dv}h^\gamma$ as follows: 
\begin{align}
    \phi_1^\gamma(v) = (1-v)^\gamma+2\gamma v(1-v)^{\gamma-1} - (\gamma-1)\gamma v^2 (1-v)^{\gamma-2}\log v.
\end{align}
Now it suffices to show that $\phi_1^\gamma(v) > 0$ for all $\gamma > 0$ to prove that $\frac{d}{dv}h^\gamma(v) > 0$.
We split the proof into two cases, which are the case where $\gamma \geq 1$ and $ 0 < \gamma < 1$.

\noindent \textbf{Case 1:} $\gamma \geq 1$. 

It is straightforward to see that $\phi_1^\gamma(v) >0$ because $(1-v)^\gamma > 0$, $2\gamma v(1-v)^{\gamma-1} >0$ and $- (\gamma-1)\gamma v^2 (1-v)^{\gamma-2}\log v \geq 0$ for $v \in (0,1). $
This is because of the sum of two positive quantities and one nonnegative quantity must be positive.

\noindent \textbf{Case 2:} $0 \leq \gamma < 1$. 

We begin by expressing $\phi_1^\gamma(v)$ as follows:
\begin{align}
    \phi_1^\gamma(v) &= (1-v)^\gamma+2\gamma v(1-v)^{\gamma-1} - (\gamma-1)\gamma v^2 (1-v)^{\gamma-2}\log v \\
    &= (1-v)^{\gamma-2} [(1-v)^{2} + 2\gamma v(1-v) - (\gamma-1)\gamma v^2 \log v] \\
    &= (1-v)^{\gamma-2} [(1-v)^{2} + 2\gamma v(1-v) - \gamma^2 v^2 \log v + \gamma v^2 \log v].
\end{align}
Since we want to show that $\phi_1^\gamma(v) > 0$, we can ignore a positive value $(1-v)^{\gamma-2}$ and prove that
\begin{align}
    (1-v)^{2} + 2\gamma v(1-v) - \gamma^2 v^2 \log v + \gamma v^2 \log v > 0.
\end{align}

Since $-\gamma^2 v^2 \log v \geq 0$, we may ignore this term and it is sufficient to prove that 
\begin{align}
    \phi_2^\gamma(v) = (1-v)^{2} + 2\gamma v(1-v)  + \gamma v^2 \log v > 0.
\end{align}

By substitution, we have
\begin{align}
    \phi_2^\gamma(0) = 1, \\
    \phi_2^\gamma(1) = 0.
\end{align}
Then, we show that $\phi_2^\gamma$ is a decreasing function by showing that $\frac{d}{d_v}\phi_2^\gamma(v) < 0$. 

The derivative of $\phi_2^\gamma(v)$ can be expressed as:
\begin{align}
    \frac{d}{d_v}\phi_2^\gamma(v) &= \frac{d}{d_v} (1-v)^{2} + \frac{d}{d_v} 2\gamma v(1-v)  + \frac{d}{d_v} \gamma v^2 \log v \\ 
    &= -2(1-v) + 2\gamma - 4 \gamma v + \gamma v + 2\gamma v \log v \\ 
    &= 2v-2+2\gamma-3\gamma v+2\gamma v\log v
\end{align}

By substitution, we have
\begin{align}
    \frac{d}{d_v}\phi_2^\gamma(0)= -2+2\gamma < 0, \\
    \frac{d}{d_v}\phi_2^\gamma(1)= -\gamma < 0.
\end{align}

Moreover, $\frac{d}{d_v}\phi_2^\gamma$ is convex because
\begin{align}
     \frac{d^2}{dv^2}\phi_2^\gamma(v) &= 2-3\gamma+ 2\gamma[1 + \log v], \\
    \frac{d^3}{dv^3}\phi_2^\gamma(v) &= \frac{2\gamma}{v} > 0.
\end{align}

Based on the fact that $\frac{d}{d_v}\phi_2^\gamma(0) <0$, $\frac{d}{d_v}\phi_2^\gamma(1)<0$, and  $\frac{d}{d_v}\phi_2^\gamma$ is convex, we can conclude that $\frac{d}{d_v}\phi_2^\gamma(v) <0$ for $v\in(0,1)$ because it must be less than $\max(\frac{d}{d_v}\phi_2^\gamma(0), \frac{d}{d_v}\phi_2^\gamma(1))$~\cite{boyd2004convex}. 
Therefore, $\frac{d}{d_v}\phi_2^\gamma(v) < 0$ and thus $\phi_2^\gamma$ is a decreasing function. 

Next, because $\phi_2^\gamma(0)=1$, $\phi_2^\gamma(1)=0$, and $\phi_2^\gamma$ is a decreasing function, we know that $\phi_2^\gamma(v) > 0$ for $v \in (0,1)$, which proves that $\phi_1^\gamma(v) > 0$ for $0< \gamma <1$. 

By combining the results of \textbf{Case 1} and \textbf{Case 2}, we have $\phi_1^\gamma(v) > 0$  for all $\gamma > 0$, which yields $\frac{d}{dv}h^\gamma(v) > 0$. 
Therefore, $h^\gamma$ is a strictly increasing function.
\end{proof}

\subsection{Proof of Thm.~\ref{thm:calib}: Focal loss is classification-calibrated}
\label{app:proof-calib}
\begin{proof}
To prove that the focal loss is classification-calibrated, we combine the result of Thm.~\ref{thm:recover} and the the existing result which suggests that a surrogate loss is classification-calibrated if it has satisfies the strictly order-preserving property~\cite{zhang2004statisticalmulti}. 

The order-preserving property suggests that for any $\vecx$, the pointwise conditional risk $W^{\ell}$ has the risk minimizer $\vecq^{\ell,*}(\vecx)$ such that $ q^{\ell,*}_i(\vecx) < q^{\ell,*}_j(\vecx) \Rightarrow \eta_i(\vecx) < \eta_j(\vecx)$, then a loss function $\ell$ is classification-calibrated~\cite{zhang2004statisticalmulti}.

From Thm.~\ref{thm:recover}, we know that
 \begin{align}
     \veceta(\vecx)=
     \Psibf(\vecq^{\gamma,*}(\vecx)),
    \end{align}
     where 
     \begin{align}
     \small
     \Psibf(\vecv) &= [\Psi_1^\gamma(\vecv), \ldots, \Psi_K^\gamma(\vecv)]^\top
     \hfill, \\
     \Psi_i^\gamma(\vecv) &= \frac{h^\gamma(v_i)}{\sum_{l=1}^K h^\gamma(v_l)}
    \hfill ,\\
        h^\gamma(v)&=\frac{v}{\varphi^\gamma(v)} = \frac{v}{(1-v)^\gamma-\gamma(1-v)^{\gamma-1}v\log v}.
    \end{align}

From Lem.~\ref{lem:hstrictly}, we know that $h^\gamma$ is a strictly increasing function. 
Thus, we have 
\begin{align}
    q^{\gamma,*}_i(\vecx) < q^{\gamma,*}_j(\vecx) \Rightarrow h^\gamma(q^{\gamma,*}_i(\vecx)) < h^\gamma(q^{\gamma,*}_j(\vecx))  .
\end{align}
Given $\vecx$, the denominator of $\Psi^\gamma_i(\vecq^{\gamma,*}(\vecx))$, i.e., $\sum_{l=1}^K q^{\gamma,*}_i(\vecx)$ is identical for all classes.
Also, the numerator of $\Psi^\gamma_i(\vecq^{\gamma,*}(\vecx))$ is a strictly increasing function $h^\gamma(q^{\gamma,*}_i(\vecx))$. 
Based on these facts, we have
\begin{align}
        q^{\gamma,*}_i(\vecx) < q^{\gamma,*}_j(\vecx) \Rightarrow \Psi^\gamma_i(\vecq^{\gamma,*}(\vecx)) < \Psi^\gamma_j(\vecq^{\gamma,*}(\vecx))  .
\end{align}

Since $\Psibf(\vecq^{\gamma,*})$ is equal to $\veceta(\vecx)$ and note that $\Psi^\gamma_i(\vecq^{\gamma,*}(\vecx)) = \eta_i(\vecx)$, we have 
\begin{align}
\label{eq:order-preserving}
      q^{\gamma,*}_i(\vecx) < q^{\gamma,*}_j(\vecx)  \Rightarrow \eta_i(\vecx) < \eta_j(\vecx).
\end{align}

Eq.~\eqref{eq:order-preserving} indicates that the focal loss satisfies the strictly order-preserving property for all $\gamma \geq 0$, which is sufficient to conclude that that the focal loss is classification-calibrated.
\end{proof}
Note that $\argmax_y q^{\gamma,*}_y(\vecx)=\argmax_y \eta_y(\vecx)$ indicates that the decision rule of the focal risk minimizer is equivalent to that of the Bayes-optimal classifier.
As a result, the Bayes-optimal classifier can be achieved by minimizing the focal risk minimizer, \ie, $R^{\zoloss}(f^{\vecq^{\gamma,*}}) = R^{\zoloss}(f^{\zoloss,*})$. 

\subsection{Proof of Thm.~\ref{thm:not-proper}: Focal loss is not strictly proper}
\label{app:proof-not-proper}
\begin{proof}
Recall that a loss $\ell: \Delta^K \times \Delta^K \to \R$ is strictly proper if $\ell(\vecu,\vecv)$ is minimized if and only if $\vecu=\vecv$ by the definition of strict properness. 
We will prove that the focal loss is not strictly proper for all $\gamma >0$ by showing a counterexample that the focal loss can be minimized when $\vecu \neq \vecv$. 

By the definition of the focal loss:
\begin{equation}
    \focalloss(\vecu, \vecv) = -\sum_{i=1}^K v_i(1-u_i)^\gamma \log(u_i).
\end{equation}
For any $\vecx$, we have
\begin{align}
    \focalloss(\vecq(\vecx), \veceta(\vecx)) &= -\sum_{y=1}^K \eta_y(\vecx) (1-q_y(\vecx))^\gamma \log(q_y(\vecx)) \\
    &= \sum_{y \in \domy} \eta_y(\vecx) \focalloss\big( \vecq(\vecx), \vece_y \big) \\
    &=  W^{\focalloss} \big(\vecq(\vecx); \veceta(\vecx) \big).
\end{align}
It can be observed that $\focalloss(\vecq(\vecx), \veceta(\vecx))$ coincides with the pointwise conditional risk w.r.t. the focal loss $ W^{\focalloss}\big(\vecq(\vecx); \veceta(\vecx) \big)$. 
Note that the simplex $\vecq$ that minimizes $ W^{\focalloss}$ is the focal risk minimizer $\vecq^{\gamma,*}$. 
Based on Thm.~\ref{thm:recover}, we know that although $\vecq^{\gamma,*}(\vecx)$ minimizes $\focalloss(\vecq(\vecx), \veceta(\vecx))$, we have $\vecq^{\gamma,*}\neq \veceta$ because $\Psibf$ that transforms $\vecq^{\gamma,*}$ to the true class-posterior probability is not an identity function unless $\gamma \neq 0$. 
This counterexample is sufficient to conclude that the focal loss is not strictly proper for $\gamma > 0$ since the focal loss $\focalloss(\vecu, \vecv)$ can be minimized when $\vecu \neq \vecv$, which contradicts the definition of strict properness.
\end{proof}
\subsection{Proof of Prop.~\ref{thm:focalcorrect}: Where risk minimizer correctly gives the true class-posterior probability}
\label{app:proof-focalcorrect}
\begin{proof}
Recall $\subunif = \{\vecv \in \Delta^K: v_i \in \{0, \max_j v_j\}\}$. 
From Thm.~\ref{thm:recover}, we know that  $\Psibf(\vecv) = [\Psi_1^\gamma(\vecv), \ldots, \Psi_K^\gamma(\vecv)]^\top$, $\Psi_i^\gamma(\vecv) = \frac{h^\gamma(v_i)}{\sum_{l=1}^K h^\gamma(v_l)}$,
        $h^\gamma(v)=\frac{v}{\varphi^\gamma(v)} = \frac{v}{(1-v)^\gamma-\gamma(1-v)^{\gamma-1}v\log v}$.
We will show that If $\vecq^{\gamma,*}(\vecx) \in \subunif$, then $\vecq^{\gamma,*}(\vecx) = \veceta(\vecx)$ by proving that $\Psi_i^\gamma(\vecq^{\gamma,*}(\vecx)) = \eta_i(\vecx) = q_i^{\gamma,*}(\vecx)$ for all $i \in \domy$.

\textbf{Case 1:} $\Psi_i^\gamma(\vecq^{\gamma,*}(\vecx)) =q_i^{\gamma,*}(\vecx)=0$. 

Since 
\begin{align}
    h^\gamma(q_i^{\gamma,*}(\vecx)) = \frac{q_i^{\gamma,*}(\vecx)}{\varphi^\gamma(q_i^{\gamma,*}(\vecx))} = \frac{0}{\varphi^\gamma(0)} = \frac{0}{1} = 0,
\end{align}
we have
\begin{align}
    \Psi_i^\gamma(\vecq^{\gamma,*}(\vecx)) &= \frac{h^\gamma(q_i^{\gamma,*}(\vecx))}{\sum_{l=1}^K h^\gamma(q_l^{\gamma,*}(\vecx))} \\ &=  \frac{h^\gamma(0)}{\sum_{l=1}^K h^\gamma(q_l^{\gamma,*}(\vecx))} \\ &= 0 \\
    &= q_i^{\gamma,*}(\vecx).
\end{align}

\textbf{Case 2:} $\Psi_i^\gamma(\vecq^{\gamma,*}(\vecx)) =q_i^{\gamma,*}(\vecx)=\max_j q_j^{\gamma,*}(\vecx)$.

Since other $q_y$ where $y\neq i$ can be  only either  $q_y=q_i=\max_j v_j$ or $q_y = 0$. 
Let $k \leq K$ be a number of classes that is non-zero.
Thus, we have $\max_j q_j^{\gamma,*}(\vecx) = \frac{1}{k}$.
It can be observed that 
\begin{align}
    \sum_{l=1}^K h^\gamma(q_l^{\gamma,*}(\vecx)) &= kh^\gamma(\frac{1}{k}) + (K-k) h^\gamma(0) \\
    &= kh^\gamma(\frac{1}{k}) + (K-k)0 \\
    &= kh^\gamma(\frac{1}{k}).
\end{align}
Therefore, we have
\begin{align}
    \Psi_i^\gamma(\vecq^{\gamma,*}(\vecx)) &= \frac{h^\gamma(\frac{1}{k})}{\sum_{l=1}^K h^\gamma(q_l^{\gamma,*}(\vecx))} \\
    &= \frac{h^\gamma(\frac{1}{k})}{kh^\gamma(\frac{1}{k})} \\ &= \frac{1}{k} \\
    &= \max_j q_j^{\gamma,*}(\vecx)\\
     &=q_i^{\gamma,*}(\vecx).
\end{align}

We can focus on a uniform vector over a subset of classes and zeros otherwise. 

\noindent By combining \textbf{Case 1} and \textbf{Case 2}, we can conclude that if $\vecq^{\gamma,*}(\vecx) \in \subunif$, we have $\vecq^{\gamma,*}(\vecx) = \veceta(\vecx)$.

\end{proof}
\subsection{Proof of Thm.~\ref{thm:underover}: Focal loss gives under/overconfident classifier}
\label{app:proof-underover}
\begin{proof}
Consider the focal loss $\focalloss$ where $\gamma > 0$.
Define $\tau^{\gamma}_\mathrm{oc} = \argmax_v \varphi^\gamma(v)$ and $\tau^{\gamma}_\mathrm{uc} \in (0,1)$ such that $\varphi^\gamma(\tau^{\gamma}_\mathrm{uc})=1$. 
If $\max_y q^{\gamma,*}_y(\vecx) \neq \frac{1}{K}$, we have
\begin{enumerate}
    \item $0 < \tau^{\gamma}_\mathrm{oc} < \tau^{\gamma}_\mathrm{uc} < 0.5$.
    \item $\vecq^{\gamma,*}$ is $\veceta$OC for $  \max_y q^{\gamma,*}_y(\vecx) \in(0, \tau^{\gamma}_\mathrm{oc}]$.
    \item $\vecq^{\gamma,*}$ is $\veceta$UC for $  \max_y q^{\gamma,*}_y(\vecx) \in[\tau^{\gamma}_\mathrm{uc}, 1 )$ .
\end{enumerate}
\textbf{\underline{Item 1}}: $0 < \tau^{\gamma}_\mathrm{oc} < \tau^{\gamma}_\mathrm{uc} < 0.5$.

For any $\gamma > 0$, from Lem.~\ref{lem:varphiprop} tells us that $\varphi^\gamma(0)=1$, there exists a unique $\acute{v} \in (0, 0.5)$ such that $\varphi^\gamma(\acute{v})=1$, and $\frac{d}{dv}\varphi^\gamma(v)<0$ for all $v>\tilde{v}$. $\varphi^\gamma$ also has a unique maximum in the range $(0,\acute{v})$. 
We will use these facts prove that $0 < \tau^{\gamma}_\mathrm{oc} < \tau^{\gamma}_\mathrm{uc} < 0.5$ by showing that $\tau^{\gamma}_\mathrm{uc} < 0.5$ and $0 < \tau^{\gamma}_\mathrm{oc} < \tau^{\gamma}_\mathrm{uc}$.

\textbf{Case 1:} $\tau^{\gamma}_\mathrm{uc} < 0.5$.

From the definition of $\tau^{\gamma}_\mathrm{uc}$, we can conclude that $\tau^{\gamma}_\mathrm{uc} = \acute{v} < 0.5$ since $\acute{v} \in (0, 0.5)$. 
Note that there does not exist $v\in [0.5,1)$ such that $\varphi^\gamma(v)=1$ because $\varphi^\gamma(0.5) < 1$ for all $\gamma > 0$. 
This is due to the fact that $\frac{d}{dv}\varphi^\gamma(v)<0$ for all $v>\tilde{v}$, i.e., $\varphi^\gamma$ is a decreasing function when $v > \acute{v}$.

\textbf{Case 2:} $0 < \tau^{\gamma}_\mathrm{oc} < \tau^{\gamma}_\mathrm{uc}$. 

From the definition of $\tau^{\gamma}_\mathrm{oc}$, it is a unique maximum of $\varphi^\gamma$ and therefore  $\tau^{\gamma}_\mathrm{oc} \in (0,\acute{v})$, which is equivalent to $ \tau^{\gamma}_\mathrm{oc} \in (0,\tau^{\gamma}_\mathrm{uc})$. 
This implies that $0< \tau^{\gamma}_\mathrm{oc} < \tau^{\gamma}_\mathrm{uc}$.

By combining \textbf{Case 1} and \textbf{Case 2}, we can conclude that $0 < \tau^{\gamma}_\mathrm{oc} < \tau^{\gamma}_\mathrm{uc} < 0.5$.

\noindent \textbf{\underline{Item 2}}: $\vecq^{\gamma,*}$ is $\veceta$OC if $  \max_y q^{\gamma,*}_y(\vecx) \in(\frac{1}{K}, \tau^{\gamma}_\mathrm{oc}]$.
Recall the definition of $\veceta$OC:
\begin{equation}
     \max_y q^{\ell,*}_{y}(\vecx) - \max_y \eta_y(\vecx) > 0.
\end{equation}

Without loss of generality let us define $i \in \argmax_y q^{\ell,*}_{y}(\vecx)$. Since the focal loss is classification-calibrated suggested in Thm.~\ref{thm:calib}, the max-index of $q^{\ell,*}$ and $\veceta$ are identical.

From Thm.~\ref{thm:recover}, we can rewrite $ \eta_i(\vecx)$ as
\begin{align}
     \eta_i(\vecx) &= \Psi_i^\gamma(\vecq^{\gamma,*}(\vecx)) \\
     &=  \Psi_i^\gamma(\vecq^{\gamma,*}(\vecx))\\
     &=  \frac{h^\gamma(q_i^{\gamma,*}(\vecx))}{\sum_{l=1}^K h^\gamma(q_l^{\gamma,*}(\vecx))}.
\end{align}
Note that 
\begin{align}
        h^\gamma(v)&=\frac{v}{\varphi^\gamma(v)}\hspace{-.5px}=\hspace{-.5px}\frac{v}{(1-v)^\gamma-\gamma(1-v)^{\gamma-1}v\log v}.
\end{align}

By the definition of $\veceta$OC, we have
\begin{align}
q_i^{\gamma,*}(\vecx) - \frac{h^\gamma(q_i^{\gamma,*}(\vecx))}{\sum_{l=1}^K h^\gamma(q_l^{\gamma,*}(\vecx))} &> 0 \\
    q_i^{\gamma,*}(\vecx) &> \frac{h^\gamma(q_i^{\gamma,*}(\vecx))}{\sum_{l=1}^K h^\gamma(q_l^{\gamma,*}(\vecx))} \\
    &=\frac{\frac{q_i^{\gamma,*}(\vecx)}{\varphi^\gamma(q_i^{\gamma,*}(\vecx))}}{\sum_{l=1}^K h^\gamma(q_l^{\gamma,*}(\vecx))} \\
\end{align}

By dividing both sides by $q_i^{\gamma,*}(\vecx)$, we have.
\begin{align}
    1 &> \frac{\frac{1}{\varphi^\gamma(q_i^{\gamma,*}(\vecx))}}{\sum_{l=1}^K h^\gamma(q_l^{\gamma,*}(\vecx))} \\
   \sum_{l=1}^K h^\gamma(q_l^{\gamma,*}(\vecx)) &> \frac{1}{\varphi^\gamma(q_i^{\gamma,*}(\vecx))}
\end{align}

By the definition of $h^\gamma$, we have
\begin{align}
    \sum_{l=1}^K q_l^{\gamma,*}(\vecx) \frac{1}{\varphi^\gamma(q_l^{\gamma,*}(\vecx))} &> \frac{1}{\varphi^\gamma(q_i^{\gamma,*}(\vecx))}
    \label{eq:app-uccond}
\end{align}
Next we will prove that Ineq.~\eqref{eq:app-uccond} is true to verify that $\vecq^{\gamma,*}(\vecx)$ is $\veceta$OC.

Because the convex combination is not smaller the minimum value~\cite{boyd2004convex}, to prove that Ineq.~\eqref{eq:app-uccond} holds, it suffices to show that for all $l \in \domy$:
\begin{align}
    \frac{1}{\varphi^\gamma(q_l^{\gamma,*}(\vecx))} \geq \frac{1}{\varphi^\gamma(q_i^{\gamma,*}(\vecx))},
\end{align}
which is equivalent to
\begin{align}
    \varphi^\gamma(q_i^{\gamma,*}(\vecx)) \geq \varphi^\gamma(q_l^{\gamma,*}(\vecx)),
    \label{eq:app-uccondsimple}
\end{align}
and there exists at least one $l$ such that the strict inequality holds, i.e., $\varphi^\gamma(q_i^{\gamma,*}(\vecx)) > \varphi^\gamma(q_l^{\gamma,*}(\vecx))$. 
Since $\vecq^{\gamma,*}(\vecx) \notin \subunif$, there exists $l$ such that $q_l^{\gamma,*}(\vecx)) \neq q_i^{\gamma,*}(\vecx))$ and $q_l^{\gamma,*}(\vecx))$ is non-zero. 
From Lem.~\ref{lem:varphiprop}, we know that $\frac{d}{dv}\varphi^\gamma(v)>0$ for all $v \in (0,\tau^{\gamma}_\mathrm{oc})$. 
Therefore $\varphi^\gamma$ is an increasing function in $(0, \tau^{\gamma}_\mathrm{oc}]$.
Since $q_l^{\gamma,*}(\vecx) \leq q_i^{\gamma,*}(\vecx)$, for $q_i^{\gamma,*}(\vecx) \in (\frac{1}{K}, \tau^{\gamma}_\mathrm{oc}]$, we have
\begin{align}
    \varphi^\gamma(q_i^{\gamma,*}(\vecx)) \geq \varphi^\gamma(q_l^{\gamma,*}(\vecx))
\end{align}
for all $l \in \domy$, where the equality holds only if $q_i^{\gamma,*}(\vecx) = q_j^{\gamma,*}(\vecx)$.
Thus, we have proven that Ineq.\eqref{eq:app-uccond} holds,
which indicates that $\vecq^{\gamma,*}$ is $\veceta$OC if $  \max_y q^{\gamma,*}_y(\vecx) \in(\frac{1}{K}, \tau^{\gamma}_\mathrm{oc}]$

\textbf{\underline{Item 3}}:  $\vecq^{\gamma,*}$ is $\veceta$UC if $  \max_y q^{\gamma,*}_y(\vecx) \in[\tau^{\gamma}_\mathrm{uc}, 1)$. 
Recall the definition of $\veceta$UC:
\begin{equation}
     \max_y q^{\ell,*}_{y}(\vecx) - \max_y \eta_y(\vecx) < 0.
\end{equation}

By using exactly the same technique for proving \textbf{Item 2} but we flip the sign to validate $\veceta$-underconfidence. 
We know that it suffices to prove that for all $l \in \domy$,
\begin{align}
    \varphi^\gamma(q_i^{\gamma,*}(\vecx)) \leq \varphi^\gamma(q_l^{\gamma,*}(\vecx))
\end{align}
and there exists at least one $l$ such that the strict inequality holds, i.e., $\varphi^\gamma(q_i^{\gamma,*}(\vecx)) < \varphi^\gamma(q_l^{\gamma,*}(\vecx))$.

Recall Lem.~\ref{lem:varphiprop} indicates that  $\frac{d}{dv}\varphi^\gamma(v)>0$ for all $v \in (0,\tau^{\gamma}_\mathrm{oc})$.
Thus, $\varphi^\gamma$ is an increasing function in $(0,\tau^{\gamma}_\mathrm{oc})$. 
Moreover, $\frac{d}{dv}\varphi^\gamma(v)<0$ for all $v \in (\tau^{\gamma}_\mathrm{oc},1)$.
Thus, $\varphi^\gamma$ is a decreasing function in $(\tau^{\gamma}_\mathrm{oc}, 1)$.
Also, we know that $\varphi^\gamma(0) = \varphi^\gamma(\tau^{\gamma}_\mathrm{uc})=1$.
Furthermore, Thm.~\ref{def:underover} indicates that $\tau^{\gamma}_\mathrm{oc}<\tau^{\gamma}_\mathrm{uc}$. 
We will use these facts to prove our result.

Since $\varphi^\gamma(0) = \varphi^\gamma(\tau^{\gamma}_\mathrm{uc})=1$, $\varphi^\gamma$ is increasing in $(0,\tau^{\gamma}_\mathrm{oc})$, and decreasing in $(\tau^{\gamma}_\mathrm{oc}, 1)$, we have $ \varphi^\gamma(v) > 1$ for $v\in (0,\tau^{\gamma}_\mathrm{uc})$. 
Note that $q_l^{\gamma,*}(\vecx) \leq q_i^{\gamma,*}(\vecx)$, for $q_i^{\gamma,*}(\vecx)$.
We know that $\vecq^{\gamma,*}(\vecx) \notin \subunif$, which implies that there exists $l$ such that $q_l^{\gamma,*}(\vecx)) \neq q_i^{\gamma,*}(\vecx))$ and $q_l^{\gamma,*}(\vecx))$ is non-zero. 
For such $q_l^{\gamma,*}(\vecx)$ , we have
\begin{align}
    \varphi^\gamma(q_i^{\gamma,*}(\vecx)) < \varphi^\gamma(q_l^{\gamma,*}(\vecx)).
\end{align}

As a result, for $q_i^{\gamma,*}(\vecx) \in [\tau^{\gamma}_\mathrm{uc}, 1)$, we have
\begin{align}
    \varphi^\gamma(q_i^{\gamma,*}(\vecx)) \leq \varphi^\gamma(q_l^{\gamma,*}(\vecx))
\end{align}
for all $l \in \domy$, and there are only two possibilities that the equality holds:
\begin{enumerate}
\item $q_i^{\gamma,*}(\vecx) = q_j^{\gamma,*}(\vecx)$ 
\item $q_i^{\gamma,*}(\vecx) = \tau^{\gamma}_\mathrm{uc}$ and $q_j^{\gamma,*}(\vecx) = 0$.
\end{enumerate}
Note that there exists $q_l^{\gamma,*}(\vecx))$ such that $   \varphi^\gamma(q_i^{\gamma,*}(\vecx)) < \varphi^\gamma(q_l^{\gamma,*}(\vecx))$ since $\vecq^{\gamma,*}(\vecx) \notin \subunif$. 
Thus, we can conclude that $\vecq^{\gamma,*}$ is $\veceta$UC if $  \max_y q^{\gamma,*}_y(\vecx) \in [\tau^{\gamma}_\mathrm{uc}, 1)$.

\end{proof}
\subsection{Proof of Cor.~\ref{cor:onehalfunder}: Focal loss gives an underestimation of the true class-posterior probability}
\label{app:proof-onehalfunder}
\begin{proof}
From Thm.~\ref{thm:underover}, we know that for all $\gamma >0$, we have  $0 \leq \tau^{\gamma}_\mathrm{oc} < \tau^{\gamma}_\mathrm{uc} < 0.5$.
Moreover, Thm.~\ref{thm:underover} also tells us that $\vecq^{\gamma,*}$ is $\veceta$UC if $ \max_y q^{\gamma,*}_y(\vecx) \in [\tau^{\gamma}_\mathrm{uc}, 1)$.
Since $\tau^{\gamma}_\mathrm{uc} < 0.5$, it is straightforward to see that $(0.5, 1) \subset [\tau^{\gamma}_\mathrm{uc}, 1)$ and therefore $\vecq^{\gamma,*}$ is $\veceta$UC if $  \max_y q^{\gamma,*}_y(\vecx) \in(0.5, 1)$.
\end{proof}
\subsection{Proof of Cor.~\ref{cor:binaryunder}: Focal loss gives underconfident classifier in binary classification}
\begin{proof}
\label{app:proof-binaryunder}
We know that $\max_y \vecq^{\gamma,*}(\vecx) \geq 0.5$ in binary classification since $\max_y \vecq^{\gamma,*}(\vecx) \geq \frac{1}{K}$, and $K=2$. 
As a result, unless the label distribution is uniform, i.e., $\max_y \eta_y (\vecx) = 0.5$ or the label distribution is deterministic, i.e., $\max_y \eta_y (\vecx) = 1$, $\vecq^{\gamma,*}$ must always be $\veceta$UC since $\max_y \eta_y (\vecx) \in (0.5,1)$ as proven in Cor.~\ref{cor:onehalfunder}.
\end{proof}
\subsection{Proof of Prop.~\ref{prop:maintain}: Transformation $\Psibf$ preserves the decision rule}
\label{app:proof-maintain}
\begin{proof}
From Thm.~\ref{thm:recover}, we know that
     $\Psibf(\vecv) = [\Psi_1^\gamma(\vecv), \ldots, \Psi_K^\gamma(\vecv)]^\top$, $\Psi_i^\gamma(\vecv) = \frac{h^\gamma(v_i)}{\sum_{l=1}^K h^\gamma(v_l)}$, and
        $h^\gamma(v)=\frac{v}{\varphi^\gamma(v)}$. 
Note that the denominator of $\Psi^\gamma_i(\vecv$, i.e., $\sum_{l=1}^K v_i(\vecv)$ is identical for all classes.
Thus, it suffices to look at the numerator to determine which index has the largest value. 
It can be observed that the numerator $\Psi^\gamma_i(\vecq^{\gamma,*}(\vecx))$ is a strictly increasing function $h^\gamma(q^{\gamma,*}_i(\vecx))$, as proven in Lem.~\ref{lem:hstrictly}. 
Based on these facts, we have
\begin{align}
         \argmax_i\ \Psi_i^\gamma(\vecv)= \argmax_i v_i.
\end{align}
Note that this holds for any simplex input, not only the output of the focal risk minimizer $\vecq^{\gamma,*}$.
\end{proof}

\subsection{Alternative proof of Cor.~\ref{cor:onehalfunder}: $\vecq^{\gamma,*}$ is $\veceta$UC if $ \frac{1}{2} \leq \max_y q_y^{\gamma,*}(\vecx) < 1$ and $\vecq^{\gamma,*}(\vecx) \notin \subunif$}
\label{app:proof-alt-onehalfunder}
Here, we present an alternative proof of Cor.~\ref{cor:onehalfunder}, which is independent from Thm.~\ref{thm:underover}, Lem.~\ref{lem:varphiprop} and Lem.~\ref{lem:hstrictly}.

\begin{proof}
Let $q=\max_y q_y^{\gamma,*}(\vecx)$ be the highest score in the simplex output from the focal risk minimizer $\vecq^{\gamma,*}$ and $\eta = \max_y \eta_y(\vecx)$ be the true class-posterior probability of the most probable class. 
 
In the multiclass cases, we show that $\vecq^{\gamma, *}$ is $\veceta$UC when $\frac{1}{2} \leq q < 1$. That is, for $q \in [\frac{1}{2},1)$, we have
\begin{align}
\eta-q > 0.
\end{align}

From Thm.~\ref{thm:recover}, we can replace $\eta$ and rewrite the above inequality as
\begin{align}
    \frac{\frac{q}{(1-q)^\gamma-\gamma(1-q)^{\gamma-1}q\log q}}{\sum_{l=1}^K\frac{q_l}{(1-q_l)^\gamma-\gamma(1-q_l)^{\gamma-1}q_l\log q_l}}-q 
    &> 0
    \\
    \frac{q}{(1-q)^\gamma-\gamma(1-q)^{\gamma-1}q\log q}
    & 
    > q\sum_{l=1}^K\frac{q_l}{(1-q_l)^\gamma-\gamma(1-q_l)^{\gamma-1}q_l\log q_l}
    \\
    \frac{1}{\varphi(q)}
    & 
    > \sum_{l=1}^K\frac{q_l}{\varphi(q_l)},\label{eq:multIneq}
\end{align}
where we $\varphi(q)=(1-q)^\gamma-\gamma(1-q)^{\gamma-1}q\log q$. 
Note that since $q_l<1$, we have $\varphi(q_l)>0$. 
In words, Ineq.~\eqref{eq:multIneq} says it suffices to show that $\frac{1}{\varphi(q)}$ is larger than the \emph{convex} combination of $\frac{1}{\varphi(q_l)}$. 
This is true when $\frac{1}{\varphi(q)}\geq\frac{1}{\varphi(q_l)}$ for all $l$, and at least one strict inequality holds for some $l \in \domy$.
Note that this is a sufficient condition \emph{but not a necessary condition} to prove that $\vecq^{\gamma,*}$ is $\veceta$UC.
Nevertheless, we will show that this condition holds and thus Cor.~\ref{cor:onehalfunder} can be proven based on this condition.
Thus, in the following proof, we focus on showing that when $q \in [\frac{1}{2},1)$ then for all $l$, we have
\begin{align}
    \varphi(q_l)\geq \varphi(q).\label{eq:varphi_ineq}
\end{align}
To show the above inequality, we split the proof into two cases:
\begin{itemize}
\item \textbf{Case 1}:  $\varphi(a) > \varphi(b)$ when $1 > b>a\geq\frac{1}{2}$, \ie, $\varphi$ is a decreasing function when its argument is greater or equal to $\frac{1}{2}$.
\item \textbf{Case 2}: $\varphi(a) > \varphi(\frac{1}{2})$ when $a \in [0, \frac{1}{2})$, \ie, when $a< \frac{1}{2}$, $\varphi(a)$ is lowerbounded by $\varphi(\frac{1}{2})$. 
\end{itemize}
If the above two cases hold, then Ineq.~\eqref{eq:varphi_ineq} holds. 
Next, we present the proof for each case.
 
\textbf{Case 1}: $\varphi(a) > \varphi(b)$ when $1 > b>a\geq\frac{1}{2}$.
 
To show $\varphi(a)$ is decreasing for $a \in [ \frac{1}{2},1)$, we will show that its derivative is smaller than zero. This can be seen by manipulating the derivative of $\varphi$ as follows:
\begin{align}
    \frac{d}{da}\varphi(a)
    &=-\gamma(1-a)^{\gamma-1}+\gamma(\gamma-1)(1-a)^{\gamma-2}a\log a-\gamma(1-a)^{\gamma-1}\log a-\gamma(1-a)^{\gamma-1}
    \\
    &=\gamma(1-a)^{\gamma-2}(-(1-a)+(\gamma-1)a\log a-(1-a)\log a-(1-a))
    \\
    &=\gamma(1-a)^{\gamma-2}((\gamma a-1)\log a-2+2a)
    \\
    &=\underbrace{\gamma(1-a)^{\gamma-2}}_{>0}(\underbrace{\gamma a\log a}_{<0} + \underbrace{(-\log a - 2 + 2a)}_{<0})\label{eq:prf_multi_case1}
    \\
    & <0.
\end{align}
In Eq.~\eqref{eq:prf_multi_case1}, we know $-\log a - 2 + 2a\leq0$ by noting its derivative is $2-\frac{1}{a}\geq0$ for $a \in [ \frac{1}{2},1)$, meaning it is an increasing function in $a \in [ \frac{1}{2},1)$. Therefore, its supremum must be at $1$ with the value $-\log 1 - 2 + 2(1)=0$. Since its supremum is zero, all other values in the range must be smaller than zero, making the term negative.
This suffices to show that $\varphi(a)$ is decreasing for $a \in [ \frac{1}{2},1)$, and concludes that \textbf{Case 1} holds.
 
\textbf{Case 2}: $\varphi(a) > \varphi(\frac{1}{2})$ when $a \in [0, \frac{1}{2})$.

Let us reexpress the above expression, 
\begin{align}
    \varphi(a) 
    &> \varphi(\frac{1}{2})
    \\
    (1-a)^\gamma-\gamma(1-a)^{\gamma-1}a\log a 
    & > \frac{1}{2^\gamma}-\gamma\frac{1}{2^\gamma}\log \frac{1}{2}
    \\
    (1-a)^\gamma(1-\gamma(1-a)^{-1}a\log a)
    & > \frac{1}{2^\gamma}\left(1-\gamma\log \frac{1}{2}\right)
    \\
    \frac{(1-a)^\gamma(1-\gamma(1-a)^{-1}a\log a)}{\frac{1}{2^\gamma}\left(1-\gamma\log \frac{1}{2}\right)} & > 1
    \\
    \underbrace{\frac{(1-a)^\gamma\left(\frac{1}{2}+a\right)^\gamma}{\frac{1}{2^\gamma}}}_{=:s_1}
    \underbrace{\frac{(1-\gamma(1-a)^{-1}a\log a)}{\left(\frac{1}{2}+a\right)^\gamma\left(1-\gamma\log \frac{1}{2}\right)}}_{=:s_2}
    & > 1,\label{eq:prf_multi_case2_split}
\end{align}
 where in the last inequality we multiply $\left(\frac{1}{2}+a\right)^\gamma$ to both numerator and denominator\footnote{We came up with $\left(\frac{1}{2}+a\right)^\gamma$ by trial and error.}. 
 Next, we show that both $s_1>1$ and $s_2>1$.
 
 For $s_1$, we can see that for $a\in(0, \frac{1}{2})$,
 \begin{align}
     s1 
     & = \frac{(1-a)^\gamma\left(\frac{1}{2}+a\right)^\gamma}{\frac{1}{2^\gamma}}
     \\
     & = \frac{(1-a)^\gamma(1+2a)^\gamma}{\underbrace{\frac{1}{2^\gamma}2^\gamma}_{=1}}
     \\
     & = (1-a)^\gamma(1+2a)^\gamma
     \\
     & = ((1-a)(1+2a))^\gamma
     \\
     & = (1-a+2a-2a^2)^\gamma
     \\
     & = (1+a-2a^2)^\gamma\label{eq:eq:prf_multi_case2_s1}
     \\
     & > 1.
 \end{align}
 The last inequality comes from the fact that $1+a-2a^2>0$, since it is quadratic with a negative coefficient on the $a^2$ term (\ie, an upside-down U-curve) and it takes value of $1$ at $a=0$ and $a=\frac{1}{2}$. Thus, this quadratic term is larger than $1$ in the range $a\in(0, \frac{1}{2})$. This shows that in Ineq.~\eqref{eq:prf_multi_case2_split} we have $s_1>1$ for $a\in(0, \frac{1}{2})$.
 
 For $s_2$, our goal is to show that for $a\in(0, \frac{1}{2})$, we have
 \begin{align}
     \frac{1-\gamma(1-a)^{-1} a\log a}{\left(\frac{1}{2}+a\right)^\gamma\left(1-\gamma\log \frac{1}{2}\right)} > 1.\label{eq:prf_multi_case2_s2}
 \end{align}
 This can be rearranged as
  \begin{align}
     1-\frac{\gamma a\log a}{(1-a)}
     &>  \left(\frac{1}{2}+a\right)^\gamma\left(1-\gamma\log \frac{1}{2}\right)
     \\
     1-\frac{\gamma a\log a}{(1-a)} - \left(\frac{1}{2}+a\right)^\gamma\left(1-\gamma\log \frac{1}{2}\right)
     &> 0
     \\
     \frac{1}{\left(\frac{1}{2}+a\right)^\gamma}-\frac{\gamma a\log a}{(1-a)\left(\frac{1}{2}+a\right)^\gamma} - \left(1-\gamma\log \frac{1}{2}\right)
     & > 0.\label{eq:prf_multi_case2_s1_g}
 \end{align}
 Let us denote the left-hand side of the above in equality as the following:
 \begin{align}
 g(\gamma)=\frac{1}{\left(\frac{1}{2}+a\right)^\gamma}-\frac{\gamma a\log a}{(1-a)\left(\frac{1}{2}+a\right)^\gamma} - \left(1-\gamma\log \frac{1}{2}\right).
 \end{align}
 Based on the above derivation, showing Ineq.~\eqref{eq:prf_multi_case2_s2} is equivalent to showing that $g(\gamma)>0$ for $\gamma>0$ and $0<a<\frac{1}{2}$. To show $g(\gamma)>0$ for $\gamma>0$, we observe the following properties of $g$:
 \begin{itemize}
    \item $g$ is convex in $\gamma$. This is because it is a sum of convex functions,~\ie, $\left(\frac{1}{2}+a\right)^{-\gamma}$ and  $-\frac{\gamma a\log a}{(1-a)\left(\frac{1}{2}+a\right)^\gamma}$ can be shown to be convex by noting their second order derivatives are positive for all $\gamma>0$, while $- (1-\gamma\log \frac{1}{2})$ is linear in $\gamma$.
     \item $g(0)=0$.
 \end{itemize}
With these properties, we only need to show that the derivative of $g$ at $0$ is nonnegative,~\ie, $\frac{dg}{d\gamma}(0)\geq0$, since convexity guarantees that the derivative of $g$ would only increase in $\gamma$ for $\gamma\geq 0$ and since $g(0)=0$ we will have $g(\gamma)\geq0$ for all $\gamma\geq0$. 
Thus, next, we show that
\begin{align}
0 
&\leq \frac{d}{d\gamma}g(0) \label{eq:prf_multi_case2_dg0}
\\
 &= \left[-\frac{\log(\frac{1}{2}+a)}{(\frac{1}{2}+a)^\gamma}-\frac{a\log a}{(1-a)(\frac{1}{2}+a)^\gamma}+\frac{\gamma a\log a \log(\frac{1}{2}+a)}{(1-a)(\frac{1}{2}+a)^\gamma}+\log\frac{1}{2}\right]_{\gamma=0}
 \\
 &=-\log\left(\frac{1}{2}+a\right)-\frac{a\log a}{(1-a)}+\log\frac{1}{2}
  \\
 &=-\log(1+2a)-\frac{a\log a}{(1-a)}.\label{eq:prf_multi_case2_dg}
\end{align}
Showing the above inequality is equivalent to showing
\begin{align}
    -(1-a)\log(1+2a)-a\log a \geq 0.
\end{align}
Let us denote the above expression as $\phi(a)=-(1-a)\log(1+2a)-a\log a$, where its derivative and second order derivative are
\begin{align}
    \frac{d}{da}\phi(a)
    &=\log(1+2a)-\frac{2(1-a)}{1+2a}-\log a-1
    \\
    &=\log(1+2a)-\frac{3}{1+2a}-\log a, \text{ and}\label{eq:prf_multi_case2_da1}
    \\
    \frac{d^2}{da^2}\phi(a)
    &= \frac{2}{1+2a}+\frac{6}{(1+2a)^2}-\frac{1}{a}
    \\
    &=\frac{4a-1}{a(1+2a)^2}.\label{eq:prf_multi_case2_da2}
\end{align}
From Eq.~\eqref{eq:prf_multi_case2_da2}, we can see that $\frac{d^2}{da^2}\phi(a) \geq 0$ for $a\in[\frac{1}{4},\frac{1}{2}]$ and $\frac{d^2}{da^2} \phi(a) \leq 0$ for  $a\in(0,\frac{1}{4}]$. This means $\frac{d\phi}{da}$ is increasing for $a\in[\frac{1}{4},\frac{1}{2}]$ and decreasing for $a\in(0,\frac{1}{4}]$. Also, we see that $\frac{d}{da}\phi(0)=\infty$, $\frac{d}{da}\phi(\frac{1}{4})=-0.2082$, and $\frac{d}{da}\phi(\frac{1}{2})=-0.1137$. By the intermediate value theorem, there exists an $a_0\in(0, \frac{1}{4})$ such that $\frac{d}{da}\phi(a_0)=0$, which means $\phi$ is increasing for $a\in (0,a_0)$ and decreasing for $a\in (a_0,\frac{1}{2})$. Therefore, the minimum of $\phi$ in $[0,\frac{1}{2}]$ can only be at either $0$ or $\frac{1}{2}$. 
By computing the values at both ends, we have
\begin{align}
    \min\left\{\phi(0), \phi\left(\frac{1}{2}\right)\right\}=\min\{0, 0\} = 0 \geq 0.
\end{align}
Thus, $\phi(a) \geq 0$ for $a\in[0,\frac{1}{2}]$. To wrap up, this result means Ineq.~\eqref{eq:prf_multi_case2_dg0},~\ie, $\frac{dg}{d\gamma}(0)\geq0$, holds. Together with $g$ being convex and $g(0)=0$, this means $g(a)\geq0$ for $a\in[0,\frac{1}{2}]$, making Ineq.~\eqref{eq:prf_multi_case2_s1_g} and therefore Ineq.~\eqref{eq:prf_multi_case2_s2}
 hold. This completes the proof for $s_2$.
 
 With the above lowerbounds of $s_1$ and $s_2$, we conclude that \textbf{Case 2} holds.

By combining \textbf{Case 1} and \textbf{Case 2}, we have proven Cor.~\ref{cor:onehalfunder}, which states that $\vecq^{\gamma,*}(\vecx)$ is $\eta$UC when $\frac{1}{2} \leq q < 1$ if $\vecq^{\gamma,*}(\vecx) \notin \subunif$.
 \end{proof}
 \subsection{Alternative proof of Cor.~\ref{cor:binaryunder}: $\veceta$UC property for $\vecq^{\gamma,*}$ in binary classification where $K=2$}
 \label{app:proof-alt-binaryunder}
Here, we present an alternative proof of Cor.~\ref{cor:binaryunder}, which is independent from Thm.~\ref{thm:recover}, Thm.~\ref{thm:underover}, Lem.~\ref{lem:varphiprop} and Lem.~\ref{lem:hstrictly}.
Therefore, this proof does not explicitly utilize the function $\varphi$.

\begin{proof}
Let $q=\max_y q_y^{\gamma,*}(\vecx)$ be the highest score in the simplex output from the focal risk minimizer $\vecq^{\gamma,*}$ and $\eta = \max_y \eta_y(\vecx)$ be the true class-posterior probability of the most probable class.  
Note that $q > \frac{1}{2}$ since $\vecq^{\gamma,*}(\vecx) \notin \subunif$ and there are only two classes. 
Also, in binary classification, it is restricted that the output of the simplex of the other class must be $1-q$ since $\vecq^{\gamma,*}(\vecx) \in \Delta^K$.

First, we relate $\eta$ to the focal risk minimizer $\vecq^{\gamma,*}(\vecx)$. 
Since we know that the focal loss is classification calibrated from our Thm.~\ref{thm:calib}, we have $\argmax_y q^{\gamma,*}_y(\vecx) = \argmax_y \eta_y(\vecx)$ .
As a result, the pointwise conditional risk w.r.t. the focal risk minimizer $\vecq^{\gamma,*}$ can be written as follows:
\begin{align}
    W^{\focalloss} \big(\vecq^{\gamma,*}(\vecx); \veceta(\vecx) \big) &= \sum_{y \in \domy} \eta_y(\vecx) \focalloss\big( \vecq(\vecx), \vece_y \big) \\
    &= \eta(-(1-q)^\gamma \log q) + (1-\eta)(-q^\gamma \log (1-q)).
\end{align}
For simplicity in the binary case, let us define
\begin{align}
    W^{\focalloss,*} \big(q,\eta) = \eta(-(1-q)^\gamma \log q) + (1-\eta)(-q^\gamma \log (1-q))
\end{align}
Since the focal loss is differentiable, $W^{\focalloss,*} \big(q,\eta)$ is also differentiable, and we can express $\frac{d}{dq} W^{\focalloss,*}$ as
\begin{align}
    \frac{d}{dq} W^{\focalloss,*} \big(q,\eta)&= \eta \left[(1-q)^{\gamma-1}\gamma \log q - \frac{(1-q)^\gamma}{q}\right]+(1-\eta) \left[ \frac{q^\gamma}{1-q}- q^{\gamma-1}\gamma \log (1-q)\right] \\ 
    &= \eta \left[ (1-q)^{\gamma-1}\gamma \log q - \frac{(1-q)^\gamma}{q} - \frac{q^\gamma}{1-q} + q^{\gamma-1}\gamma \log (1-q) \right] +\frac{q^\gamma}{1-q} - q^{\gamma-1}\gamma \log (1-q)
\end{align}
Since $\vecq^{\gamma,*}$ minimizes $W^{\focalloss}$ by the definition of the focal risk minimizer. 
Also, the binary pointwise conditional risk $W^{\focalloss,*} \big(q,\eta)$ is convex in $q$ given $\eta$.
We know that $\frac{d}{dq} W^{\focalloss,*} \big(q,\eta)=0$ and the following equation holds:
\begin{align}
  \eta \left[ (1-q)^{\gamma-1}\gamma \log q - \frac{(1-q)^\gamma}{q} - \frac{q^\gamma}{1-q} + q^{\gamma-1}\gamma \log (1-q) \right] +\frac{q^\gamma}{1-q} - q^{\gamma-1}\gamma \log (1-q) = 0 \\
   \eta \left[ (1-q)^{\gamma-1}\gamma \log q - \frac{(1-q)^\gamma}{q} - \frac{q^\gamma}{1-q} + q^{\gamma-1}\gamma \log (1-q) \right]  =   q^{\gamma-1}\gamma \log (1-q) -\frac{q^\gamma}{1-q} \\
   \eta   =   \frac{q^{\gamma-1}\gamma \log (1-q) -\frac{q^\gamma}{1-q}}{ (1-q)^{\gamma-1}\gamma \log q - \frac{(1-q)^\gamma}{q} - \frac{q^\gamma}{1-q} + q^{\gamma-1}\gamma \log (1-q) } \\
   \eta = \frac{ \frac{q^\gamma}{1-q} -q^{\gamma-1}\gamma \log (1-q)}{-(1-q)^{\gamma-1}\gamma \log q + \frac{(1-q)^\gamma}{q} + \frac{q^\gamma}{1-q} - q^{\gamma-1}\gamma \log (1-q) }.
\end{align}
Therefore, we can relate $\eta$ and $q$ as follows:
\begin{align}
\label{eq:recover-binary}
    \eta = \frac{\frac{q^\gamma}{1-q} - \gamma q^{\gamma-1}\log(1-q)}{\frac{q^\gamma}{1-q} - \gamma q^{\gamma-1}\log(1-q) + \frac{(1-q)^\gamma}{q} - \gamma (1-q)^{\gamma-1}\log q}.
\end{align}
As a result, the true class-posterior probability of the most probable class $\eta$ can be recovered from the highest score of the simplex output from the focal risk minimizer $q$ by Eq.~\eqref{eq:recover-binary}. 

Next, to prove that $\vecq^{\gamma^*}$ is $\veceta$UC, it suffices to prove that $\eta - q > 0$ for $q>\frac{1}{2}$, which means the minimizer of the focal loss underestimates the true class-posterior probability of the most probable class, \ie, $q < \eta$. 
Thus for $q > \frac{1}{2}$, we have
\begin{align}
\frac{\frac{q^\gamma}{1-q} - \gamma q^{\gamma-1}\log(1-q)}{\frac{q^\gamma}{1-q} - \gamma q^{\gamma-1}\log(1-q) + \frac{(1-q)^\gamma}{q} - \gamma (1-q)^{\gamma-1}\log q} - q &> 0 \\
  \frac{\frac{q^\gamma}{1-q} - \gamma q^{\gamma-1}\log(1-q) -
  q \left(\frac{q^\gamma}{1-q} - \gamma q^{\gamma-1}\log(1-q) + \frac{(1-q)^\gamma}{q} - \gamma (1-q)^{\gamma-1}\log q \right)
  }{\frac{q^\gamma}{1-q} - \gamma q^{\gamma-1}\log(1-q) + \frac{(1-q)^\gamma}{q} - \gamma (1-q)^{\gamma-1}\log q} &>0. \label{eq:deno-binary-ignore}
\end{align}

Since every term in the denominator of Ineq.~\eqref{eq:deno-binary-ignore} is positive, it can be ignored because it is sufficient to only check if the numerator is positive to prove that the fraction is positive. 
Therefore, we have
\begin{align}
\frac{q^\gamma}{1-q} - \gamma q^{\gamma-1}\log(1-q) -
   q \left(\frac{q^\gamma}{1-q} - \gamma q^{\gamma-1}\log(1-q) + \frac{(1-q)^\gamma}{q} - \gamma (1-q)^{\gamma-1}\log q \right)
    &>0 \\
    (1-q) \left[ \frac{q^\gamma}{1-q} - \gamma q^{\gamma-1}\log(1-q) \right] - q \left[ \frac{(1-q)^\gamma}{q} - \gamma (1-q)^{\gamma-1}\log q \right] &>0 \\ 
     (1-q) \left[ \frac{q^\gamma}{1-q} - \gamma q^{\gamma-1}\log(1-q) \right] -  \left[ (1-q)^\gamma - \gamma q (1-q)^{\gamma-1}\log q \right] &>0 \\ 
      q^\gamma - \gamma (1-q) q^{\gamma-1}\log(1-q)  -   (1-q)^\gamma + \gamma q (1-q)^{\gamma-1}\log q &>0 .  \label{eq:underbinary}
\end{align}
Next, we will show that Ineq.~\eqref{eq:underbinary} always holds when $q>\frac{1}{2}$ for all $\gamma>0$. 
We split the proof into two cases, the first case when $\gamma \geq 1$ and the second case when $0<\gamma<1$.

\noindent \textbf{Case 1} ($\gamma \geq 1$): We can also express Ineq.~\eqref{eq:underbinary} as

\begin{align}
q^\gamma - \gamma (1-q) q^{\gamma-1}\log(1-q)  -   (1-q)^\gamma + \gamma q (1-q)^{\gamma-1}\log q &>0 \\
q^\gamma - \gamma (1-q) q^{\gamma-1}\log(1-q)  &>(1-q)^\gamma - \gamma q (1-q)^{\gamma-1}\log q \\
\frac{q^{\gamma-1}}{(1-q)^{\gamma-1}}
\frac{\left[q-\gamma(1-q)\log(1-q)\right]}{\left[(1-q)-\gamma q\log q\right]} &> 1. 
\label{ineq:proof_case1_rewrite}
\end{align}
Because $q>\frac{1}{2}>1-q$ and $\gamma >= 1$, we have 
\begin{align}
\label{ineq:proof_case1_cond1}
    \frac{q^{\gamma-1}}{(1-q)^{\gamma-1}} \geq 1.
\end{align}   
Next, we will show that  the following inequality
\begin{align}
\label{ineq:proof_case1_cond2}
    \frac{q-\gamma(1-q)\log(1-q)}{(1-q)-\gamma q\log q} &> 1
\end{align}
can be rewritten as
\begin{align}
\frac{q-\gamma(1-q)\log(1-q)}{(1-q)-\gamma q\log q} &> 1 \\
q-\gamma(1-q)\log(1-q) &> (1-q)-\gamma q\log q \\
(q-\gamma(1-q)\log(1-q))-((1-q)-\gamma q\log q) &> 0\\
\underbrace{2q-1}_{> 0}+\gamma\underbrace{(-(1-q)\log(1-q)+ q\log q)}_{> 0} &> 0. 
\end{align} 
Therefore, to prove that Ineq.~\eqref{ineq:proof_case1_cond2} holds, it suffices to prove that $2q-1 > 0$ and $-(1-q)\log(1-q)+ q\log q > 0$.
Since $q > \frac{1}{2}$, it is straightforward to see that $2q-1 > 0$ for all $q > \frac{1}{2}$.
Next, we prove that $-(1-q)\log(1-q)+ q\log q > 0$:
\begin{align}
 -(1-q)\log(1-q)+ q\log q &> 0 \\
 -\log(1-q)+ \frac{1}{1-q} q\log q &> 0 \\
 -\frac{1}{q}\log(1-q)+ \frac{1}{1-q} \log q &> 0 \\
 -\frac{1}{q} \left[ - \sum_{i=1}^{\infty} \frac{q^i}{i} \right] + \frac{1}{1-q} \left[- \sum_{i=1}^{\infty} \frac{(1-q)^i}{i}   \right] &> 0 \\
\left[ \sum_{i=1}^{\infty} \frac{q^{i-1}}{i} \right] +  \left[- \sum_{i=1}^{\infty} \frac{(1-q)^{i-1}}{i}   \right] &> 0 \\
  \sum_{i=1}^{\infty} \frac{q^{i-1} - (1-q)^{i-1}}{i}   &> 0.\\
  \sum_{i=2}^{\infty} \frac{q^{i-1} - (1-q)^{i-1}}{i}   &> 0.
\end{align}
Note that we can write $\log (1-q) = \sum_{i=1}^{\infty} - \frac{q^i}{i}$ (Maclaurin series) and $q^{i-1} - (1-q)^{i-1} >0$ since $q>\frac{1}{2}$ for $i \geq 2$. 
Thus, by combining Ineqs.~\eqref{ineq:proof_case1_cond1} and~\eqref{ineq:proof_case1_cond2}, we prove that Ineq.~\eqref{eq:underbinary} holds for the case where $\gamma \geq 1$.
Therefore, we can conclude that if $\vecq^{\gamma,*}(\vecx) \notin \subunif$, then $\vecq^{\gamma,*}$ is $\veceta$UC for all $\gamma \geq 1$.

\noindent \textbf{Case 2} ($0<  \gamma < 1 $): First, we add $\gamma q - \gamma q + \gamma (1-q) - \gamma (1-q)$, which equals to zero, to Ineq.~\eqref{eq:underbinary}:
\begin{align}
\label{ineq:proof_case2_rewrite}
    q^\gamma - \gamma (1-q) q^{\gamma-1}\log(1-q)  -   (1-q)^\gamma + \gamma q (1-q)^{\gamma-1}\log q + \gamma q - \gamma q + \gamma (1-q) - \gamma (1-q)>0.
\end{align}

We then split Ineq.~\eqref{ineq:proof_case2_rewrite} into two parts, which are
\begin{align}
\label{ineq:proof_case2_cond1} q^\gamma  -   (1-q)^\gamma  - \gamma q + \gamma (1-q) >0,
\end{align}
and
\begin{align}
- \gamma (1-q) q^{\gamma-1}\log(1-q)   + \gamma q (1-q)^{\gamma-1}\log q + \gamma q  - \gamma (1-q)>0\\
-  (1-q) q^{\gamma-1}\log(1-q)   +  q (1-q)^{\gamma-1}\log q +  q  -  (1-q)>0.  
\label{ineq:proof_case2_cond2}  
\end{align}
We will prove \textbf{Case 2} by showing that both Ineqs.~\eqref{ineq:proof_case2_cond1} and~\eqref{ineq:proof_case2_cond2} hold simultaneously, which suggests that Ineq.~\eqref{eq:underbinary} must also hold.

Next, we will show that Ineq.~\eqref{ineq:proof_case2_cond1} holds. 
First, it can be observed that the left-hand side of Ineq.~\eqref{ineq:proof_case2_cond1} is zero when $q=\frac{1}{2}$.
Next, the derivative with respect to $q$ can be expressed as follows: 
\begin{align}
    \frac{d}{dq} \left[q^\gamma  -   (1-q)^\gamma  - \gamma q + \gamma (1-q)\right] = \gamma q^{\gamma-1} + \gamma (1-q)^{\gamma-1} - 2\gamma q,
\end{align}
which can be shown to always be positive for $0 < \gamma < 1$: 
\begin{align}
\gamma q^{\gamma-1} + \gamma (1-q)^{\gamma-1} - 2\gamma q > 0 \\
 q^{\gamma-1} +  (1-q)^{\gamma-1} - 2 q > 0\\
 \frac{1}{q^{1-\gamma}} - q +\frac{1}{(1-q)^{1-\gamma}} -q> 0 \label{eq:case2-binary}
\end{align}
because  $\frac{1}{q^{1-\gamma}} >1 $ and $\frac{1}{(1-q)^{1-\gamma}} > 1$ for $0<\gamma<1$. 
Therefore, $\frac{1}{q^{1-\gamma}}+\frac{1}{(1-q)^{1-\gamma}}>2 > 2q$, which indicates that Ineq.~\eqref{eq:case2-binary} holds.
By combining the fact that the left-hand side of Ineq.~\eqref{ineq:proof_case2_cond1} is zero when $q=\frac{1}{2}$, and the left-hand side of Ineq.~\eqref{ineq:proof_case2_cond1} is increasing as $q$ increases.
Ineq.~\eqref{ineq:proof_case2_cond1} must be more than zero for all $q>\frac{1}{2}$.
Thus, we can conclude that  Ineq.~\eqref{ineq:proof_case2_cond1} holds for all $q > \frac{1}{2}$.

Next, we will show that Ineq.~\eqref{ineq:proof_case2_cond2} holds by showing that its left-hand side of Ineq.~\eqref{ineq:proof_case2_cond2} is an increasing function w.r.t.~$\gamma$. 

\noindent The following derivative 
\begin{align}
    \frac{d}{d \gamma}\left[  -  (1-q) q^{\gamma-1}\log(1-q)   +  q (1-q)^{\gamma-1}\log q +  q  -  (1-q)\right]
\end{align}
is equal to
\begin{align}
   -(1-q)q^{\gamma -1} \log q \log (1-q) + q(1-q)^{\gamma-1} \log(1-q) \log q.
\end{align}
We can see that
\begin{align}
-(1-q)q^{\gamma -1} \log q \log (1-q) + q(1-q)^{\gamma-1} \log(1-q) \log q &> 0 \\
-(1-q)q^{\gamma -1}+ q(1-q)^{\gamma-1} &> 0 .  \label{ineq:proof_case2_derivative_inequality}
\end{align}
Ineq.~\eqref{ineq:proof_case2_derivative_inequality} holds because $q>1-q$ and $(1-q)^{\gamma-1} > q^{\gamma-1}$ for $0<\gamma < 1$. 
This suggests that the left-hand side of Ineq.~\eqref{ineq:proof_case1_cond2} is an increasing function w.r.t. $\gamma$. 
Next, we show that Ineq.~\eqref{ineq:proof_case2_rewrite} holds for $\gamma = 0$. By substitute $\gamma=0$ to Ineq.~\eqref{ineq:proof_case2_cond2}, we have
\begin{align}
-   \frac{(1-q)\log(1-q)}{q}   +   \frac{q\log q}{(1-q)} +  q  -  (1-q)>0 \\
-   (1-q) \left(\frac{\log(1-q)}{q} +1\right)   +  q\left(\frac{\log q}{(1-q)} +  1 \right) >0.
\end{align}
By using the fact that $\log(q) = -\sum_{i=1}^\infty \frac{(1-q)^i}{i}$, we have
\begin{align}
    -   (1-q) \left(\frac{-\sum_{i=1}^\infty \frac{q^i}{i}}{q} +1\right)   +  q\left(\frac{-\sum_{i=1}^\infty \frac{(1-q)^i}{i}}{(1-q)} +  1 \right) >0.
\end{align}
Dividing by $q(1-q)$ gives
\begin{align}
-   \frac{1}{q} \left(\frac{-\sum_{i=1}^\infty \frac{q^i}{i}}{q} +1\right)   +  \frac{1}{1-q} \left(\frac{-\sum_{i=1}^\infty \frac{(1-q)^i}{i}}{(1-q)} +  1 \right) &>0 \\
-\frac{1}{q^2} \left(- \left(\sum_{i=1}^\infty \frac{q^i}{i}\right) +q\right)   +  \frac{1}{(1-q)^2} \left(\left(-\sum_{i=1}^\infty \frac{(1-q)^i}{i}\right) +  (1-q) \right) &>0 \\
-\frac{1}{q^2} \left(-\sum_{i=2}^\infty \frac{q^i}{i}\right)   +  \frac{1}{(1-q)^2} \left(-\sum_{i=2}^\infty \frac{(1-q)^i}{i}  \right) &>0 \\
\left(\sum_{i=2}^\infty \frac{q^{i-2}}{i}\right)   +   \left(-\sum_{i=2}^\infty \frac{(1-q)^{i-2}}{i}  \right) &>0 \\
\left(\sum_{i=2}^\infty \frac{q^{i-2}- (1-q)^{i-2}}{i}  \right) &>0. 
\label{ineq:case2_cond2_final}
\end{align}
Since $q>1-q$ and $i>=2$, Ineq.~\eqref{ineq:case2_cond2_final} holds, which indicates that Ineq.~\eqref{ineq:proof_case2_cond2} holds for $\gamma=0$. 
And we know that the left-hand size of Ineq.~\eqref{ineq:proof_case2_cond2} is an increasing function as $\gamma$ increases for $0<\gamma <1$ from Ineq.~\eqref{ineq:proof_case2_derivative_inequality}. 
As a result, Ineq.~\eqref{ineq:case2_cond2_final} holds for $0<\gamma < 1$.
Therefore, we can conclude that if $\vecq^{\gamma,*}(\vecx) \notin \subunif$, then $\vecq^{\gamma,*}(\vecx)$ is $\veceta$UC for all $0<\gamma<1$.   

By combining \textbf{Case 1} and \textbf{Case 2}, we have proven Cor.~\ref{cor:binaryunder}, which states that $\vecq^{\gamma,*}(\vecx)$ is $\eta$UC unless it is uniform or a one-hot vector.
\end{proof}
\newpage
\section{Evaluation metrics}
\label{app:metrics}
In practice, hard labels are given.
As a result, it is not straightforward to measure the quality of class-posterior probability estimation. 
In this section, we review evaluation metrics that are used in this paper to evaluate the quality of prediction confidence given hard labels.
\subsection{Expected calibration error (ECE)}
The expected calibration error can be defined as follows~\cite{naeini2015obtaining,guo2017calibration}:
\begin{align}
\label{eq:ece}
    \text{ECE}=\frac{1}{n}\sum_{j=1}^{N_B} |B_j|\left\vert\text{acc}(B_j)-\text{conf}(B_j)\right\vert,
\end{align}
where $|B_j|$ is the number of samples in bin $j$, $\text{acc}(B_j)$ and $\text{conf}(B_j)$ are the average accuracy and the average confidence of data in bin~$j$, and $N_B$ is the number of bins.
In this paper, we used $N_B=10$.
To allocate the data in the different bins, we use the following procedure.
First, we rank the test data by the prediction confidence of our classifier. 
Then we put the test data in the bin based on their confidence scores.
More precisely, the first bin is for data with prediction confidence in the range of $0-0.1$, the second one is in the range of $0.1-0.2$, and the tenth one is in the range of $0.9-1.0$.  
Then, we can calculate ECE by calculating the average accuracy and the average confidence of data in each bin and then combine them based on Eq.~\eqref{eq:ece}.
\subsection{Negative log-likelihood (NLL)}
Given data with hard labels $\mathcal{D} = \{(\vecx_i, y_i) \}_{i=1}^n$ and a classifier $\vecq$, one can calculate NLL as follows:
\begin{equation}
    \text{NLL} = - \sum_{i=1}^{n} \log(q_{y_i}(\vecx_i)).
\end{equation}
It is simple since it can be calculated in a pointwise manner. 
\subsection{Classwise expected calibration error (CW-ECE)}
CW-ECE is defined as follows~\cite{kull2019beyond}:
\begin{align}
    \text{CW-ECE} = \frac{1}{K}\sum_{l=1}^K \sum_{j=1}^{N_B}\frac{|B_{j,l}|}{n} \left\vert \text{prop}_l(B_{j,l})-\text{conf}_l(B_{j,l}) \right\vert,
\end{align}
where $|B_{j,l}|$ denotes the number of samples in bin $(j,l)$, $\text{prop}_l(B_{j,l})$ denotes the proportion of class $l$ in bin $B_{j,l}$, and $\text{conf}_l(B_{j,l})$ denotes the average confidence of predicting class $l$ in bin $B_{j,l}$. 
Unlike ECE, we rank the confidence score based on each class, not the maximum confidence score.
Moreover, CW-ECE does not use the average accuracy but simply the proportion of the class in each bin to compare with the average confidence of the class in that bin.
The original motivation of CW-ECE is to mitigate the problem of ECE that it only focuses on the class with the highest confidence. 
\newpage
\section{Additional experimental results}
\label{app:exp}
In this section, we provide additional experimental results.
We report the classification error of all ResNets we used on SVHN, CIFAR10, CIFAR10-s, and CIFAR100. 
Then, we report ECE and accuracy on additional datasets to support our claim in Sec.~\ref{sec:exp-dis} using a linear-in-input model and neural networks with one hidden layer.
Next, we report reliability diagrams for all ResNet models we used for SVHN, CIFAR10, CIFAR10-s, and CIFAR100. 
Finally, we report the performance of all ResNet models we used for SVHN, CIFAR10, CIFAR10-s, and CIFAR100 using three evaluation metrics, which are the expected calibration error (ECE), negative log-likelihood (NLL), and classwise expected calibration error (CW-ECE).
\subsection{Classification error of different datasets and models}
Table~\ref{app:tab-err} shows the classification error of all ResNet models we used with different training paradigms on SVHN, CIFAR10, CIFAR10-s, and CIFAR100.
Since the purpose of showing this table is for reference, we do not bold any numbers.
\begin{table}
\caption{Mean and standard error of the classification error under different datasets, training paradigms, models, and losses.
The purpose of showing this table is for reference.}
\label{app:tab-err}
\centering
\vspace{0.5em}
\begin{tabular}{lllcccc}
\toprule
Dataset &Training paradigm& Model & CE & FL-$1$ & FL-$2$& FL-$3$ \\
\midrule
\multirow{8}{*}{SVHN}&\multirow{4}{*}{\textbf{Standard}/\textbf{TS}}
& ResNet8
& $3.83(0.07)$ & $3.98(0.08)$ & $4.06(0.15)$ 
& $4.19(0.11)$ \\
&& ResNet20
& $2.01(0.07)$ & $2.05(0.05)$ & $2.09(0.05)$ 
& $2.16(0.05)$ \\
&& ResNet44
& $1.92(0.07)$ & $1.92(0.08)$ & $2.02(0.05)$ 
& $2.05(0.06)$ \\
&& ResNet110
& $1.90(0.09)$ & $1.99(0.08)$ & $2.00(0.08)$ 
& $2.07(0.07)$ \\
\cmidrule{2-7}
&\multirow{4}{*}{\textbf{LS}}
& ResNet8
& $3.83(0.10)$ & $3.90(0.14)$ & $3.90(0.06)$ 
& $3.94(0.07)$ \\
&& ResNet20
& $1.98(0.05)$ & $2.04(0.06)$ & $2.02(0.07)$ 
& $2.05(0.06)$ \\
&& ResNet44
& $1.87(0.04)$ & $1.88(0.06)$ & $1.90(0.07)$ 
& $1.94(0.08)$ \\
&& ResNet110
& $1.89(0.11)$ & $1.89(0.05)$ & $1.90(0.03)$ 
& $1.90(0.09)$ \\
\hline
\multirow{8}{*}{CIFAR10}&\multirow{4}{*}{\textbf{Standard}/\textbf{TS}}
& ResNet8
& $12.58(0.39)$ & $12.83(0.47)$ & $13.28(0.45)$ 
& $13.95(0.59)$ \\
&& ResNet20
& $7.42(0.40)$ & $7.53(0.31)$ & $7.74(0.42)$ 
& $8.01(0.29)$ \\
&& ResNet44
& $6.21(0.44)$ & $6.17(0.36)$ & $6.47(0.31)$ 
& $6.78(0.37)$ \\
&& ResNet110
& $5.68(0.32)$ & $5.87(0.36)$ & $5.82(0.33)$ 
& $6.42(0.61)$ \\
\cmidrule{2-7}
&\multirow{4}{*}{\textbf{LS}}
& ResNet8
& $12.58(0.30)$ & $12.51(0.33)$ & $13.06(0.37)$ 
& $13.12(0.25)$ \\
&& ResNet20
& $7.53(0.32)$ & $7.53(0.42)$ & $7.46(0.38)$ 
& $7.69(0.32)$ \\
&& ResNet44
& $6.41(0.29)$ & $6.35(0.28)$ & $6.49(0.36)$ 
& $6.67(0.31)$ \\
&& ResNet110
& $5.82(0.33)$ & $5.86(0.39)$ & $5.75(0.33)$ 
& $5.96(0.38)$ \\
\hline
\multirow{8}{*}{CIFAR10-s}&\multirow{4}{*}{\textbf{Standard}/\textbf{TS}}
& ResNet8
& $25.56(0.63)$ & $26.10(0.89)$ & $26.45(0.80)$ 
& $27.36(0.93)$ \\
&& ResNet20
& $21.44(0.51)$ & $21.98(0.66)$ & $22.47(0.52)$ 
& $23.01(0.48)$ \\
&& ResNet44
& $20.97(0.52)$ & $21.60(0.39)$ & $22.11(0.61)$ 
& $23.44(1.13)$ \\
&& ResNet110
& $24.91(1.54)$ & $27.08(2.28)$ & $27.78(2.36)$ 
& $28.12(1.76)$ \\
\cmidrule{2-7}
&\multirow{4}{*}{\textbf{LS}}
& ResNet8
& $24.98(0.82)$ & $25.48(0.78)$ & $26.16(0.54)$ 
& $26.46(0.90)$ \\
&& ResNet20
& $21.20(0.43)$ & $21.37(0.58)$ & $21.80(0.47)$ 
& $22.15(0.54)$ \\
&& ResNet44
& $21.09(0.37)$ & $21.28(0.39)$ & $21.95(0.71)$ 
& $22.48(0.68)$ \\
&& ResNet110
& $27.88(2.65)$ & $29.25(3.23)$ & $27.92(1.51)$ 
& $28.45(2.07)$ \\
\hline
\multirow{8}{*}{CIFAR100}&\multirow{4}{*}{\textbf{Standard}/\textbf{TS}}
& ResNet8
& $41.05(0.53)$ & $41.40(0.50)$ & $41.55(0.52)$ 
& $42.07(0.33)$ \\
&& ResNet20
& $31.71(0.46)$ & $32.38(0.40)$ & $32.96(0.34)$ 
& $33.40(0.45)$ \\
&& ResNet44
& $28.88(0.42)$ & $29.47(0.41)$ & $29.30(0.37)$ 
& $29.93(0.45)$ \\
&& ResNet110
& $25.10(0.68)$ & $25.30(0.91)$ & $25.58(0.81)$ 
& $26.22(0.53)$ \\
\cmidrule{2-7}
&\multirow{4}{*}{\textbf{LS}}
& ResNet8
& $41.42(0.42)$ & $41.44(0.31)$ & $41.72(0.44)$ 
& $42.15(0.38)$ \\
&& ResNet20
& $31.99(0.20)$ & $32.25(0.37)$ & $32.70(0.40)$ 
& $32.94(0.26)$ \\
&& ResNet44
& $28.89(0.38)$ & $28.84(0.50)$ & $29.17(0.43)$ 
& $29.25(0.47)$ \\
&& ResNet110
& $25.12(0.70)$ & $24.84(0.42)$ & $25.15(0.37)$ 
& $24.90(0.50)$ \\
\bottomrule
\end{tabular}
\end{table}
\FloatBarrier

\subsection{Experiments on additional datasets}
Here, we conducted experiments on additional $30$ datasets on models that arguably to be simpler than ResNet~\cite{he2016deep}, which are a linear-in-input model and a neural network with one hidden layer.

\textbf{Datasets:}
We used datasets from the UCI Data Repository~\cite{dataset3}.
We also used MNIST~\cite{lecun1998mnist}, Kuzushiji-MNIST (KMNIST)~\cite{clanuwat2018deep}, and Fashion-MNIST~\cite{xiao2017fashion}.

\textbf{Models:} 
We used two models in our experiments on additional datasets, which are a linear-in-input model and a neural network with one hidden layer.
Note that we do not compare the performance across the model but to validate the effectiveness of using the transformation $\Psibf$ for all settings.

\textbf{Methods:} 
We compared the models that use $\Psibf$ after the softmax layer to those that do not.
Note that both methods have the same accuracy since $\Psibf$ does not affect the decision rule (Prop.~\ref{prop:maintain}). 
We used the focal loss with $\gamma\in \{0, 1, 2, 3\}$ in this experiment, and conducted 10 trials for each experiment setting. 

\textbf{Evaluation metrics:} 
Since true class-posterior probability labels are not available, a common practice is to use ECE to evaluate the quality of prediction confidence~\cite{naeini2015obtaining,guo2017calibration}.  
We used $10$ as the number of bins.
CE denotes the method that uses the cross entropy loss and FL-$\gamma$ denotes the method that uses the focal loss with $\gamma$.
We denote FL-$\gamma$-$\Psibf$ for a method that applies $\Psibf$ to the output of the softmax layer before evaluating the confidence score when training with the focal loss.
Note that CE is equivalent to FL-$0$ and thus $\Psibf$ does not change the output of the trained classifier.

\textbf{Hyperparameters:} 
For both linear-in-input model and a neural network, the number of epochs was $50$ for all datasets and the batch size was $64$.
We used SGD with momentum of $0.9$, where the learning rate was $0.01$.
The weight decay parameter was $10^{-3}$.
For a neural network model, the number of nodes in a hidden layer was $64$. 

\textbf{Discussion:}
Tables~\ref{tab:linear-ece} and ~\ref{tab:mlp-ece} show ECE on all datasets using the same model for different losses.
It can be observed that $\Psibf$ can effectively improve the performance of the classifier trained with the focal loss in most cases although not all the cases. 
We can also see that as the $\gamma$ increases, ECE also increases for most datasets as well.
Tables~\ref{tab:linear-err} and ~\ref{tab:mlp-err} show the classification error for all methods for reference.  

\begin{table*}[t]
\centering
\caption{Mean and standard error of ECE over ten trials (rescaled to $0-100$).
We used \textbf{a linear-in-input model}.
Outperforming methods are highlighted in boldface using one-sided t-test with the significance level $5\%$.
}
\begin{tabular}{lccccccc}
\toprule
Dataset
& CE & FL-$1$ & FL-$2$& FL-$3$& FL-$1$-$\Psibf$ & FL-$2$-$\Psibf$ & FL-$3$-$\Psibf$ \\
\midrule
Australian
& $\mathbf{4.10(1.40)}$ & $8.83(2.37)$ & $15.47(1.46)$
& $19.46(1.85)$ & $\mathbf{4.45(1.84)}$ & $\mathbf{4.55(0.84)}$ & $\mathbf{4.67(1.42)}$ \\
Phishing
& $\mathbf{0.89(0.16)}$ & $6.48(0.53)$ & $12.05(0.42)$
& $16.74(0.67)$ & $\mathbf{0.83(0.23)}$ & $\mathbf{0.96(0.16)}$ & $\mathbf{0.94(0.38)}$ \\
Spambase
& $3.11(0.40)$ & $7.28(0.58)$ & $13.07(0.97)$
& $16.67(0.64)$ & $\mathbf{2.45(0.29)}$ & $\mathbf{2.76(0.52)}$ & $\mathbf{2.65(0.69)}$ \\
Waveform
& $\mathbf{1.42(0.19)}$ & $5.83(0.86)$ & $10.33(0.90)$
& $14.36(0.94)$ & $\mathbf{1.61(0.53)}$ & $1.72(0.36)$ & $1.75(0.20)$ \\
Twonorm
& $\mathbf{0.48(0.11)}$ & $2.51(0.35)$ & $6.13(0.61)$
& $10.05(0.55)$ & $0.56(0.12)$ & $\mathbf{0.52(0.09)}$ & $\mathbf{0.43(0.16)}$ \\
Adult
& $\mathbf{1.21(0.22)}$ & $6.12(0.38)$ & $11.76(0.42)$
& $15.82(0.34)$ & $1.64(0.41)$ & $1.94(0.42)$ & $2.16(0.41)$ \\
Banknote
& $4.37(0.31)$ & $9.93(0.56)$ & $15.69(0.54)$
& $21.07(0.69)$ & $3.43(0.24)$ & $\mathbf{2.99(0.31)}$ & $\mathbf{2.92(0.30)}$ \\
Phoneme
& $\mathbf{4.37(0.62)}$ & $7.25(0.51)$ & $11.92(0.67)$
& $14.60(0.58)$ & $\mathbf{4.29(0.71)}$ & $\mathbf{4.52(0.33)}$ & $\mathbf{4.66(0.44)}$ \\
Magic
& $\mathbf{1.56(0.35)}$ & $9.30(0.58)$ & $14.71(0.59)$
& $18.29(0.16)$ & $\mathbf{1.53(0.22)}$ & $\mathbf{1.64(0.49)}$ & $\mathbf{1.37(0.31)}$ \\
Gisette
& $1.52(0.18)$ & $\mathbf{1.41(0.23)}$ & $\mathbf{1.47(0.20)}$
& $\mathbf{1.50(0.24)}$ & $2.18(0.23)$ & $2.75(0.21)$ & $3.07(0.24)$ \\
USPS
& $\mathbf{0.37(0.11)}$ & $1.82(0.23)$ & $4.80(0.38)$
& $8.19(0.56)$ & $\mathbf{0.36(0.10)}$ & $\mathbf{0.32(0.09)}$ & $\mathbf{0.35(0.12)}$ \\
Splice
& $\mathbf{2.98(0.97)}$ & $6.37(0.87)$ & $13.41(0.97)$
& $16.98(1.69)$ & $\mathbf{2.84(0.76)}$ & $\mathbf{2.31(0.66)}$ & $3.43(1.10)$ \\
Banana
& $\mathbf{6.90(2.41)}$ & $8.94(3.09)$ & $\mathbf{6.85(2.69)}$
& $\mathbf{6.26(1.97)}$ & $8.07(3.86)$ & $\mathbf{7.11(3.73)}$ & $\mathbf{5.09(2.49)}$ \\
Ringnorm
& $3.43(0.49)$ & $9.86(0.97)$ & $14.15(0.87)$
& $17.04(0.50)$ & $\mathbf{3.15(0.72)}$ & $\mathbf{3.08(0.46)}$ & $\mathbf{3.00(0.49)}$ \\
Image
& $5.45(0.77)$ & $10.64(1.47)$ & $15.05(0.72)$
& $19.27(1.24)$ & $5.35(0.77)$ & $\mathbf{4.50(0.63)}$ & $\mathbf{4.63(0.94)}$ \\
Coil20
& $1.39(0.26)$ & $5.24(0.59)$ & $9.47(0.72)$
& $14.53(1.34)$ & $1.20(0.24)$ & $\mathbf{0.94(0.49)}$ & $\mathbf{0.87(0.33)}$ \\
Basehock
& $\mathbf{1.16(0.30)}$ & $2.32(0.40)$ & $3.76(0.67)$
& $5.82(0.69)$ & $\mathbf{1.39(0.47)}$ & $2.39(0.54)$ & $2.48(0.53)$ \\
Isolet
& $2.80(0.22)$ & $7.11(0.35)$ & $11.93(0.46)$
& $16.92(0.54)$ & $2.03(0.29)$ & $\mathbf{1.61(0.16)}$ & $\mathbf{1.51(0.29)}$ \\
W8a
& $\mathbf{0.36(0.11)}$ & $2.45(0.32)$ & $5.73(0.44)$
& $8.69(0.49)$ & $0.49(0.16)$ & $0.55(0.15)$ & $0.70(0.15)$ \\
Mushroom
& $0.09(0.04)$ & $1.06(0.05)$ & $3.18(0.14)$
& $5.96(0.26)$ & $0.06(0.01)$ & $\mathbf{0.04(0.02)}$ & $\mathbf{0.04(0.01)}$ \\
Artificial-character
& $\mathbf{3.52(0.39)}$ & $3.99(0.51)$ & $4.70(0.47)$
& $5.95(0.64)$ & $\mathbf{3.56(0.74)}$ & $\mathbf{3.27(0.53)}$ & $4.81(0.72)$ \\
Gas-drift
& $5.65(0.55)$ & $9.05(0.57)$ & $12.65(0.53)$
& $15.98(0.38)$ & $4.54(0.50)$ & $\mathbf{4.23(0.51)}$ & $\mathbf{4.12(0.49)}$ \\
Japanesevowels
& $4.28(0.14)$ & $8.88(0.64)$ & $12.88(0.30)$
& $16.57(0.26)$ & $2.89(0.51)$ & $1.90(0.18)$ & $\mathbf{1.35(0.20)}$ \\
Letter
& $10.19(0.33)$ & $14.82(0.42)$ & $18.26(0.24)$
& $20.88(0.60)$ & $7.95(0.43)$ & $6.13(0.27)$ & $\mathbf{5.19(0.43)}$ \\
Pendigits
& $4.72(0.33)$ & $11.14(0.44)$ & $16.29(0.33)$
& $20.61(0.55)$ & $3.71(0.38)$ & $2.68(0.24)$ & $\mathbf{1.84(0.36)}$ \\
Satimage
& $\mathbf{2.16(0.36)}$ & $6.39(0.37)$ & $10.19(0.92)$
& $13.65(0.50)$ & $\mathbf{1.92(0.29)}$ & $\mathbf{2.19(0.69)}$ & $2.64(0.36)$ \\
Vehicle
& $11.27(1.72)$ & $16.79(3.20)$ & $19.05(2.31)$
& $23.62(1.11)$ & $10.39(2.16)$ & $\mathbf{6.95(1.80)}$ & $\mathbf{8.00(1.53)}$ \\
MNIST
& $1.27(0.09)$ & $5.82(0.14)$ & $10.27(0.16)$
& $14.24(0.23)$ & $\mathbf{0.93(0.15)}$ & $1.09(0.15)$ & $1.43(0.14)$ \\
KMNIST
& $5.51(0.22)$ & $\mathbf{1.91(0.25)}$ & $5.41(0.35)$
& $9.17(0.40)$ & $6.45(0.48)$ & $7.88(0.39)$ & $8.90(0.44)$ \\
Fashion-MNIST
& $\mathbf{1.60(0.31)}$ & $3.38(0.53)$ & $7.46(0.42)$
& $10.78(0.95)$ & $2.89(0.42)$ & $4.07(0.36)$ & $4.85(0.92)$ \\
\bottomrule
\end{tabular}
\label{tab:linear-ece}
\vspace{-3mm}
\end{table*}

\begin{table*}[t]
\centering
\caption{Mean and standard error of the classification error over ten trials (rescaled to $0-100$).
We used \textbf{a linear-in-input model}.
The purpose of showing this table is for reference.
}
\begin{tabular}{lcccc}
\toprule
Dataset
& CE & FL-$1$ & FL-$2$& FL-$3$ \\
\midrule
Australian
& $14.03(1.47)$ & $13.54(1.27)$ & $13.36(1.33)$
& $13.65(1.34)$ \\
Phishing
& $7.24(0.20)$ & $7.22(0.30)$ & $7.47(0.20)$
& $7.41(0.24)$ \\
Spambase
& $7.86(0.45)$ & $8.21(0.34)$ & $7.96(0.62)$
& $8.52(0.48)$ \\
Waveform
& $12.00(0.37)$ & $12.24(0.44)$ & $12.39(0.61)$
& $12.55(0.84)$ \\
Twonorm
& $2.19(0.15)$ & $2.28(0.16)$ & $2.22(0.20)$
& $2.31(0.20)$ \\
Adult
& $15.46(0.17)$ & $15.75(0.20)$ & $15.71(0.26)$
& $16.00(0.26)$ \\
Banknote-authentication
& $2.33(0.39)$ & $1.90(0.64)$ & $2.24(0.21)$
& $2.13(0.43)$ \\
Phoneme
& $24.89(0.48)$ & $24.98(0.68)$ & $25.14(0.38)$
& $25.22(0.47)$ \\
Magic
& $20.91(0.17)$ & $20.98(0.37)$ & $21.16(0.35)$
& $20.90(0.19)$ \\
Gisette
& $2.62(0.25)$ & $2.98(0.26)$ & $3.37(0.24)$
& $3.53(0.24)$ \\
USPS
& $1.26(0.15)$ & $1.32(0.09)$ & $1.22(0.08)$
& $1.22(0.11)$ \\
Splice
& $16.04(0.92)$ & $16.28(0.60)$ & $15.65(0.59)$
& $16.20(1.06)$ \\
Banana
& $43.91(3.69)$ & $46.16(3.22)$ & $44.34(1.44)$
& $45.23(2.21)$ \\
Ringnorm
& $23.85(0.44)$ & $23.95(0.65)$ & $24.56(0.71)$
& $24.26(0.39)$ \\
Image
& $17.81(1.13)$ & $18.03(0.98)$ & $19.11(0.52)$
& $18.85(1.14)$ \\
Coil20
& $1.53(0.68)$ & $1.17(0.78)$ & $1.61(0.71)$
& $1.56(0.65)$ \\
Basehock
& $3.38(0.54)$ & $3.85(0.44)$ & $4.77(0.66)$
& $4.65(0.82)$ \\
Isolet
& $1.28(0.34)$ & $1.09(0.42)$ & $1.24(0.48)$
& $1.18(0.36)$ \\
W8a
& $1.38(0.06)$ & $1.47(0.09)$ & $1.49(0.08)$
& $1.64(0.06)$ \\
Mushroom
& $0.03(0.05)$ & $0.01(0.01)$ & $0.01(0.04)$
& $0.01(0.03)$ \\
Artificial-character
& $64.55(0.44)$ & $64.97(0.51)$ & $64.71(0.47)$
& $65.51(0.61)$ \\
Gas-drift
& $2.09(0.36)$ & $2.05(0.42)$ & $1.98(0.31)$
& $2.06(0.42)$ \\
Japanesevowels
& $7.01(0.13)$ & $6.71(0.46)$ & $6.64(0.24)$
& $6.80(0.27)$ \\
Letter
& $23.56(0.32)$ & $23.49(0.30)$ & $24.03(0.23)$
& $24.69(0.46)$ \\
Pendigits
& $6.74(0.21)$ & $6.25(0.23)$ & $6.37(0.19)$
& $6.66(0.31)$ \\
Satimage
& $15.38(0.40)$ & $14.63(0.31)$ & $14.93(0.50)$
& $14.91(0.35)$ \\
Vehicle
& $26.12(1.33)$ & $25.34(2.33)$ & $26.78(1.91)$
& $25.79(0.99)$ \\
MNIST
& $7.42(0.08)$ & $7.53(0.07)$ & $7.73(0.09)$
& $7.99(0.14)$ \\
KMNIST
& $30.23(0.21)$ & $30.20(0.40)$ & $30.64(0.29)$
& $31.06(0.41)$ \\
Fashion-MNIST
& $15.83(0.28)$ & $16.21(0.40)$ & $16.82(0.44)$
& $16.90(0.83)$ \\
\bottomrule
\end{tabular}
\label{tab:linear-err}
\vspace{-3mm}
\end{table*}

\begin{table*}[t]
\centering
\caption{Mean and standard error of ECE over ten trials (rescaled to $0-100$).
We used \textbf{neural networks with one hidden layer} as a model.
Outperforming methods are highlighted in boldface using one-sided t-test with the significance level $5\%$.
}
\begin{tabular}{lccccccc}
\toprule
Dataset
& CE & FL-$1$ & FL-$2$& FL-$3$& FL-$1$-$\Psibf$ & FL-$2$-$\Psibf$ & FL-$3$-$\Psibf$ \\
\midrule
Australian
& $\mathbf{4.31(1.21)}$ & $8.15(2.10)$ & $12.74(1.53)$
& $18.08(1.96)$ & $\mathbf{4.62(1.03)}$ & $\mathbf{4.48(1.15)}$ & $\mathbf{4.83(1.22)}$ \\
Phishing
& $\mathbf{0.80(0.17)}$ & $3.14(0.33)$ & $7.28(0.41)$
& $12.16(0.40)$ & $\mathbf{0.84(0.29)}$ & $\mathbf{0.82(0.23)}$ & $1.03(0.26)$ \\
Spambase
& $\mathbf{1.81(0.43)}$ & $3.92(0.62)$ & $9.61(0.51)$
& $14.78(0.66)$ & $\mathbf{1.79(0.51)}$ & $\mathbf{1.52(0.25)}$ & $\mathbf{1.52(0.29)}$ \\
Waveform
& $\mathbf{2.03(0.47)}$ & $3.76(0.36)$ & $7.95(0.84)$
& $11.84(0.58)$ & $\mathbf{1.79(0.38)}$ & $\mathbf{1.74(0.57)}$ & $\mathbf{1.95(0.46)}$ \\
Twonorm
& $\mathbf{0.87(0.24)}$ & $1.26(0.23)$ & $4.27(0.53)$
& $8.02(0.70)$ & $\mathbf{0.98(0.19)}$ & $\mathbf{0.95(0.22)}$ & $\mathbf{0.84(0.20)}$ \\
Adult
& $4.36(0.27)$ & $\mathbf{2.32(0.47)}$ & $8.00(0.28)$
& $12.83(0.36)$ & $3.99(0.41)$ & $3.56(0.20)$ & $3.06(0.30)$ \\
Banknote
& $1.81(0.13)$ & $5.20(0.49)$ & $10.99(0.69)$
& $17.50(1.18)$ & $\mathbf{1.64(0.19)}$ & $\mathbf{1.78(0.20)}$ & $2.18(0.36)$ \\
Phoneme
& $\mathbf{2.39(0.59)}$ & $8.04(0.92)$ & $12.88(1.03)$
& $16.47(0.76)$ & $\mathbf{1.97(0.55)}$ & $\mathbf{2.09(0.29)}$ & $\mathbf{2.01(0.54)}$ \\
Magic
& $\mathbf{1.00(0.25)}$ & $8.54(0.54)$ & $14.76(0.73)$
& $19.69(0.52)$ & $1.19(0.18)$ & $\mathbf{1.04(0.21)}$ & $1.24(0.30)$ \\
Gisette
& $1.20(0.21)$ & $\mathbf{0.53(0.14)}$ & $2.32(0.19)$
& $5.03(0.23)$ & $1.29(0.23)$ & $1.32(0.23)$ & $1.55(0.21)$ \\
USPS
& $\mathbf{0.35(0.06)}$ & $0.44(0.08)$ & $1.93(0.19)$
& $4.60(0.25)$ & $\mathbf{0.31(0.06)}$ & $\mathbf{0.30(0.08)}$ & $\mathbf{0.31(0.07)}$ \\
Splice
& $4.76(0.62)$ & $\mathbf{1.60(0.29)}$ & $5.40(0.83)$
& $12.11(1.34)$ & $5.07(0.74)$ & $5.14(0.65)$ & $4.80(0.82)$ \\
Banana
& $3.31(0.77)$ & $11.17(0.62)$ & $18.78(0.37)$
& $24.61(0.50)$ & $\mathbf{2.79(0.45)}$ & $3.59(0.64)$ & $5.07(0.72)$ \\
Ringnorm
& $\mathbf{0.59(0.17)}$ & $4.17(0.32)$ & $9.55(0.30)$
& $15.71(0.67)$ & $\mathbf{0.71(0.18)}$ & $\mathbf{0.72(0.14)}$ & $1.02(0.26)$ \\
Image
& $\mathbf{2.78(0.63)}$ & $8.64(0.84)$ & $15.56(0.96)$
& $21.21(0.63)$ & $\mathbf{2.91(0.47)}$ & $\mathbf{3.04(0.55)}$ & $\mathbf{3.07(0.64)}$ \\
Coil20
& $\mathbf{0.15(0.04)}$ & $1.20(0.16)$ & $3.49(0.27)$
& $7.75(0.42)$ & $\mathbf{0.18(0.07)}$ & $\mathbf{0.15(0.10)}$ & $\mathbf{0.14(0.03)}$ \\
Basehock
& $\mathbf{1.26(0.19)}$ & $4.74(0.55)$ & $9.94(0.65)$
& $14.59(0.66)$ & $\mathbf{1.11(0.31)}$ & $1.42(0.32)$ & $\mathbf{1.22(0.23)}$ \\
Isolet
& $\mathbf{0.60(0.18)}$ & $1.56(0.21)$ & $3.81(0.43)$
& $7.43(0.35)$ & $\mathbf{0.51(0.15)}$ & $\mathbf{0.56(0.09)}$ & $\mathbf{0.60(0.22)}$ \\
W8a
& $\mathbf{0.38(0.13)}$ & $1.31(0.22)$ & $3.65(0.32)$
& $6.64(0.37)$ & $\mathbf{0.38(0.11)}$ & $\mathbf{0.39(0.08)}$ & $\mathbf{0.30(0.14)}$ \\
Mushroom
& $\mathbf{0.05(0.02)}$ & $0.80(0.08)$ & $3.36(0.07)$
& $7.25(0.10)$ & $\mathbf{0.06(0.05)}$ & $\mathbf{0.07(0.05)}$ & $\mathbf{0.05(0.02)}$ \\
Artificial-character
& $\mathbf{3.34(0.93)}$ & $4.58(0.78)$ & $7.13(0.93)$
& $9.08(0.80)$ & $\mathbf{3.27(0.39)}$ & $3.97(0.97)$ & $4.79(0.72)$ \\
Gas-drift
& $0.77(0.08)$ & $3.23(0.22)$ & $6.48(0.41)$
& $10.18(0.36)$ & $\mathbf{0.64(0.13)}$ & $\mathbf{0.73(0.20)}$ & $0.82(0.08)$ \\
Japanesevowels
& $1.24(0.20)$ & $4.22(0.17)$ & $7.41(0.21)$
& $10.53(0.45)$ & $0.84(0.12)$ & $\mathbf{0.60(0.11)}$ & $0.70(0.14)$ \\
Letter
& $2.80(0.34)$ & $7.66(0.33)$ & $11.61(0.25)$
& $15.23(0.45)$ & $1.99(0.27)$ & $\mathbf{1.42(0.21)}$ & $\mathbf{1.59(0.25)}$ \\
Pendigits
& $1.01(0.10)$ & $3.98(0.35)$ & $8.38(0.39)$
& $12.79(0.43)$ & $0.80(0.19)$ & $\mathbf{0.74(0.12)}$ & $\mathbf{0.65(0.10)}$ \\
Satimage
& $\mathbf{1.30(0.36)}$ & $5.62(0.84)$ & $11.22(0.77)$
& $15.24(1.14)$ & $1.72(0.43)$ & $1.68(0.48)$ & $1.99(0.52)$ \\
Vehicle
& $\mathbf{5.13(1.44)}$ & $8.58(1.46)$ & $13.03(1.14)$
& $16.70(1.03)$ & $5.22(1.29)$ & $\mathbf{4.43(1.12)}$ & $\mathbf{4.37(0.68)}$ \\
MNIST
& $1.04(0.07)$ & $\mathbf{0.42(0.05)}$ & $1.67(0.07)$
& $3.39(0.14)$ & $1.16(0.06)$ & $1.22(0.09)$ & $1.34(0.09)$ \\
KMNIST
& $8.17(0.28)$ & $5.42(0.15)$ & $3.27(0.24)$
& $\mathbf{1.22(0.24)}$ & $8.12(0.16)$ & $8.62(0.25)$ & $9.02(0.29)$ \\
Fashion-MNIST
& $3.92(0.24)$ & $\mathbf{0.92(0.20)}$ & $4.14(0.59)$
& $7.36(0.79)$ & $4.04(0.35)$ & $4.47(0.46)$ & $5.05(0.66)$ \\
\bottomrule
\end{tabular}
\label{tab:mlp-ece}
\vspace{-3mm}
\end{table*}

\begin{table*}[t]
\centering
\caption{Mean and standard error of the classification error over ten trials (rescaled to $0-100$).
We used \textbf{neural networks with one hidden layer} as a model.
The purpose of showing this table is for reference.
}
\begin{tabular}{lccccccc}
\toprule
Dataset
& CE & FL-$1$ & FL-$2$& FL-$3$ \\
\midrule
Australian
& $13.42(1.24)$ & $13.68(0.95)$ & $13.80(1.46)$
& $13.80(1.28)$ \\
Phishing
& $4.46(0.21)$ & $4.66(0.36)$ & $4.77(0.25)$
& $5.49(0.16)$ \\
Spambase
& $6.20(0.40)$ & $6.15(0.43)$ & $6.33(0.36)$
& $6.36(0.40)$ \\
Waveform
& $9.05(0.26)$ & $9.21(0.26)$ & $9.08(0.52)$
& $9.24(0.48)$ \\
Twonorm
& $2.52(0.18)$ & $2.58(0.15)$ & $2.61(0.14)$
& $2.54(0.19)$ \\
Adult
& $16.27(0.20)$ & $16.17(0.25)$ & $16.02(0.24)$
& $15.72(0.22)$ \\
Banknote-authentication
& $0.60(0.50)$ & $0.71(0.55)$ & $0.71(0.54)$
& $1.18(0.90)$ \\
Phoneme
& $16.33(0.81)$ & $16.64(0.76)$ & $17.11(0.54)$
& $17.95(0.83)$ \\
Magic
& $13.23(0.32)$ & $13.29(0.31)$ & $13.69(0.49)$
& $13.66(0.14)$ \\
Gisette
& $2.16(0.25)$ & $2.34(0.23)$ & $2.38(0.29)$
& $2.61(0.21)$ \\
USPS
& $0.58(0.12)$ & $0.59(0.07)$ & $0.58(0.07)$
& $0.58(0.09)$ \\
Splice
& $9.68(0.49)$ & $10.15(0.74)$ & $10.70(1.06)$
& $11.58(1.17)$ \\
Banana
& $10.63(0.71)$ & $10.48(0.38)$ & $10.45(0.33)$
& $10.77(0.35)$ \\
Ringnorm
& $2.30(0.17)$ & $2.25(0.16)$ & $2.25(0.22)$
& $2.50(0.24)$ \\
Image
& $5.85(0.84)$ & $6.16(1.14)$ & $6.61(1.01)$
& $7.89(0.95)$ \\
Coil20
& $0.06(0.13)$ & $0.03(0.06)$ & $0.03(0.08)$
& $0.01(0.04)$ \\
Basehock
& $3.17(0.51)$ & $3.49(0.41)$ & $3.83(0.73)$
& $4.07(0.67)$ \\
Isolet
& $0.73(0.24)$ & $0.51(0.18)$ & $0.72(0.28)$
& $0.92(0.27)$ \\
W8a
& $1.05(0.04)$ & $1.05(0.05)$ & $1.06(0.05)$
& $1.04(0.05)$ \\
Mushroom
& $0.02(0.02)$ & $0.04(0.07)$ & $0.06(0.07)$
& $0.03(0.02)$ \\
Artificial-character
& $38.93(0.91)$ & $38.90(0.74)$ & $38.90(0.75)$
& $39.49(0.84)$ \\
Gas-drift
& $0.87(0.14)$ & $0.95(0.23)$ & $0.93(0.12)$
& $0.91(0.10)$ \\
Japanesevowels
& $3.07(0.33)$ & $3.17(0.20)$ & $3.27(0.14)$
& $3.69(0.26)$ \\
Letter
& $9.54(0.41)$ & $9.50(0.31)$ & $9.71(0.25)$
& $10.34(0.41)$ \\
Pendigits
& $1.07(0.14)$ & $1.10(0.15)$ & $1.16(0.17)$
& $1.24(0.14)$ \\
Satimage
& $10.70(0.26)$ & $11.00(0.42)$ & $11.28(0.41)$
& $11.11(0.52)$ \\
Vehicle
& $23.45(1.93)$ & $24.44(2.03)$ & $24.26(1.29)$
& $25.51(1.11)$ \\
MNIST
& $2.35(0.07)$ & $2.43(0.07)$ & $2.40(0.08)$
& $2.54(0.11)$ \\
KMNIST
& $13.03(0.25)$ & $12.91(0.17)$ & $13.28(0.27)$
& $13.61(0.30)$ \\
Fashion-MNIST
& $11.89(0.27)$ & $11.92(0.31)$ & $12.07(0.14)$
& $12.70(0.71)$ \\
\bottomrule
\end{tabular}
\label{tab:mlp-err}
\vspace{-3mm}
\end{table*}
\FloatBarrier

\subsection{Additional reliability diagrams for ResNet models}
Here, we present reliability diagrams for all ResNet models we used in this paper on the SVHN, CIFAR10, CIFAR10-s, and CIFAR100 datasets.
\subsubsection*{Reliability diagram index:}
\begin{itemize}
    \item Figures~\ref{app:fig-rel-svhn-res8}-\ref{app:fig-rel-svhn-res110}: Reliability diagrams for SVHN using ResNet8, ResNet20, ResNet44, and ResNet110.
    \item Figures~\ref{app:fig-rel-cifar10-res8}-\ref{app:fig-rel-cifar10-res110}: Reliability diagrams for CIFAR10 using ResNet8, ResNet20, ResNet44, and ResNet110.
    \item Figures~\ref{app:fig-rel-cifar10s-res8}-\ref{app:fig-rel-cifar10s-res110}: Reliability diagrams for CIFAR10-s using ResNet8, ResNet20, ResNet44, and ResNet110.
    \item Figures~\ref{app:fig-rel-cifar100-res8}-\ref{app:fig-rel-cifar100-res110}: Reliability diagrams for CIFAR100 using ResNet8, ResNet20, ResNet44, and ResNet110.
\end{itemize}

\FloatBarrier
\newpage
\begin{figure*}
    \centering
    \includegraphics[width=0.7\textwidth]{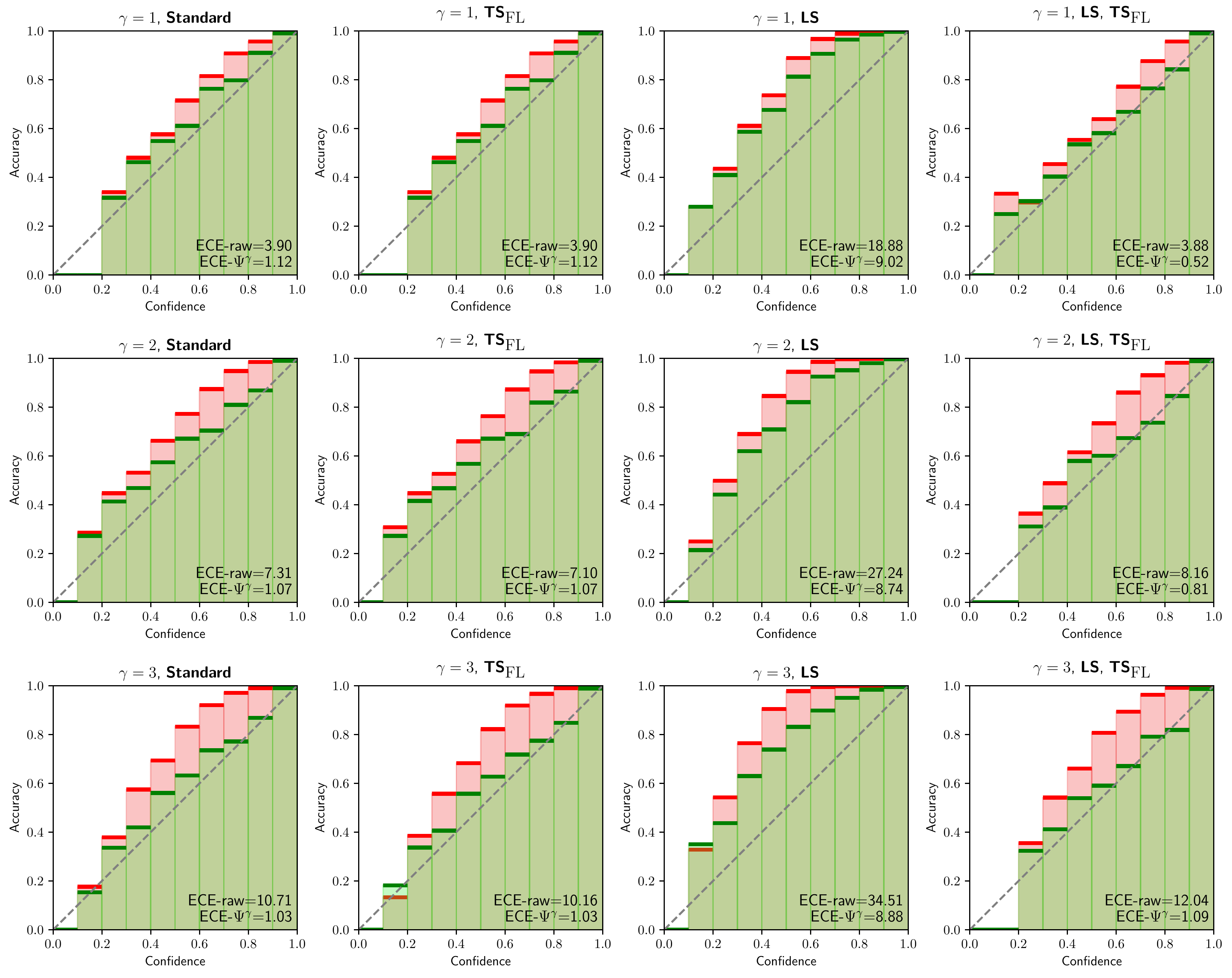}
    \caption{Reliability diagrams for SVHN and ResNet8. See each graph's title for the particular details.}
    \label{app:fig-rel-svhn-res8}
\end{figure*}         

\begin{figure*}
    \centering
    \includegraphics[width=0.7\textwidth]{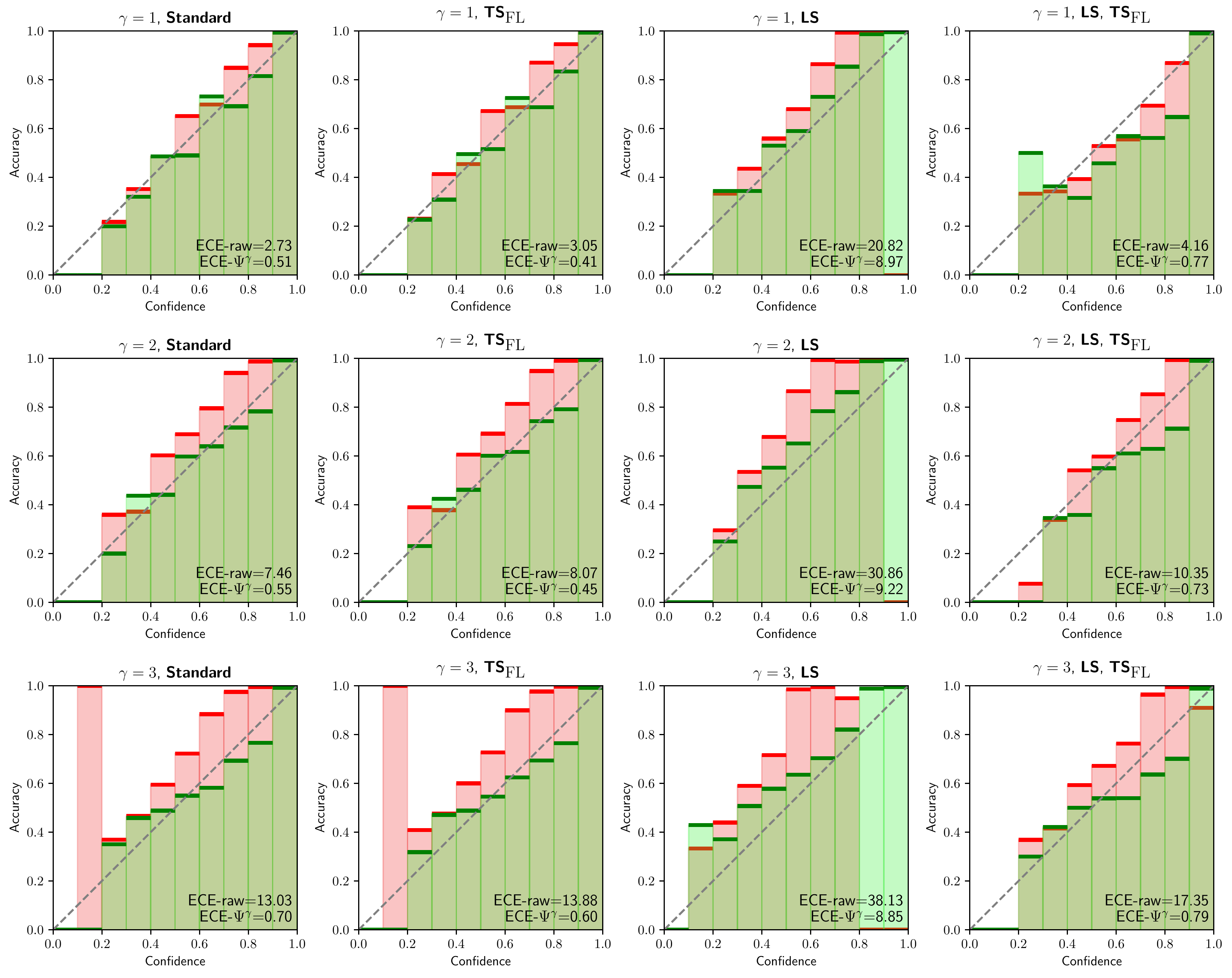}
    \caption{Reliability diagrams for SVHN and ResNet20. See each graph's title for the particular details.}
\end{figure*}    

\begin{figure*}
    \centering
    \includegraphics[width=0.7\textwidth]{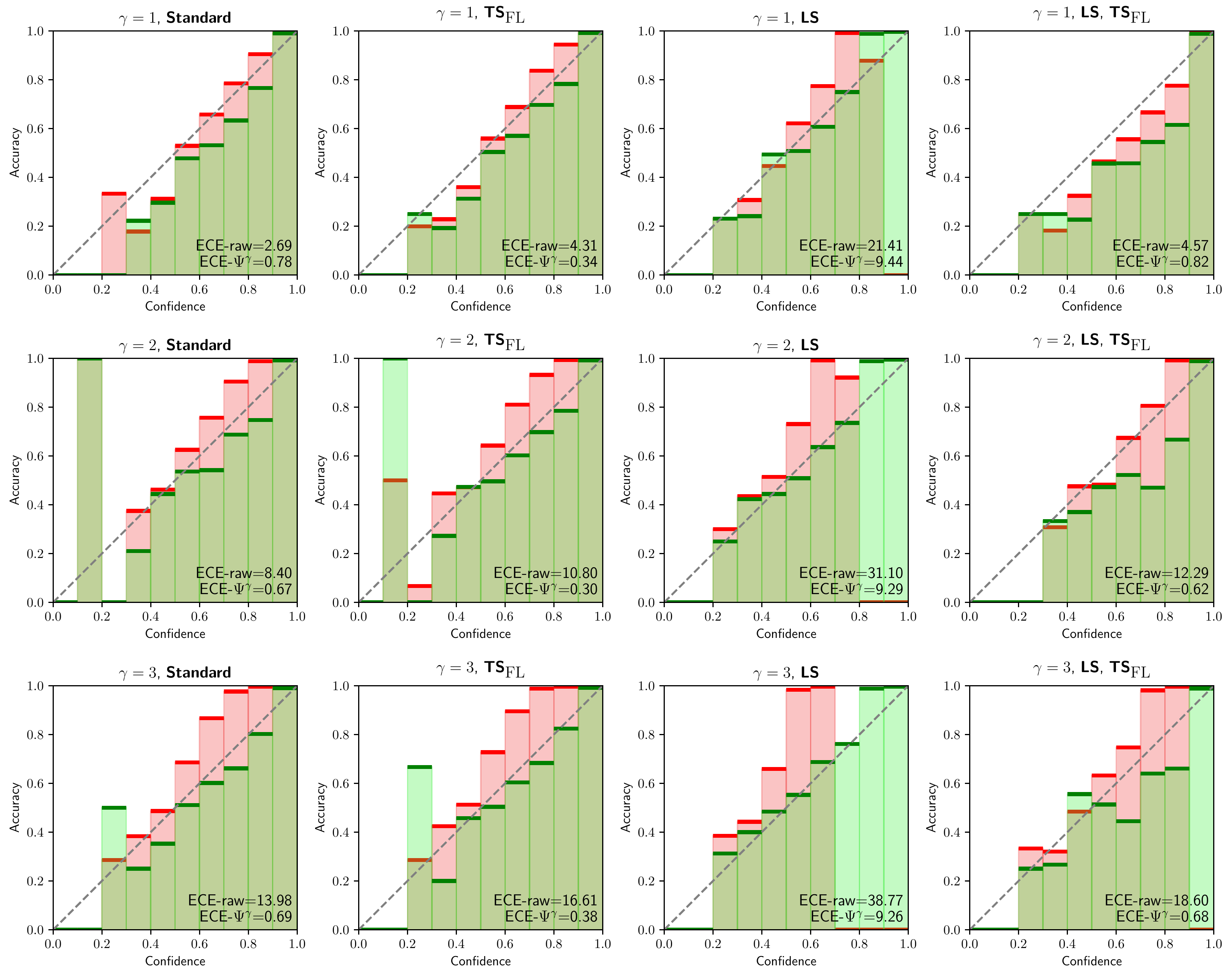}
    \caption{Reliability diagrams for SVHN and ResNet44. See each graph's title for the particular details.}
\end{figure*}    

\begin{figure*}
    \centering
    \includegraphics[width=0.7\textwidth]{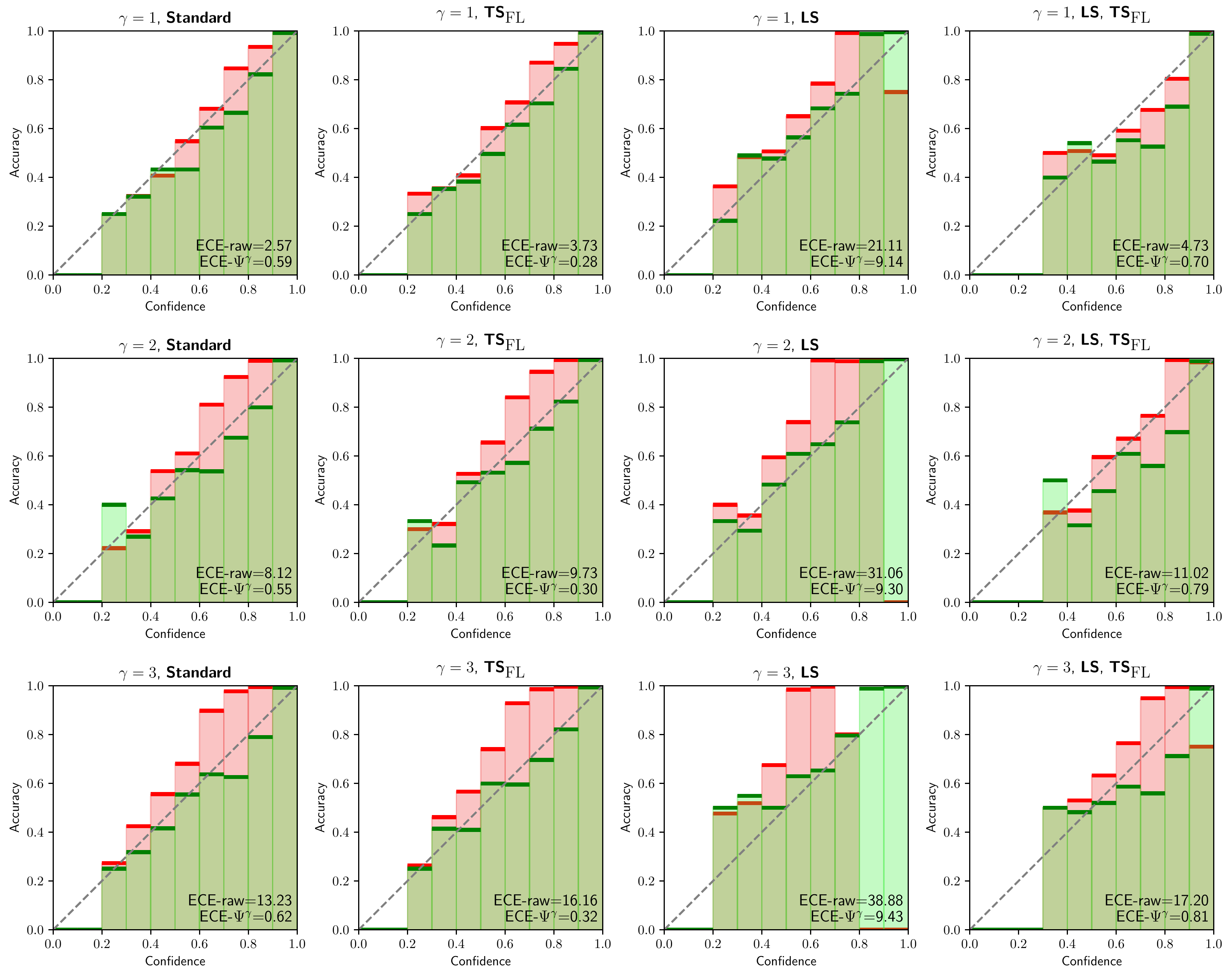}
    \caption{Reliability diagrams for SVHN and ResNet110. See each graph's title for the particular details.}
    \label{app:fig-rel-svhn-res110}
\end{figure*}

\begin{figure*}
    \centering
    \includegraphics[width=0.7\textwidth]{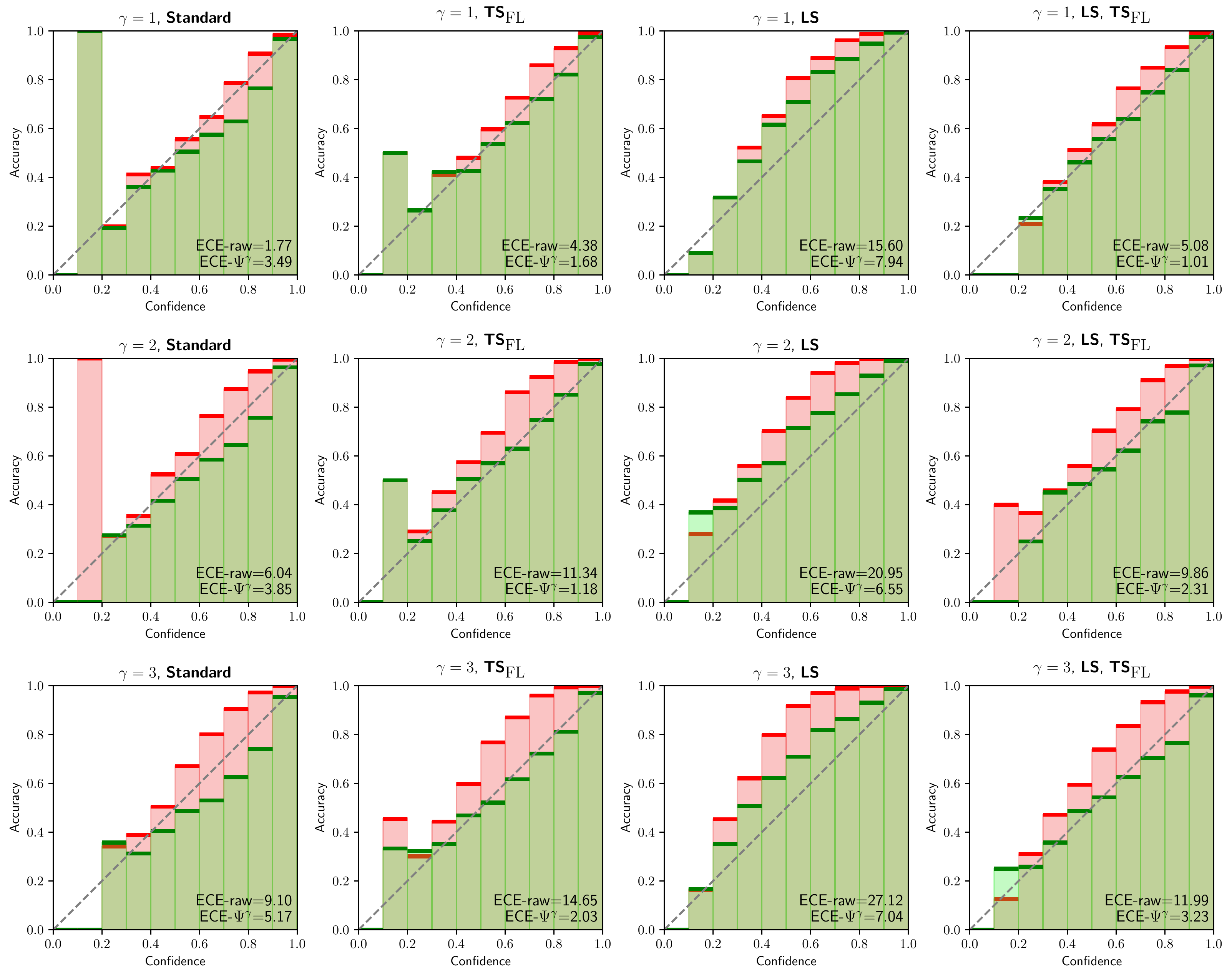}
    \caption{Reliability diagrams for CIFAR10 and ResNet8. See each graph's title for the particular details.}
    \label{app:fig-rel-cifar10-res8}
\end{figure*}         

\begin{figure*}
    \centering
    \includegraphics[width=0.7\textwidth]{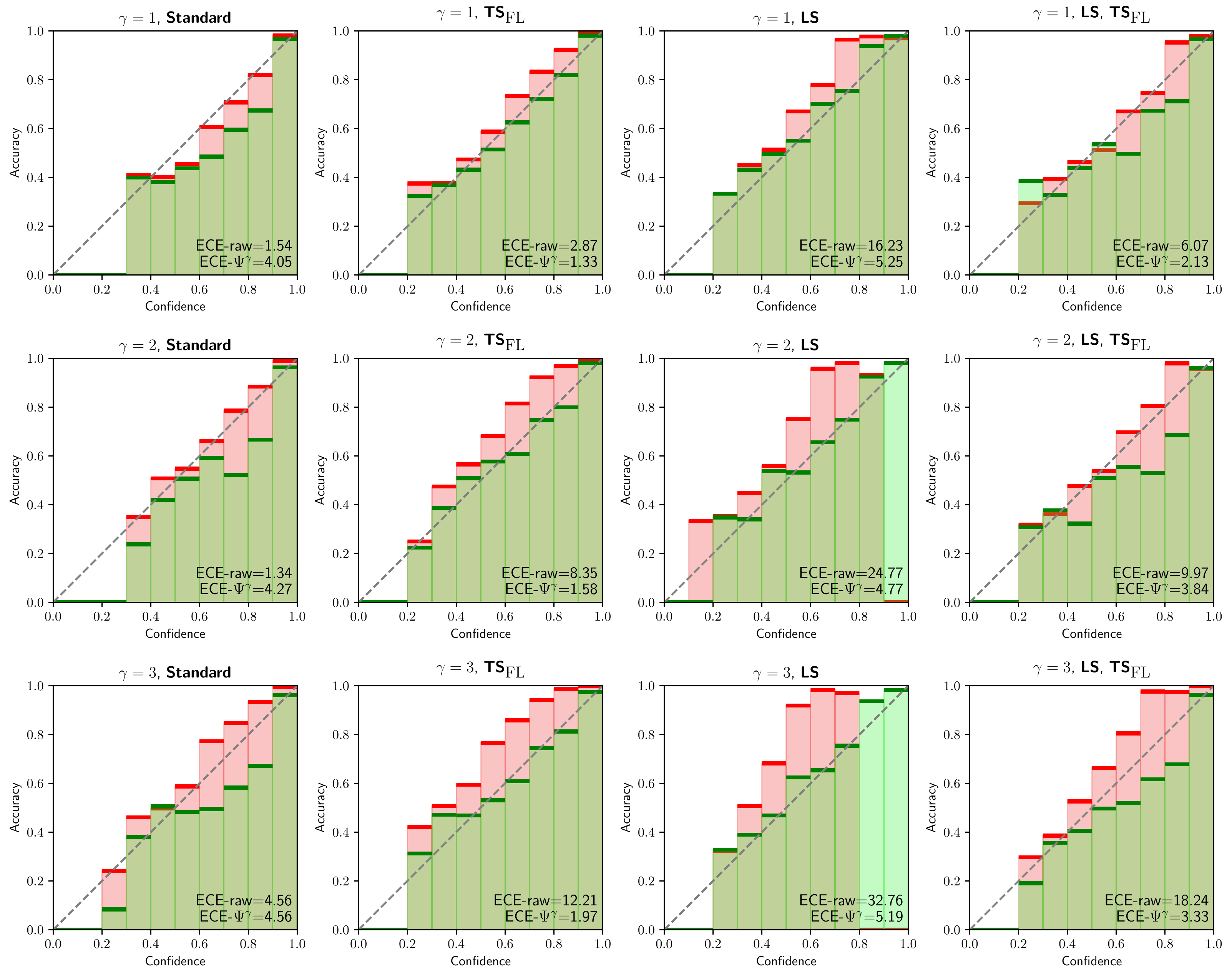}
    \caption{Reliability diagrams for CIFAR10 and ResNet20. See each graph's title for the particular details.}
\end{figure*}    

\begin{figure*}
    \centering
    \includegraphics[width=0.7\textwidth]{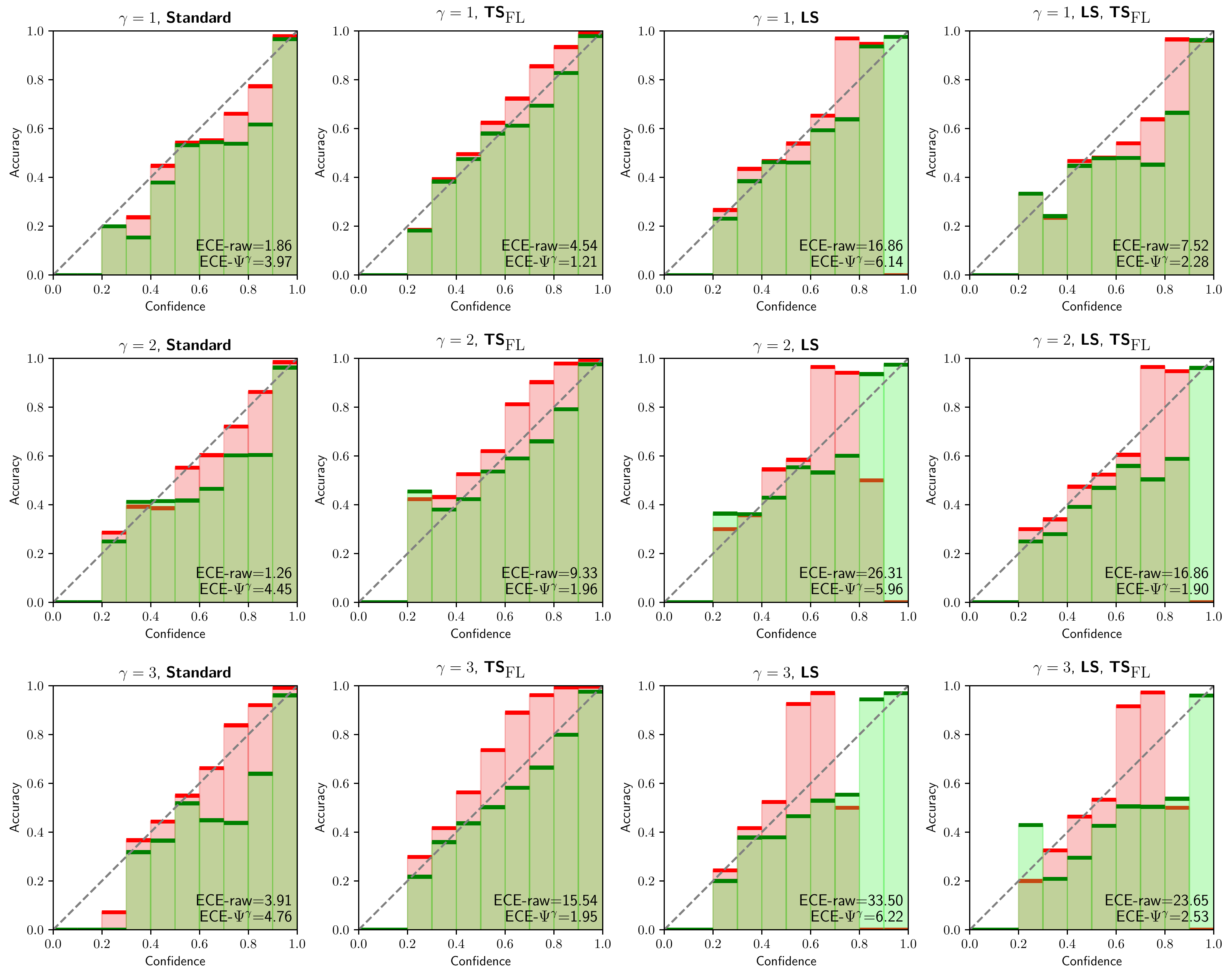}
    \caption{Reliability diagrams for CIFAR10 and ResNet44. See each graph's title for the particular details.}
\end{figure*}    

\begin{figure*}
    \centering
    \includegraphics[width=0.7\textwidth]{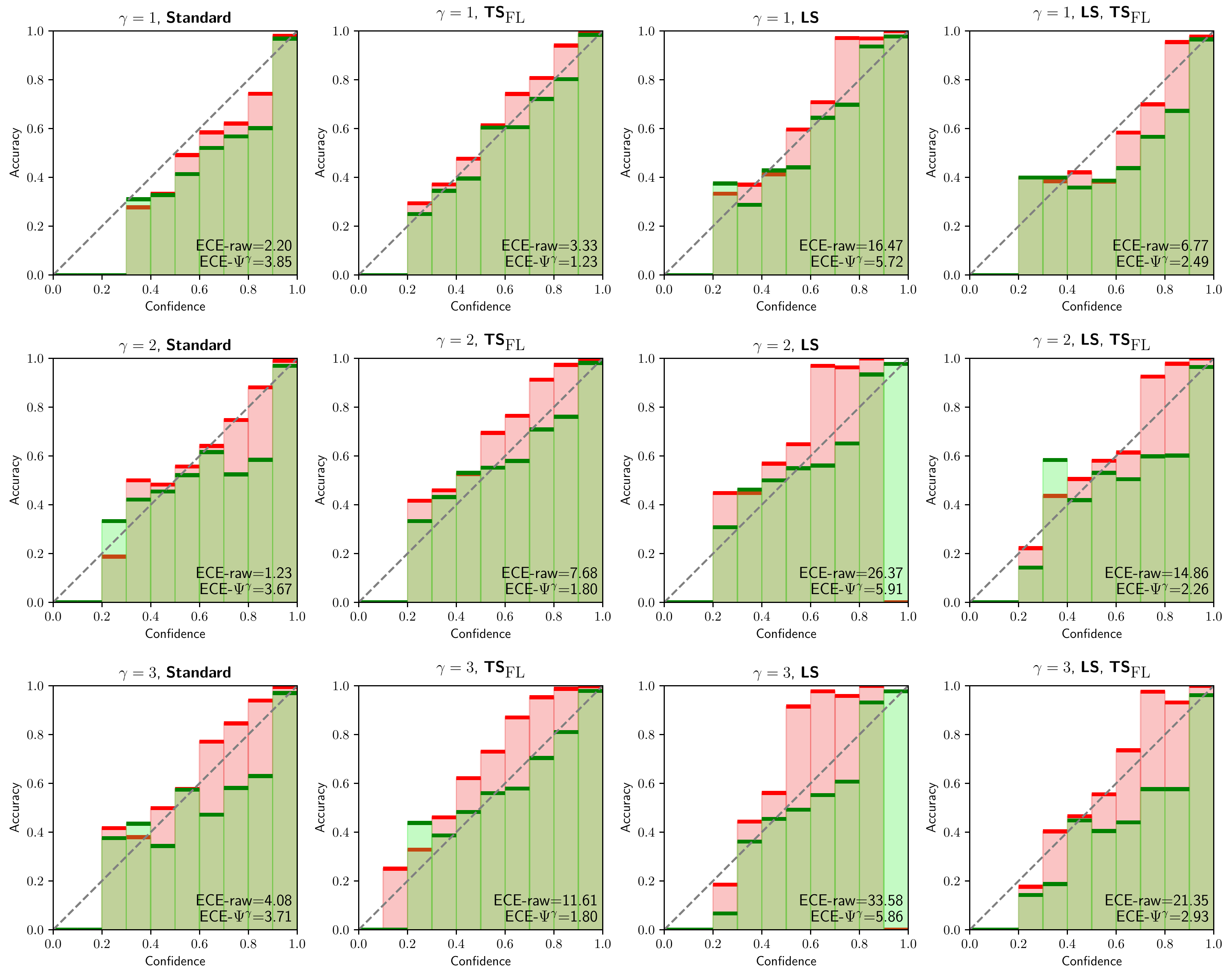}
    \caption{Reliability diagrams for CIFAR10 and ResNet110. See each graph's title for the particular details.}
    \label{app:fig-rel-cifar10-res110}
\end{figure*}

\begin{figure*}
    \centering
    \includegraphics[width=0.7\textwidth]{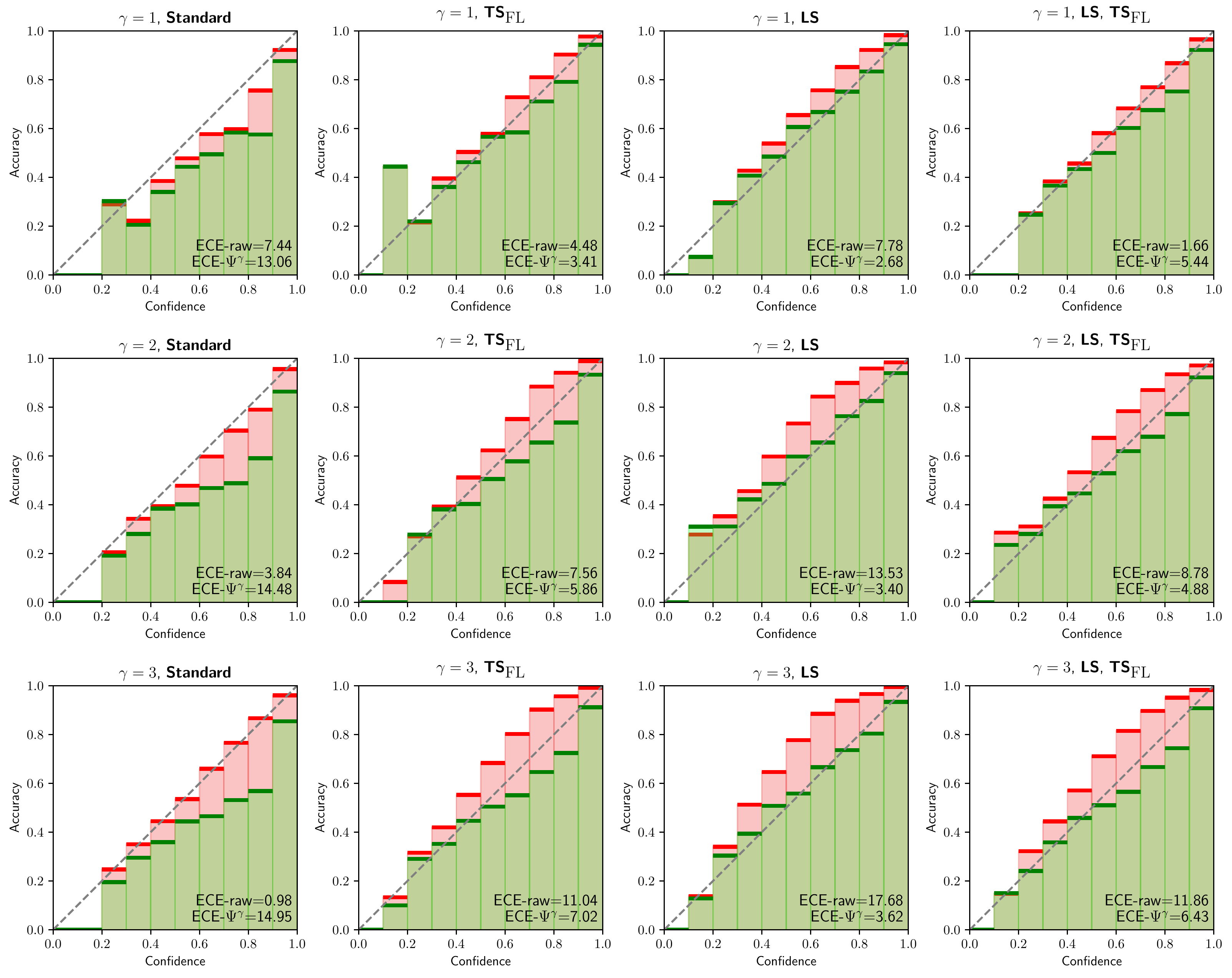}
    \caption{Reliability diagrams for CIFAR10-s and ResNet8. See each graph's title for the particular details.}
    \label{app:fig-rel-cifar10s-res8}
\end{figure*}         

\begin{figure*}
    \centering
    \includegraphics[width=0.7\textwidth]{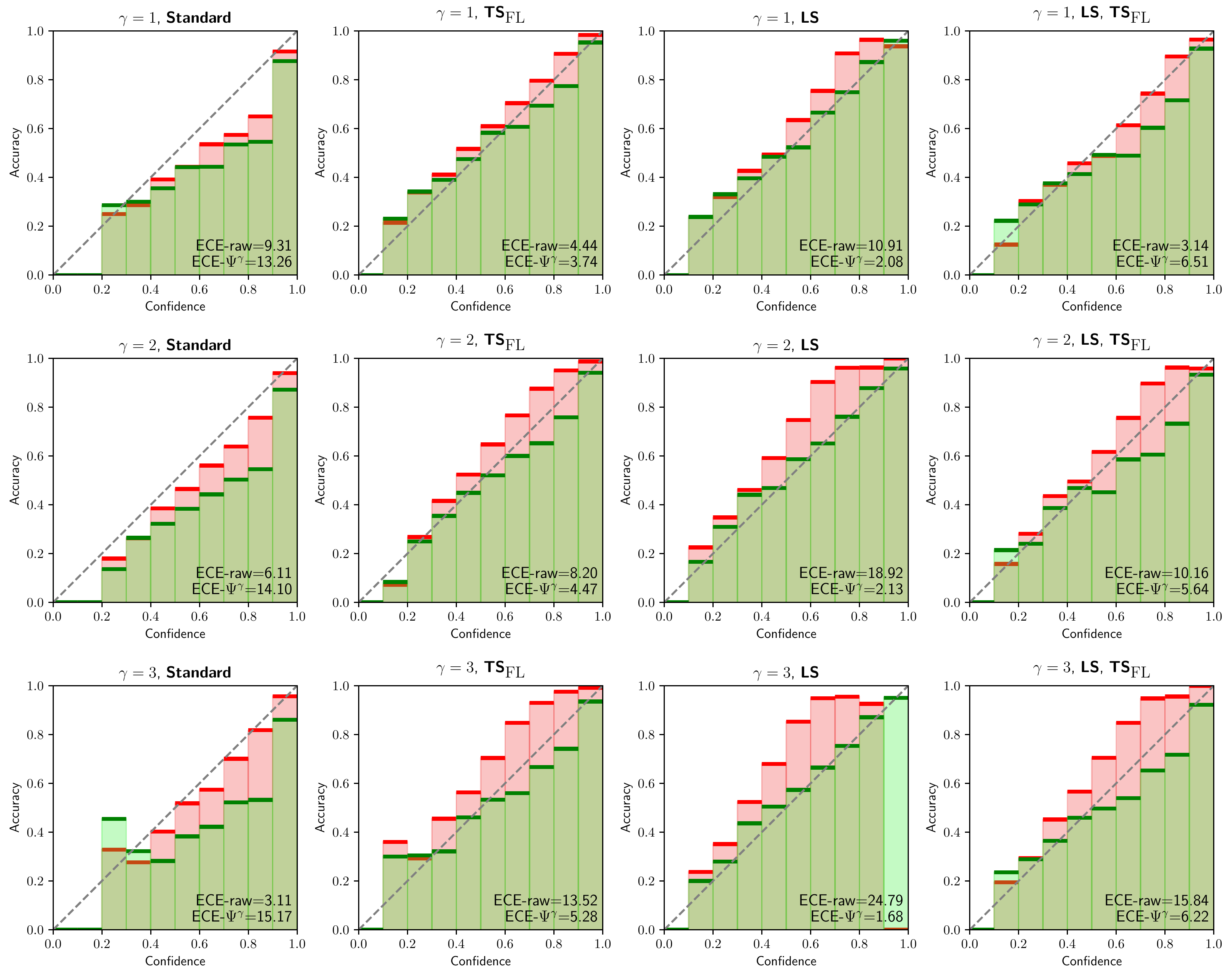}
    \caption{Reliability diagrams for CIFAR10-s and ResNet20. See each graph's title for the particular details.}
\end{figure*}    

\begin{figure*}
    \centering
    \includegraphics[width=0.7\textwidth]{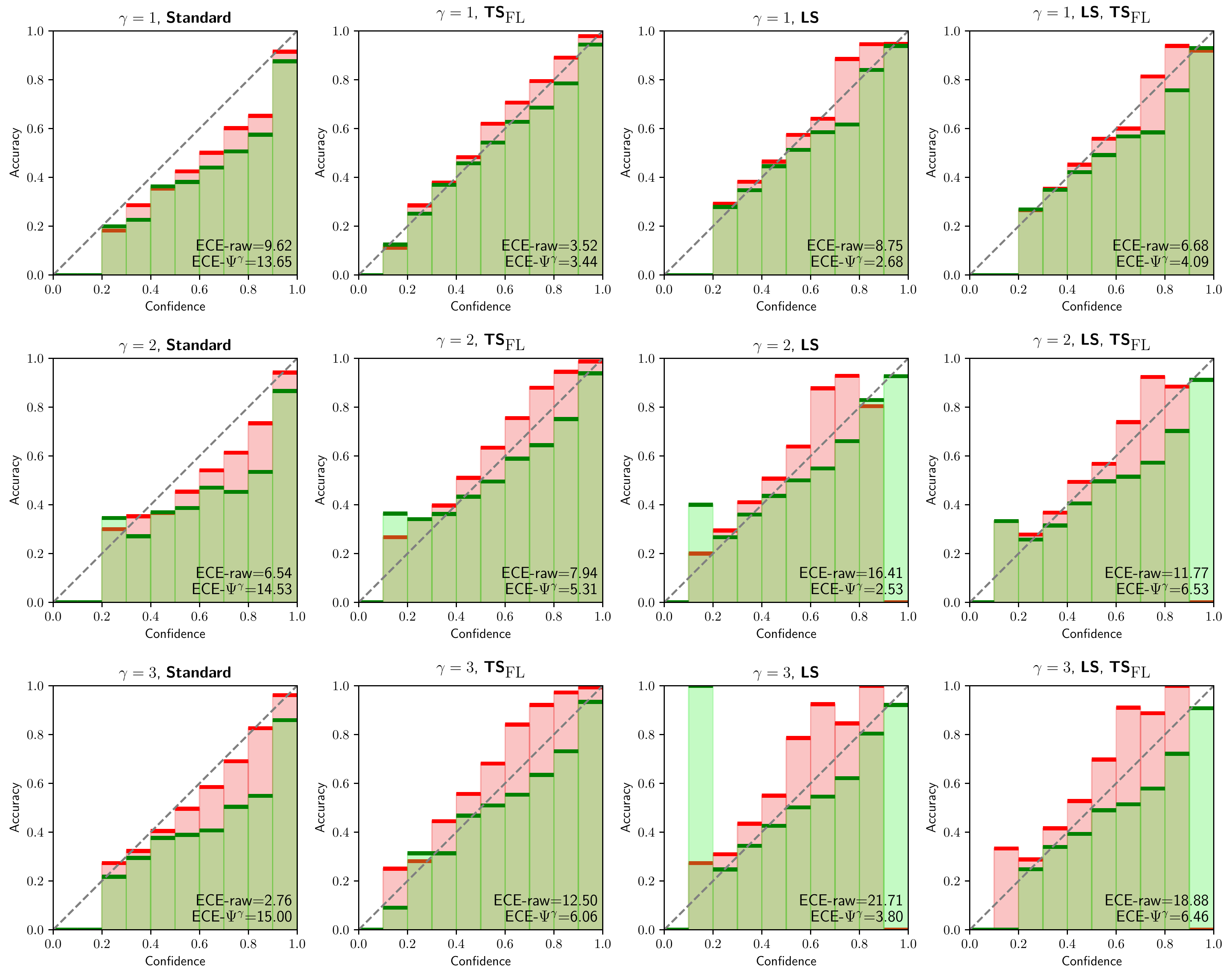}
    \caption{Reliability diagrams for CIFAR10-s and ResNet44. See each graph's title for the particular details.}
\end{figure*}    

\begin{figure*}
    \centering
    \includegraphics[width=0.7\textwidth]{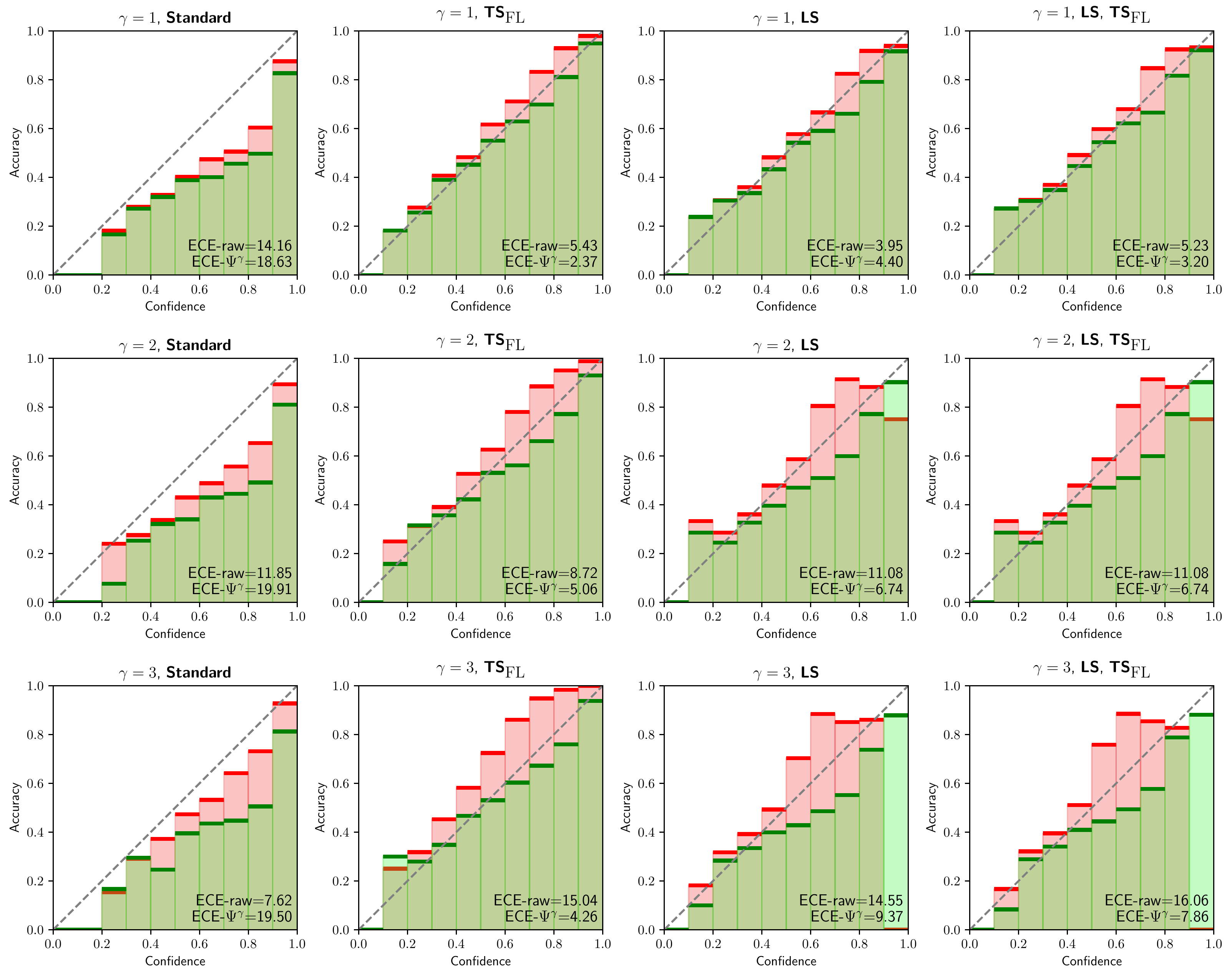}
    \caption{Reliability diagrams for CIFAR10-s and ResNet110. See each graph's title for the particular details.}
    \label{app:fig-rel-cifar10s-res110}
\end{figure*}

\begin{figure*}
    \centering
    \includegraphics[width=0.7\textwidth]{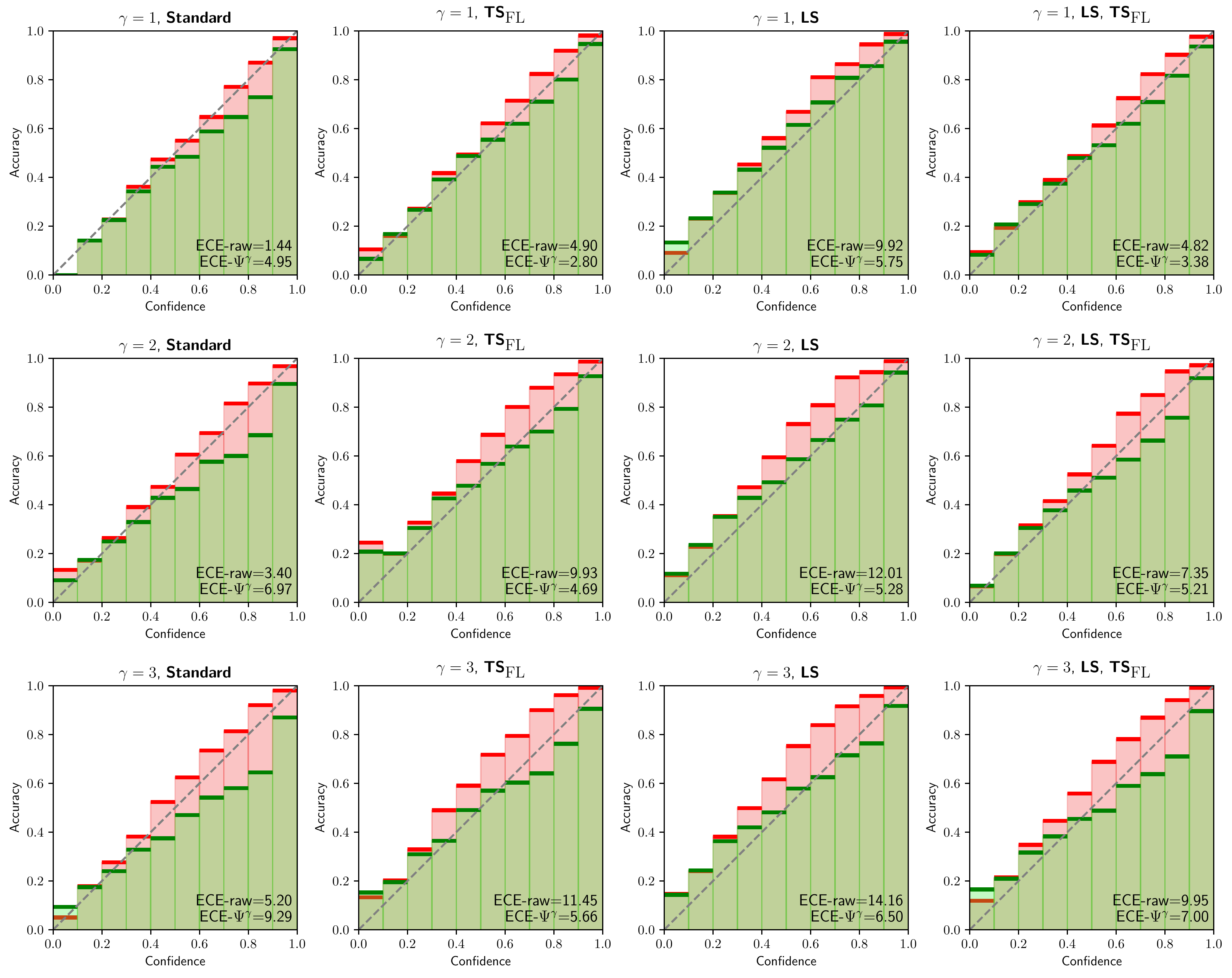}
    \caption{Reliability diagrams for CIFAR100 and ResNet8. See each graph's title for the particular details.}
    \label{app:fig-rel-cifar100-res8}
\end{figure*}         

\begin{figure*}
    \centering
    \includegraphics[width=0.7\textwidth]{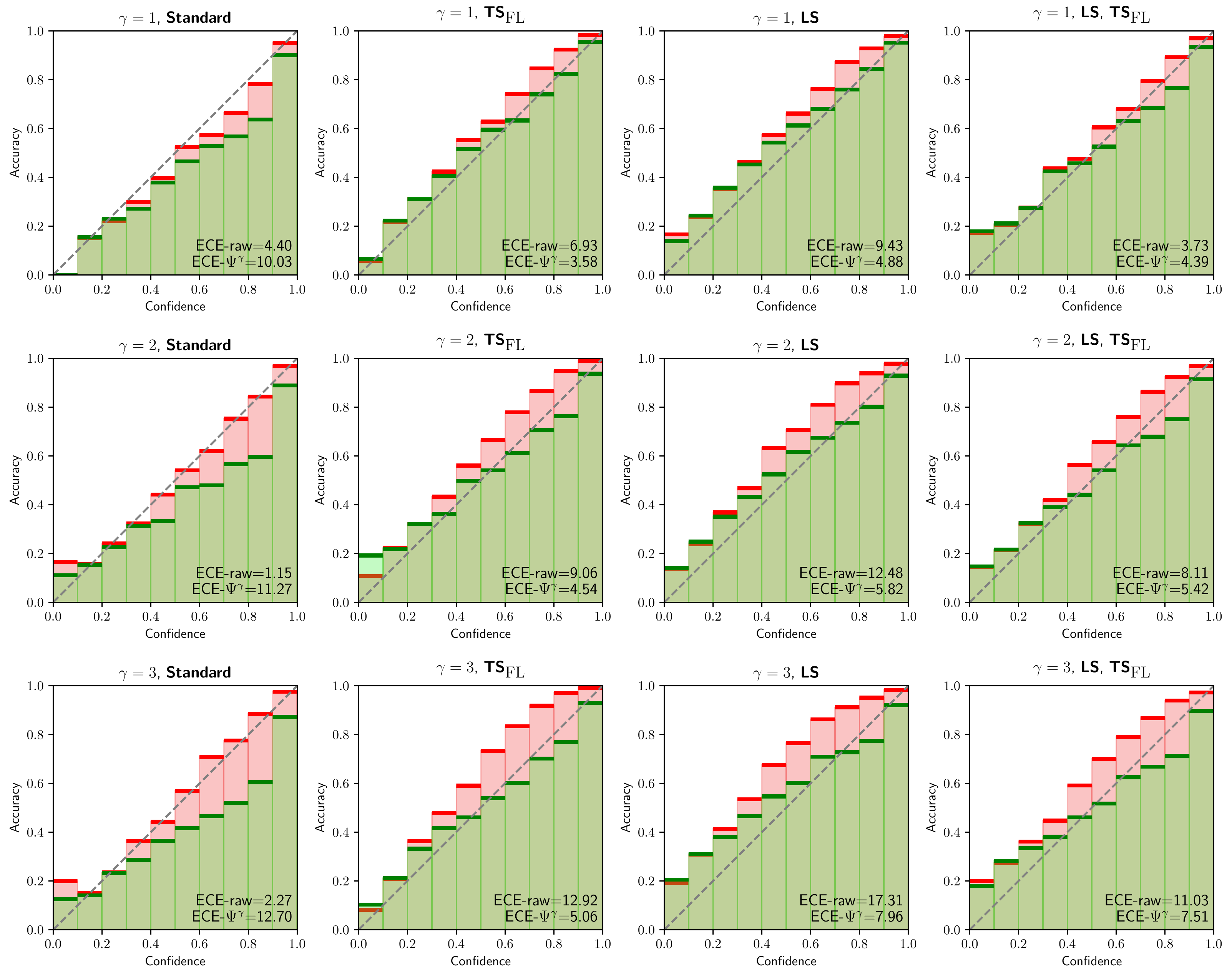}
    \caption{Reliability diagrams for CIFAR100 and ResNet20. See each graph's title for the particular details.}
\end{figure*}    

\begin{figure*}
    \centering
    \includegraphics[width=0.7\textwidth]{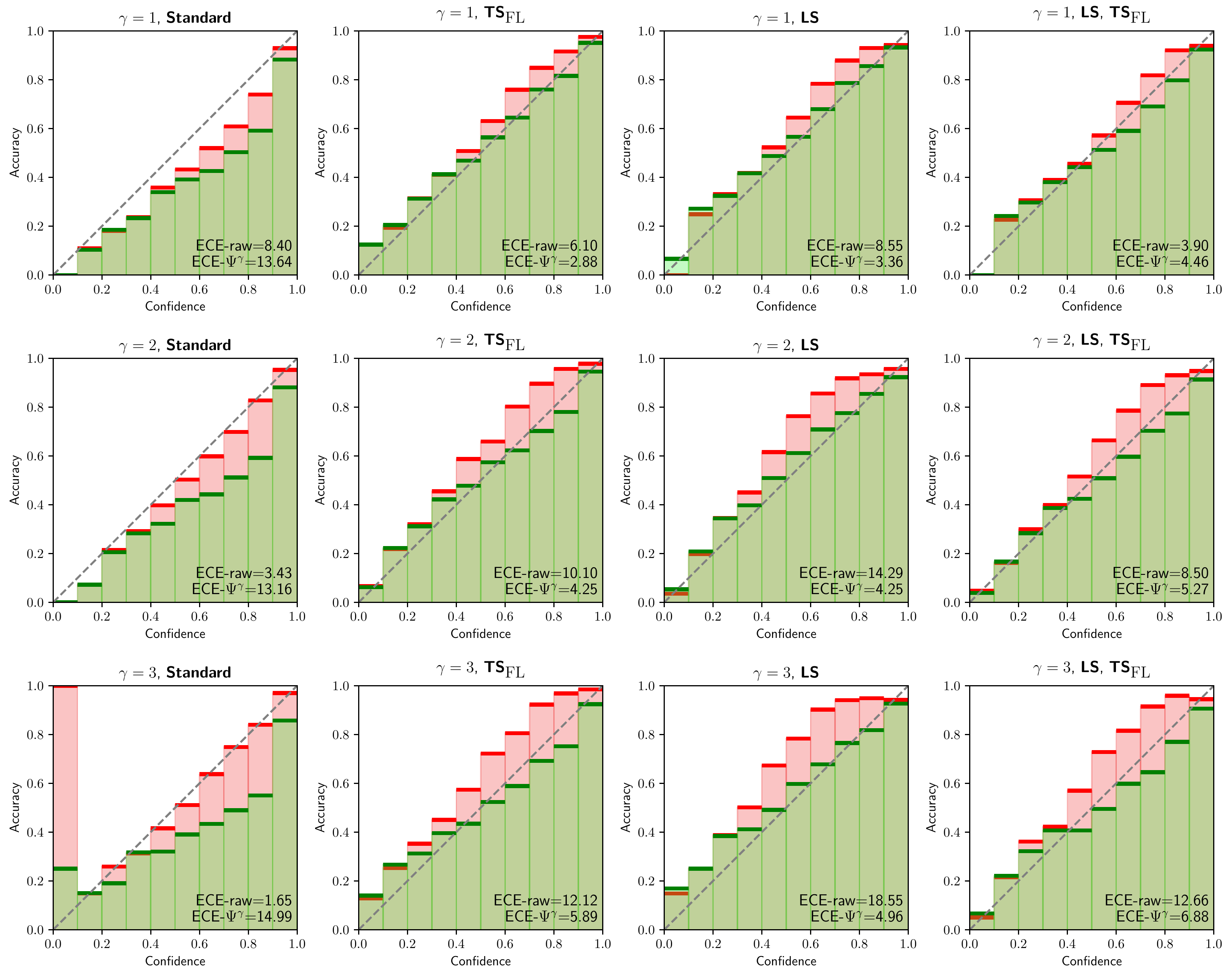}
    \caption{Reliability diagrams for CIFAR100 and ResNet44. See each graph's title for the particular details.}
\end{figure*}    

\begin{figure*}
    \centering
    \includegraphics[width=0.7\textwidth]{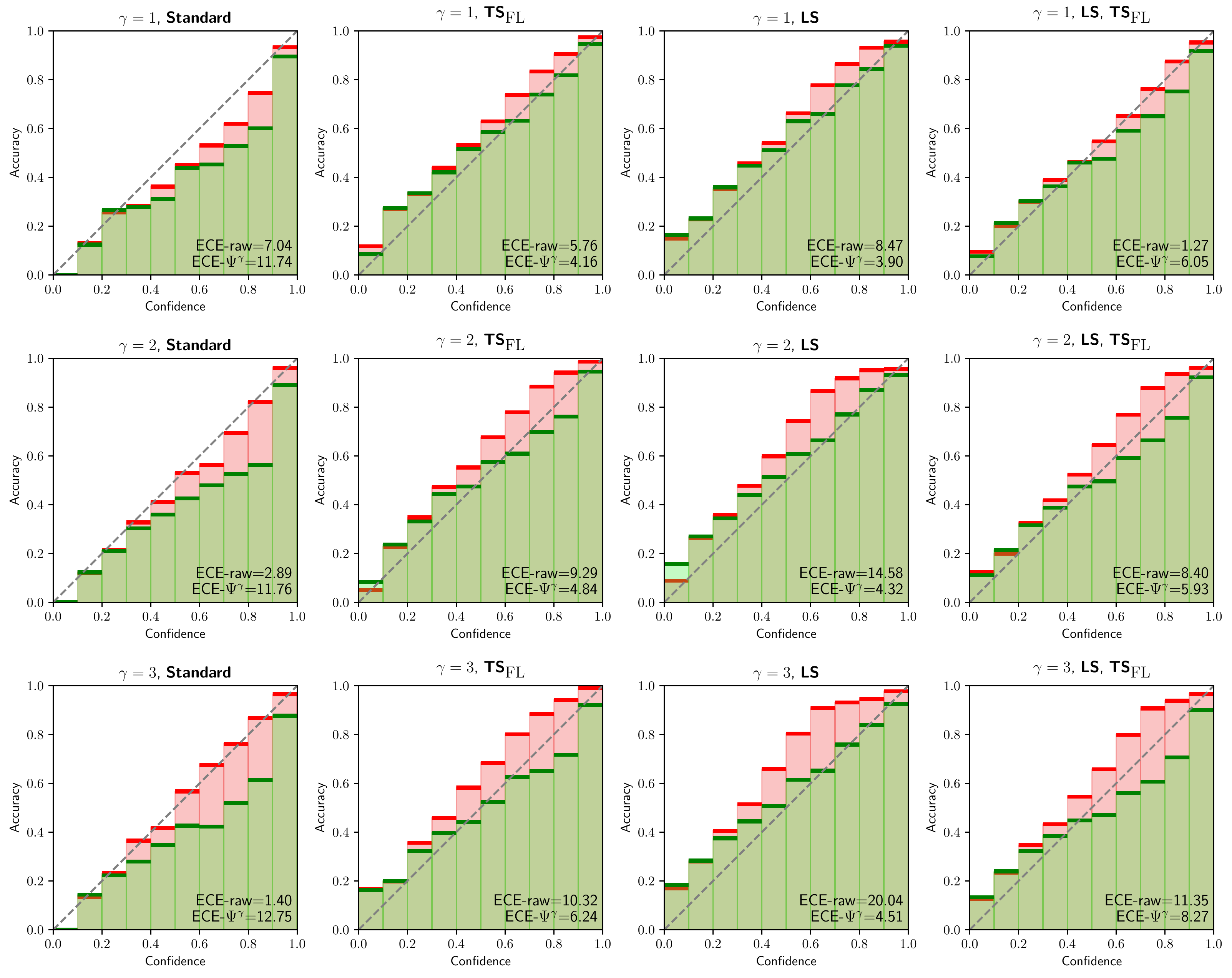}
    \caption{Reliability diagrams for CIFAR100 and ResNet110. See each graph's title for the particular details.}
    \label{app:fig-rel-cifar100-res110}
\end{figure*}

\newpage
\FloatBarrier
\subsection{Additional box plots of ECE, NLL, and CW-ECE for ResNet models}
Here, we present box plots of ECE, NLL, and CW-ECE for all ResNet models we used in this paper on the SVHN, CIFAR10, CIFAR10-s, and CIFAR100 datasets.
More details on evaluation metrics can be found in Appx.~\ref{app:metrics}.
\subsubsection*{Box plot index:}
\begin{itemize}
    \item Figures~\ref{app:fig-box-svhn-res8}-\ref{app:fig-box-svhn-res110}: Box plots for SVHN using ResNet8, ResNet20, ResNet44, and ResNet110.
    \item Figures~\ref{app:fig-box-cifar10-res8}-\ref{app:fig-box-cifar10-res110}: Box plots for CIFAR10 using ResNet8, ResNet20, ResNet44, and ResNet110.
    \item Figures~\ref{app:fig-box-cifar10s-res8}-\ref{app:fig-box-cifar10s-res110}: Box plots for CIFAR10-s using ResNet8, ResNet20, ResNet44, and ResNet110.
    \item Figures~\ref{app:fig-box-cifar100-res8}-\ref{app:fig-box-cifar100-res110}: Box plots for CIFAR100 using ResNet8, ResNet20, ResNet44, and ResNet110.
\end{itemize}
\FloatBarrier
\newpage
\begin{figure*}
    \centering
    \includegraphics[width=0.65\textwidth]{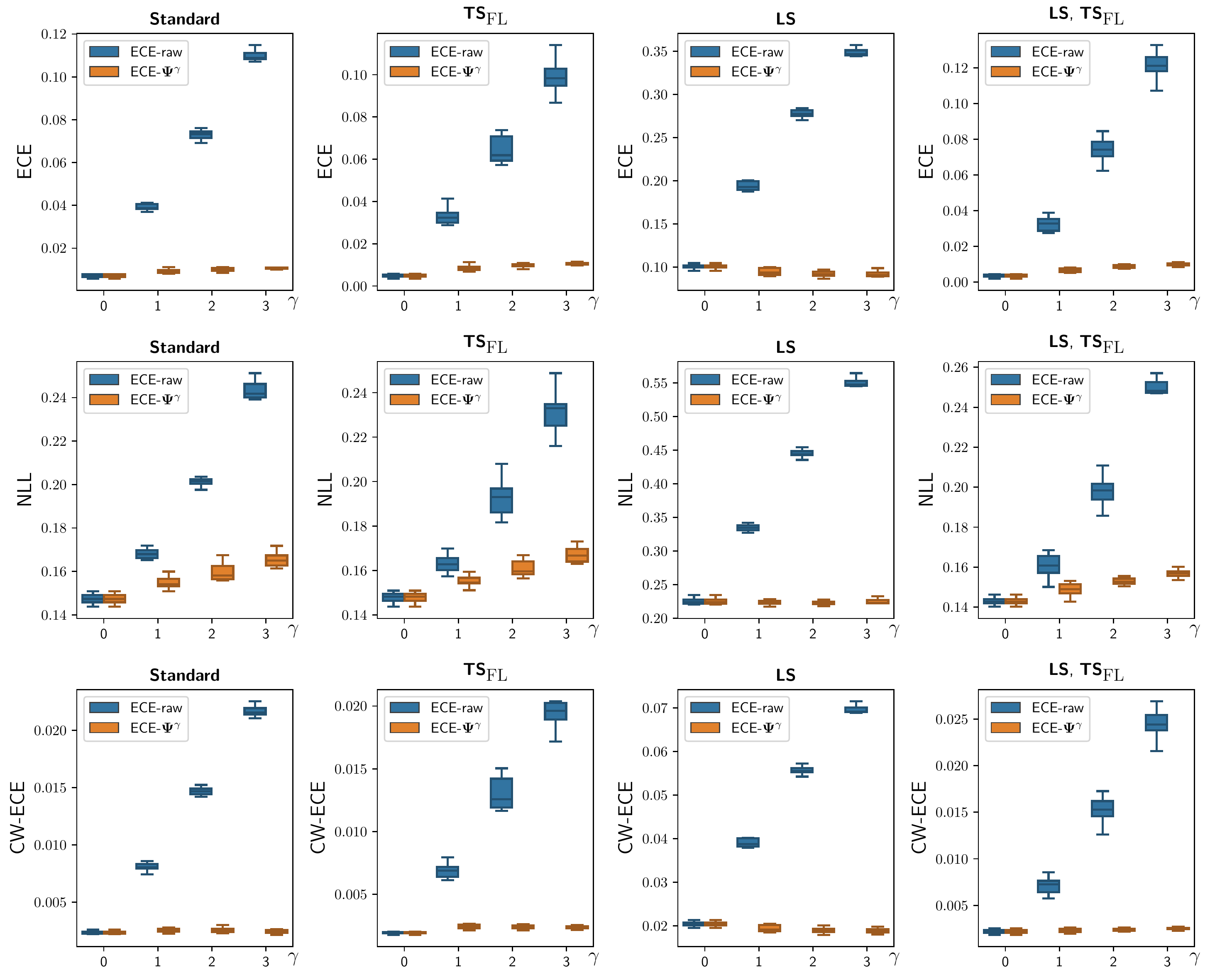}
    \caption{Box plots of ECE, NLL, and CW-ECE for SVHN and ResNet8. See each graph's title for the particular details.}
    \label{app:fig-box-svhn-res8}
\end{figure*}         

\begin{figure*}
    \centering
    \includegraphics[width=0.65\textwidth]{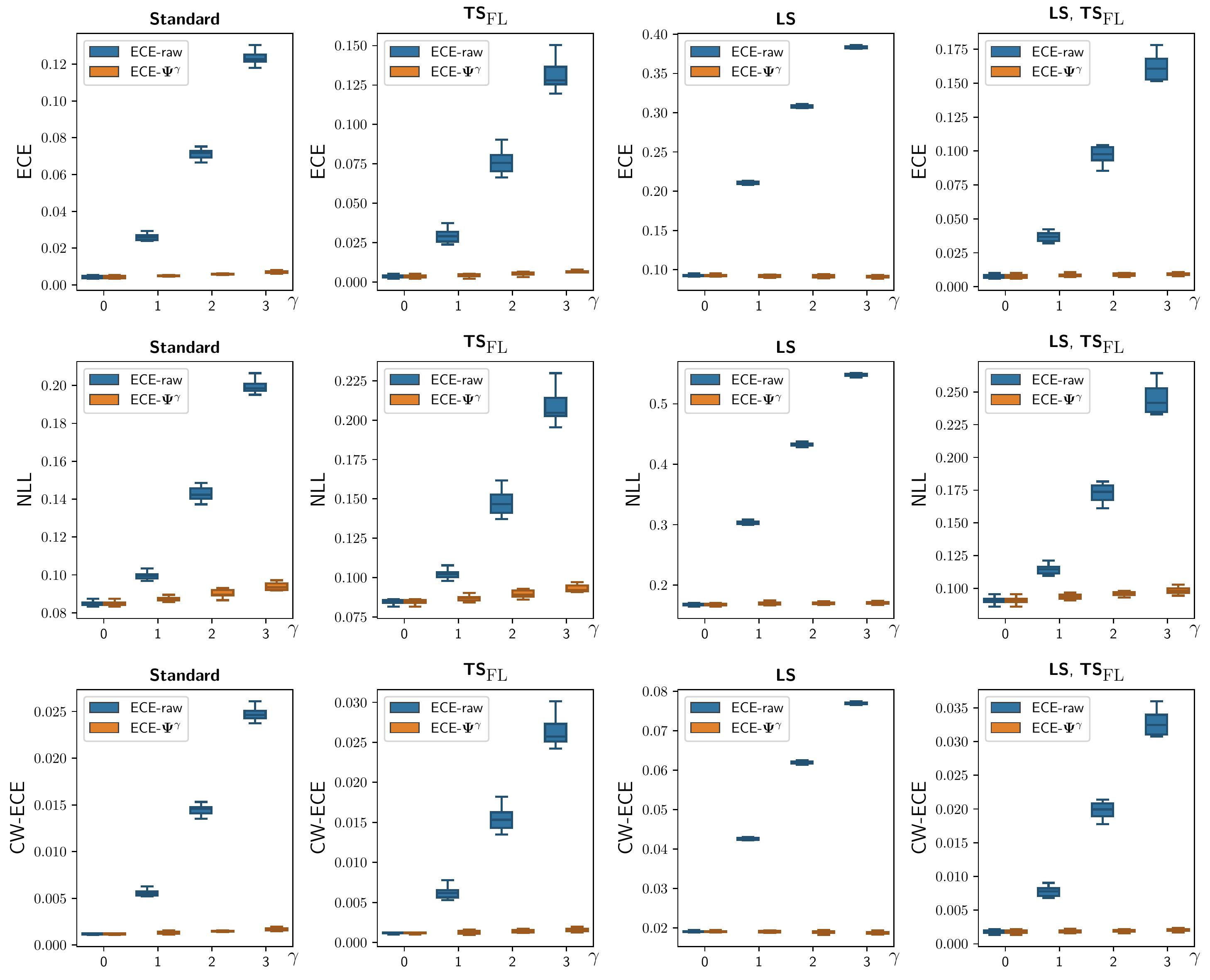}
    \caption{Box plots of ECE, NLL, and CW-ECE for SVHN and ResNet20. See each graph's title for the particular details.}
\end{figure*}    

\begin{figure*}
    \centering
    \includegraphics[width=0.65\textwidth]{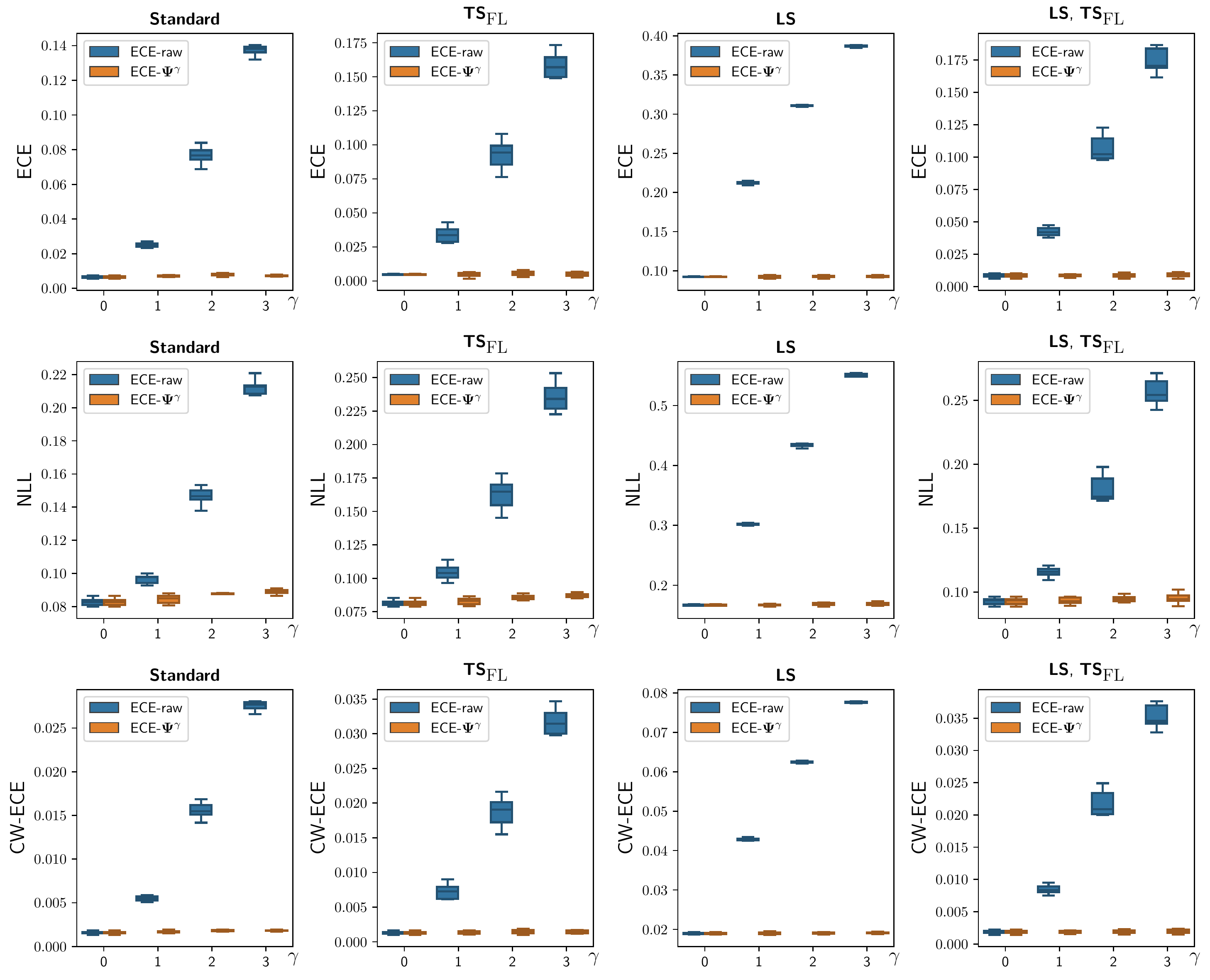}
    \caption{Box plots of ECE, NLL, and CW-ECE for SVHN and ResNet44. See each graph's title for the particular details.}
\end{figure*}    

\begin{figure*}
    \centering
    \includegraphics[width=0.65\textwidth]{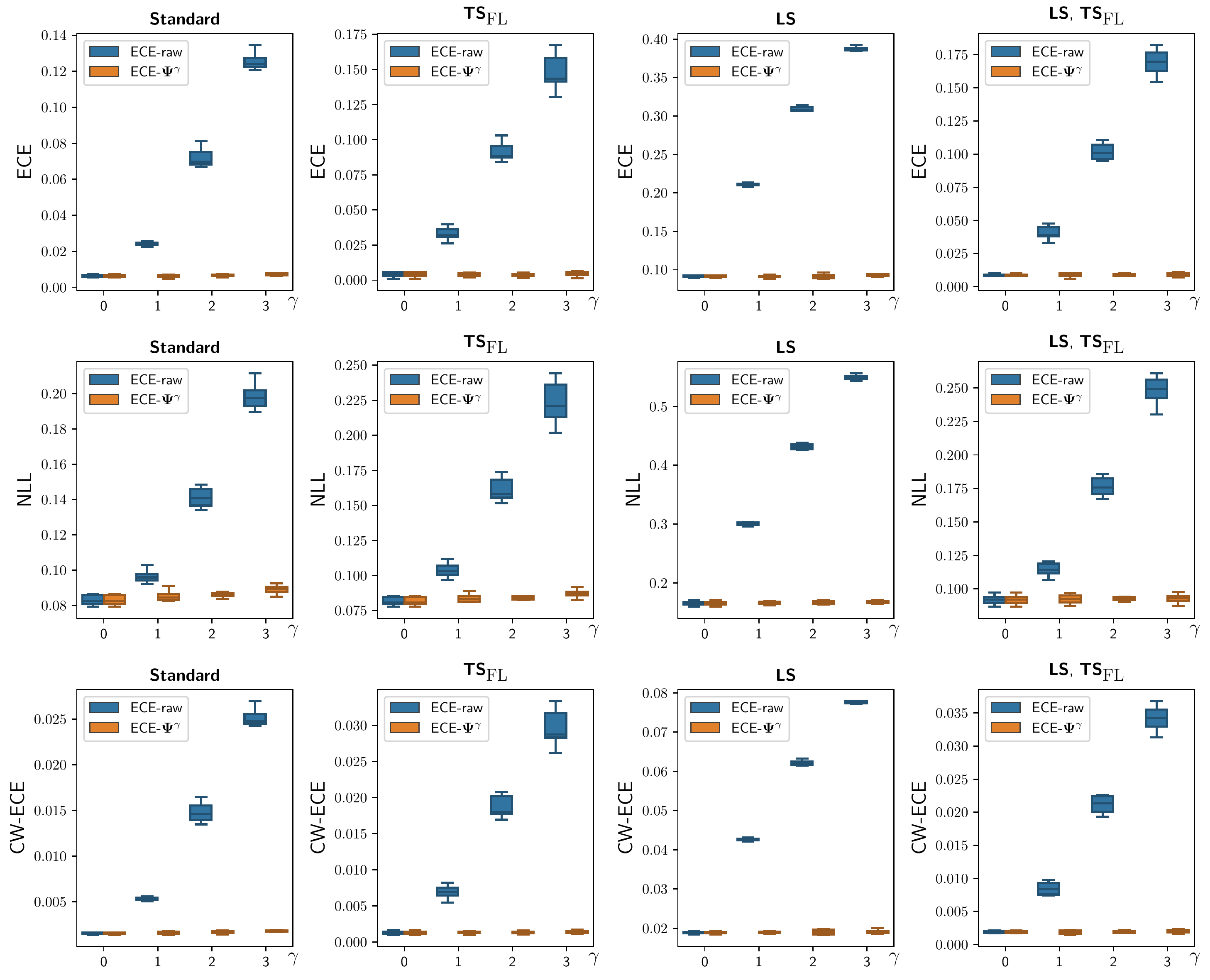}
    \caption{Box plots of ECE, NLL, and CW-ECE for SVHN and ResNet110. See each graph's title for the particular details.}
    \label{app:fig-box-svhn-res110}
\end{figure*}

\begin{figure*}
    \centering
    \includegraphics[width=0.65\textwidth]{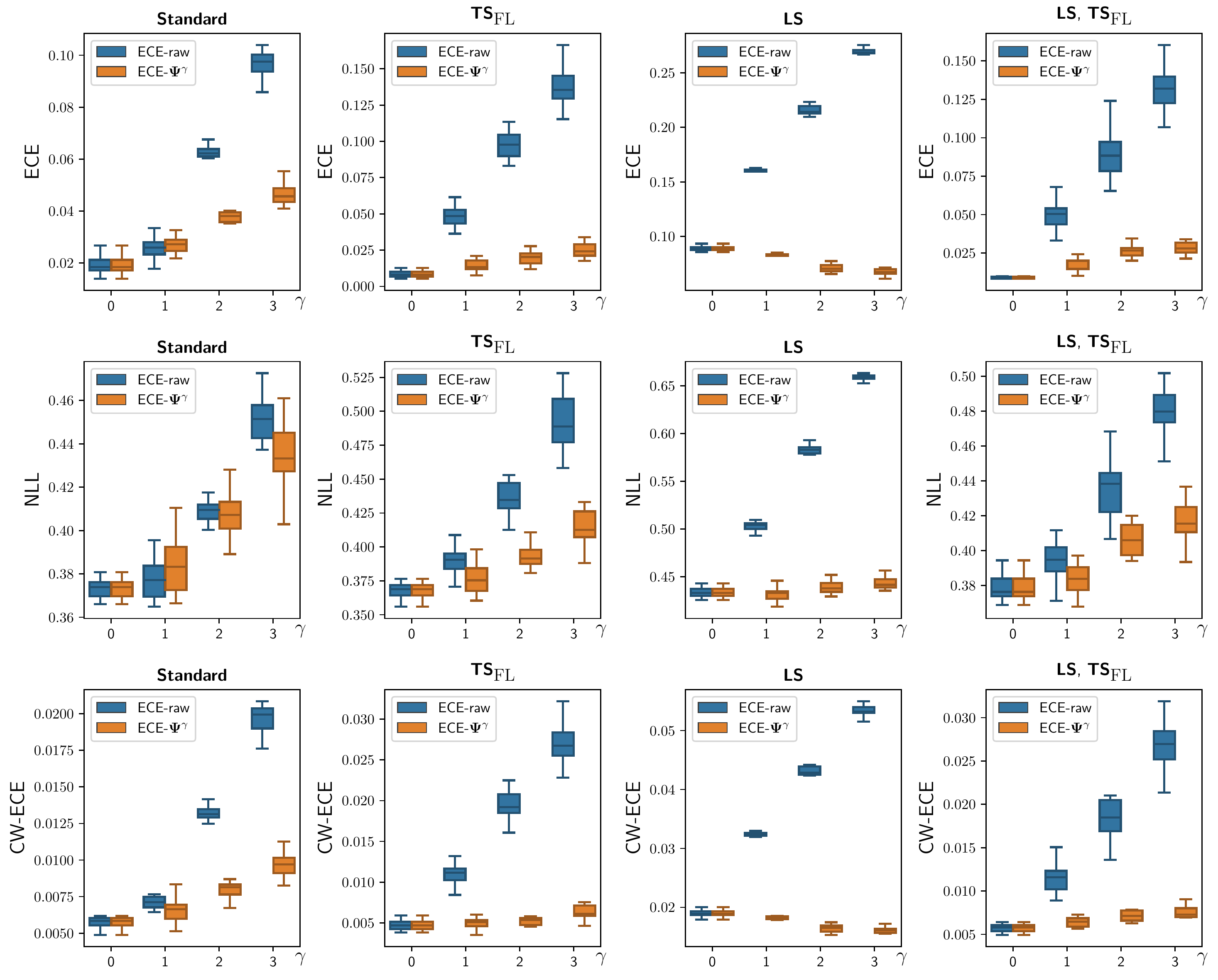}
    \caption{Box plots of ECE, NLL, and CW-ECE for CIFAR10 and ResNet8. See each graph's title for the particular details.}
    \label{app:fig-box-cifar10-res8}
\end{figure*}         

\begin{figure*}
    \centering
    \includegraphics[width=0.65\textwidth]{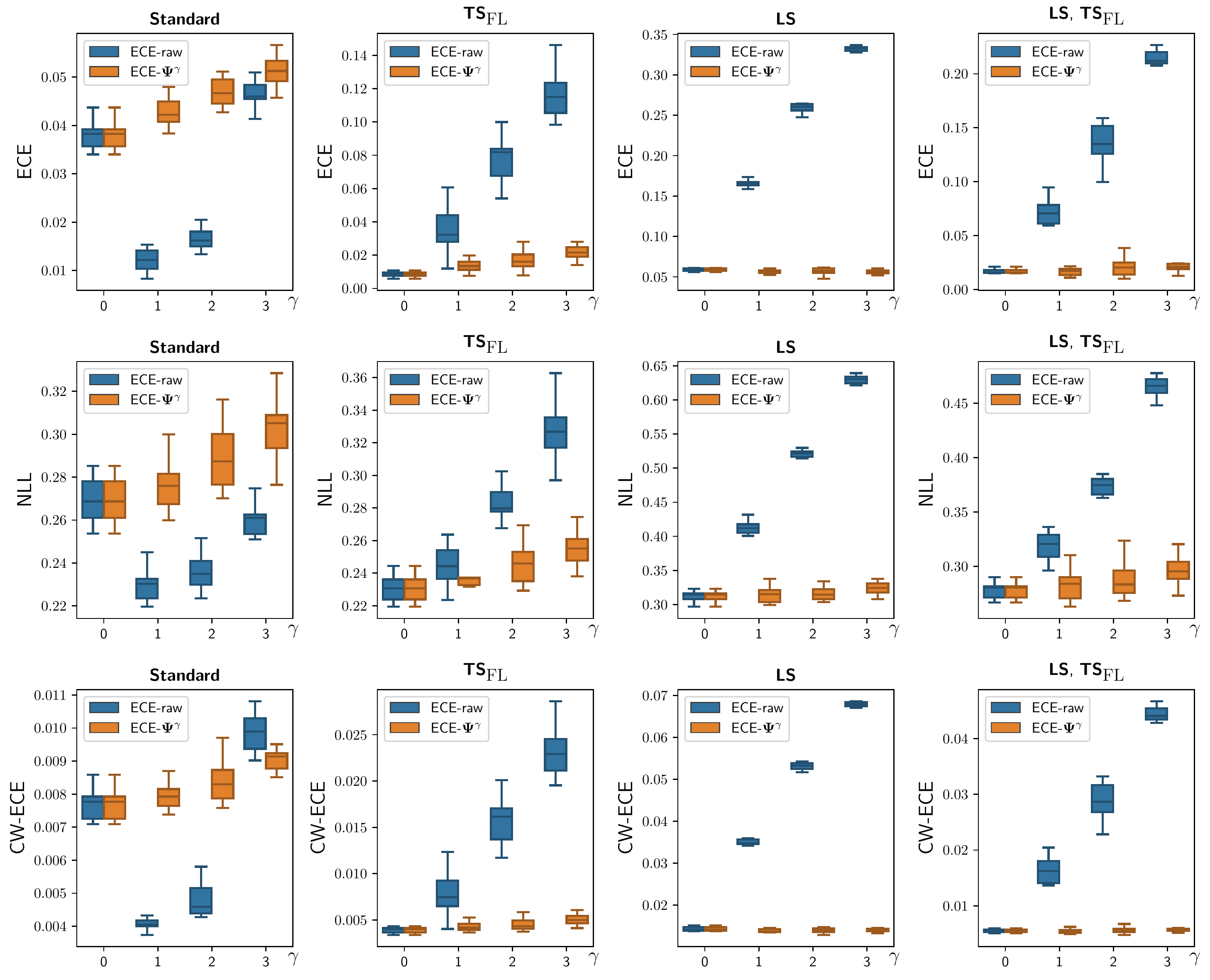}
    \caption{Box plots of ECE, NLL, and CW-ECE for CIFAR10 and ResNet20. See each graph's title for the particular details.}
\end{figure*}    

\begin{figure*}
    \centering
    \includegraphics[width=0.65\textwidth]{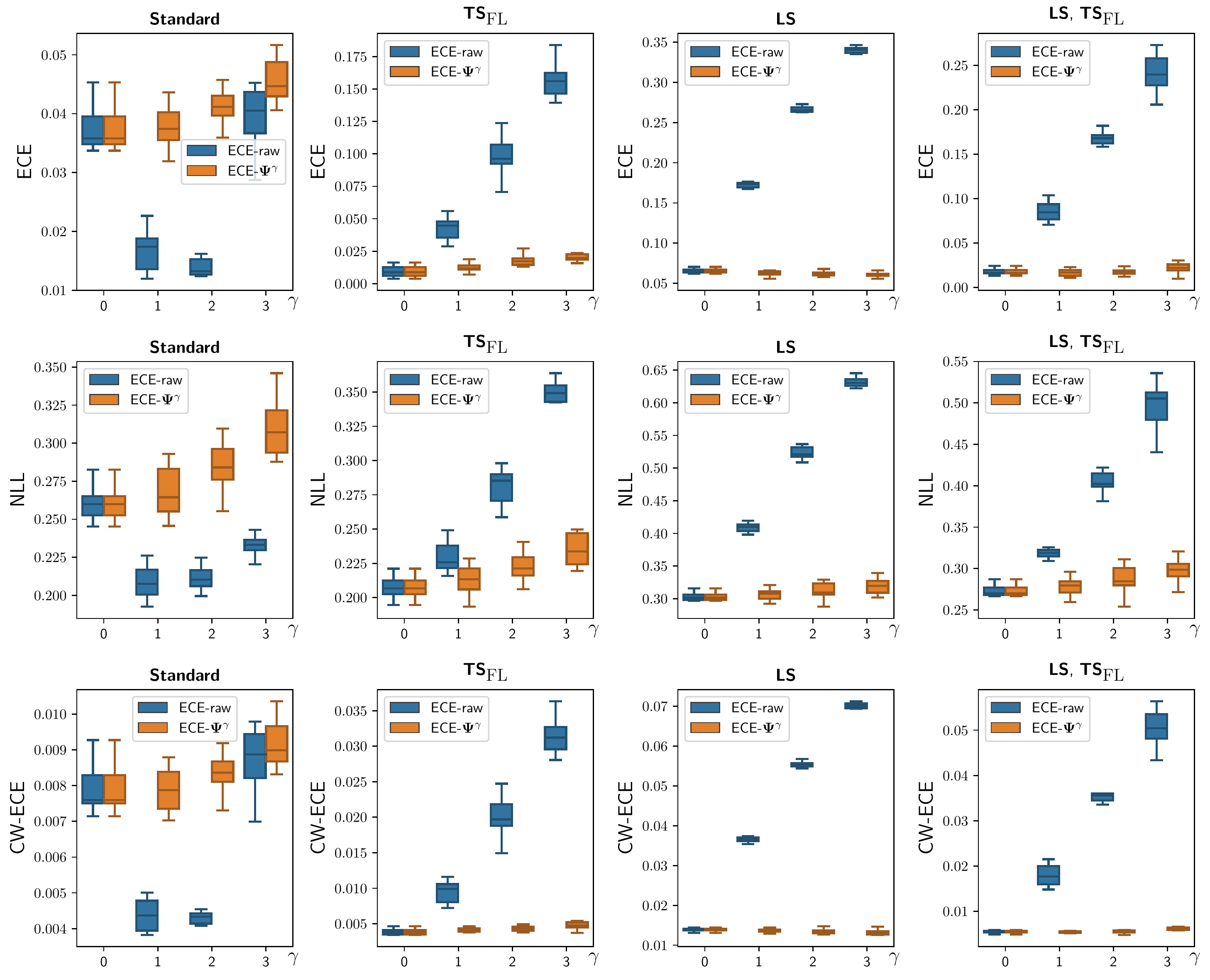}
    \caption{Box plots of ECE, NLL, and CW-ECE for CIFAR10 and ResNet44. See each graph's title for the particular details.}
\end{figure*}    

\begin{figure*}
    \centering
    \includegraphics[width=0.65\textwidth]{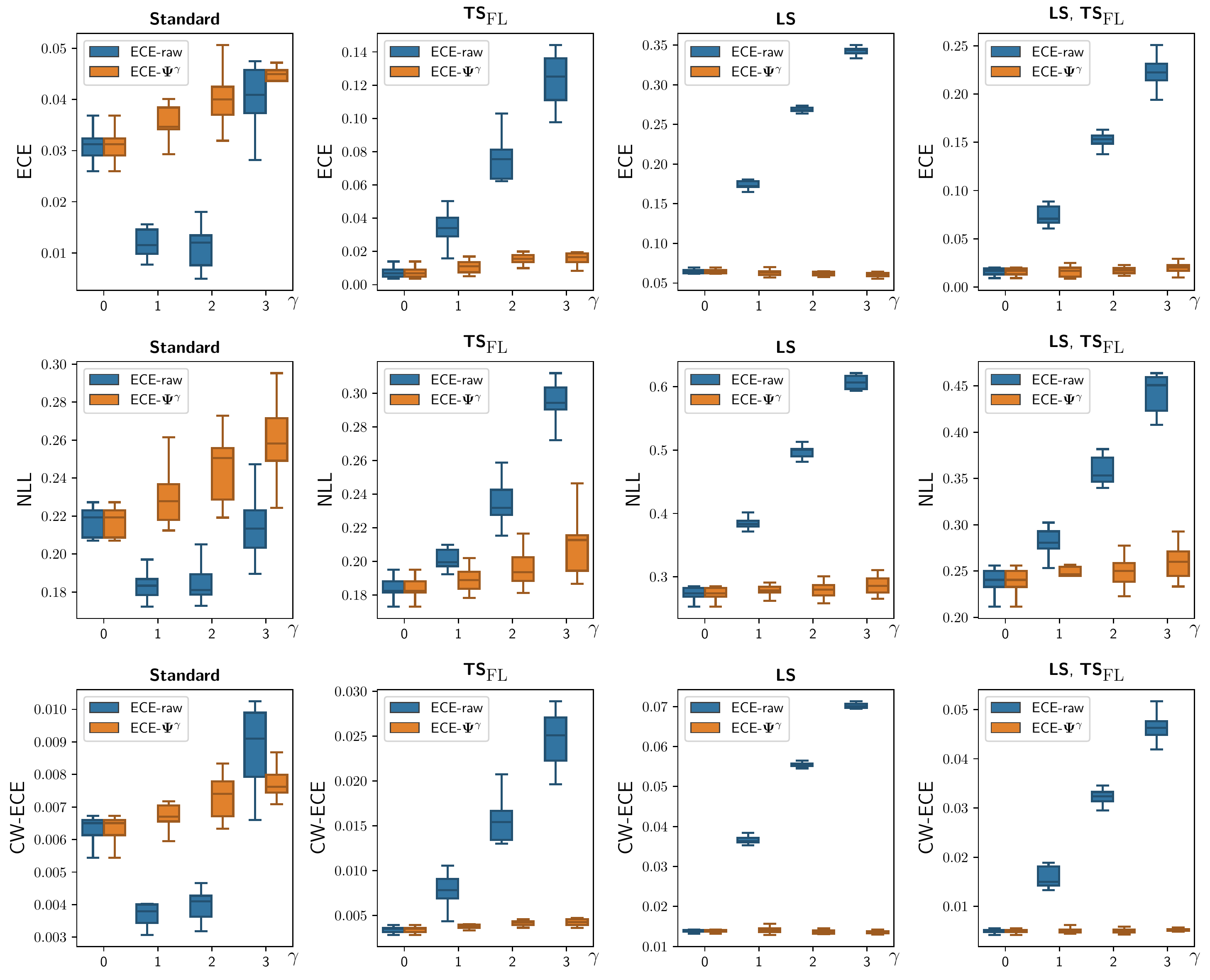}
    \caption{Box plots of ECE, NLL, and CW-ECE for CIFAR10 and ResNet110. See each graph's title for the particular details.}
    \label{app:fig-box-cifar10-res110}
\end{figure*}

\begin{figure*}
    \centering
    \includegraphics[width=0.65\textwidth]{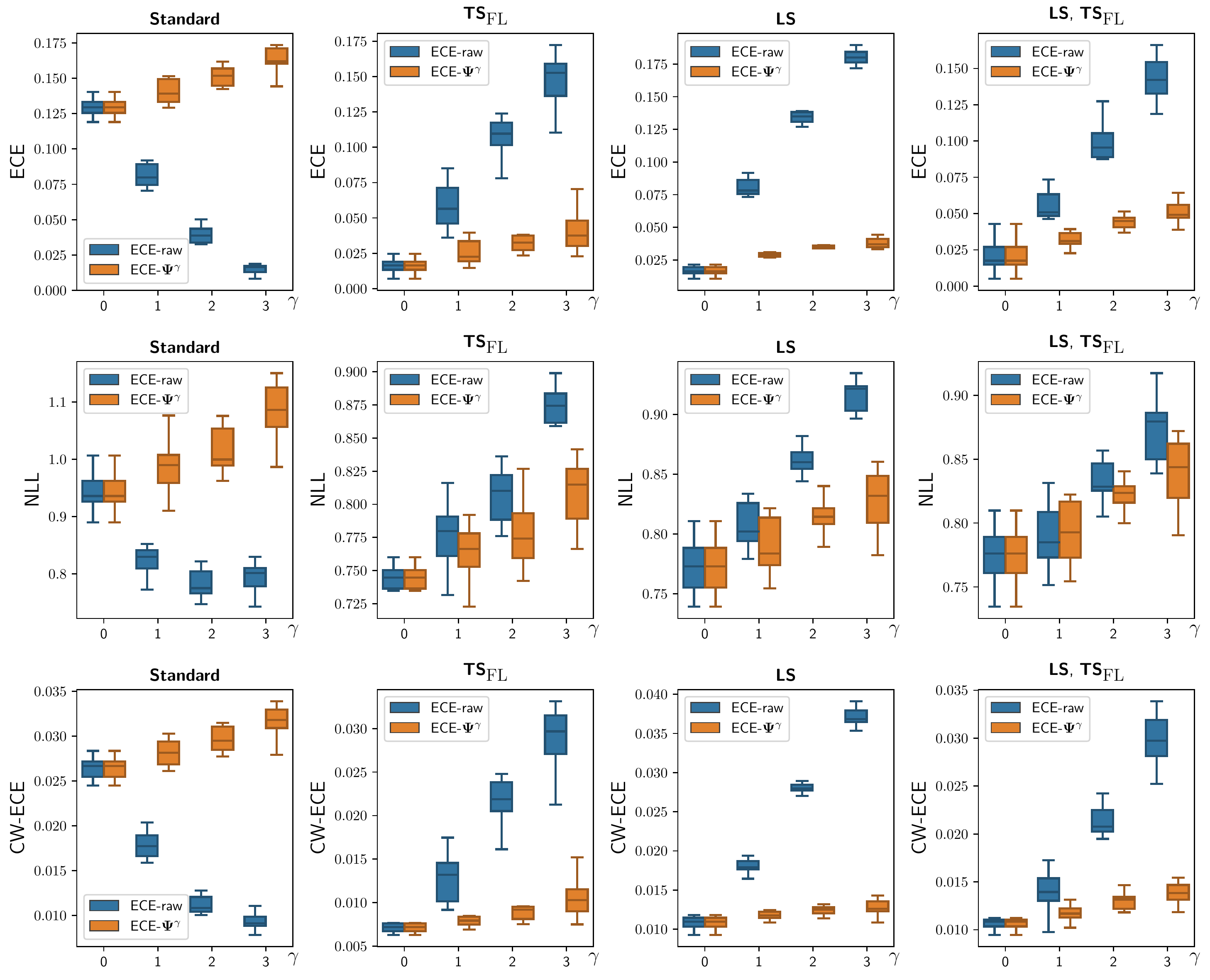}
    \caption{Box plots of ECE, NLL, and CW-ECE for CIFAR10-s and ResNet8. See each graph's title for the particular details.}
    \label{app:fig-box-cifar10s-res8}
\end{figure*}         

\begin{figure*}
    \centering
    \includegraphics[width=0.65\textwidth]{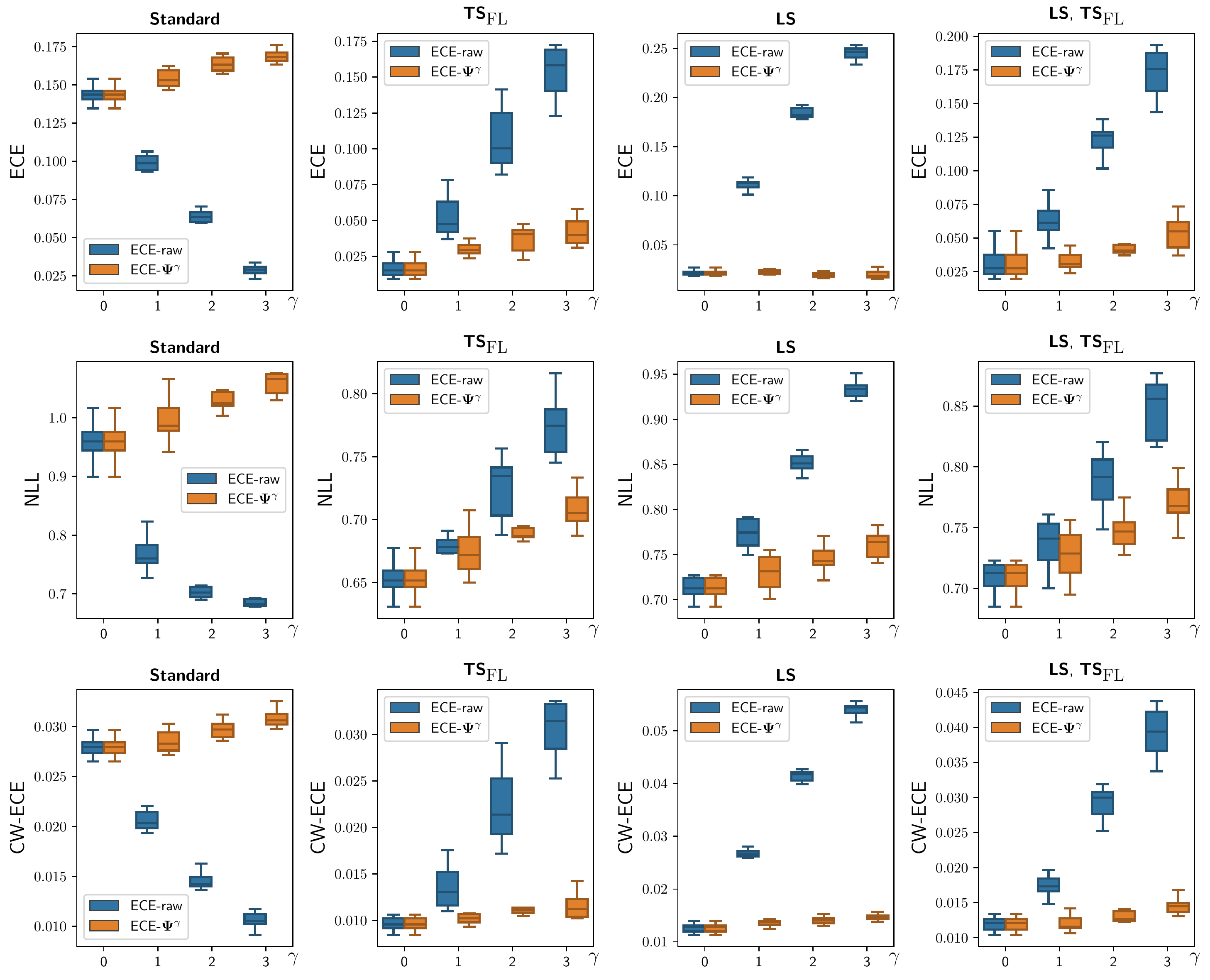}
    \caption{Box plots of ECE, NLL, and CW-ECE for CIFAR10-s and ResNet20. See each graph's title for the particular details.}
\end{figure*}    

\begin{figure*}
    \centering
    \includegraphics[width=0.65\textwidth]{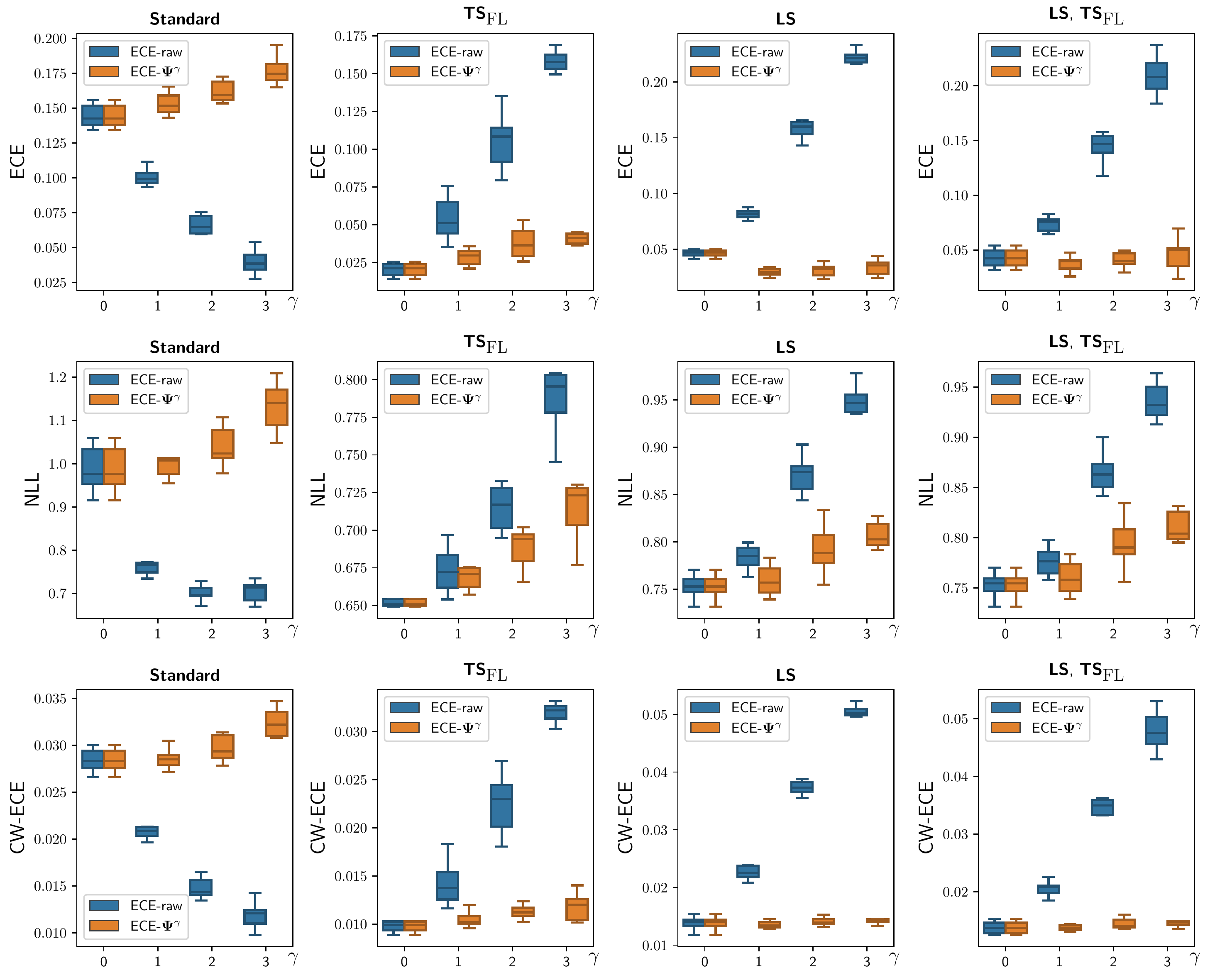}
    \caption{Box plots of ECE, NLL, and CW-ECE for CIFAR10-s and ResNet44. See each graph's title for the particular details.}
\end{figure*}    

\begin{figure*}
    \centering
    \includegraphics[width=0.65\textwidth]{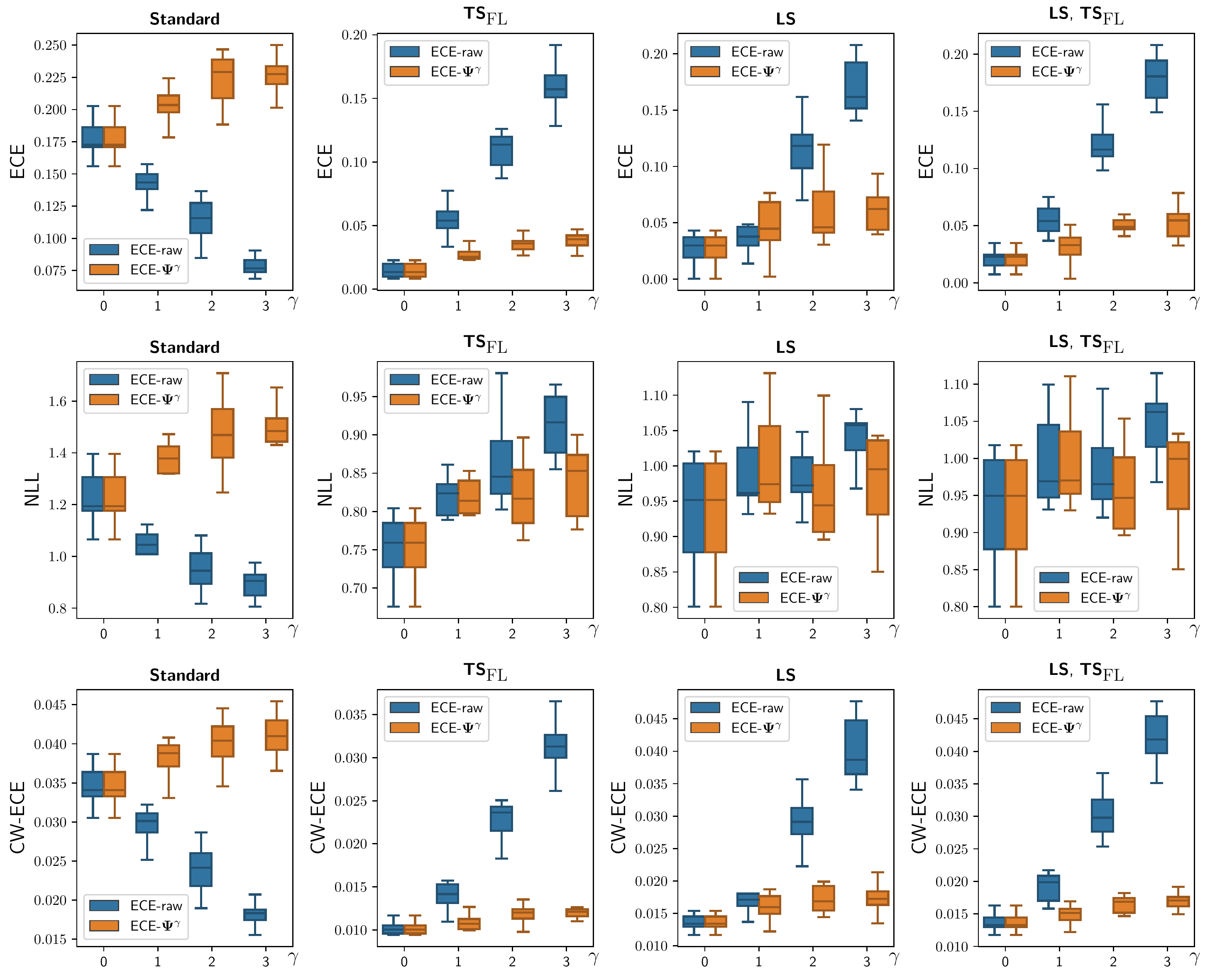}
    \caption{Box plots of ECE, NLL, and CW-ECE for CIFAR10-s and ResNet110. See each graph's title for the particular details.}
    \label{app:fig-box-cifar10s-res110}
\end{figure*}

\begin{figure*}
    \centering
    \includegraphics[width=0.65\textwidth]{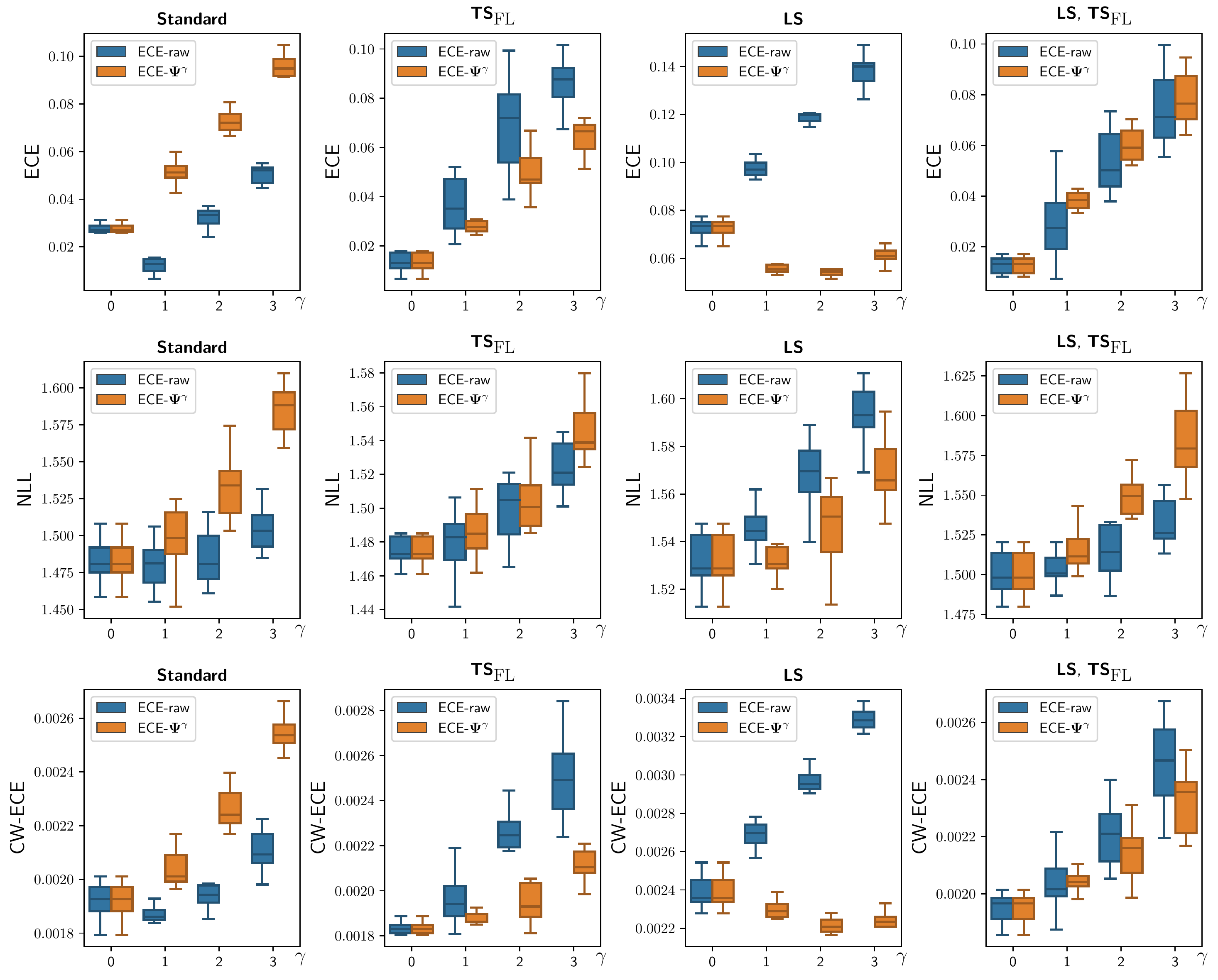}
    \caption{Box plots of ECE, NLL, and CW-ECE for CIFAR100 and ResNet8. See each graph's title for the particular details.}
    \label{app:fig-box-cifar100-res8}
\end{figure*}         

\begin{figure*}
    \centering
    \includegraphics[width=0.65\textwidth]{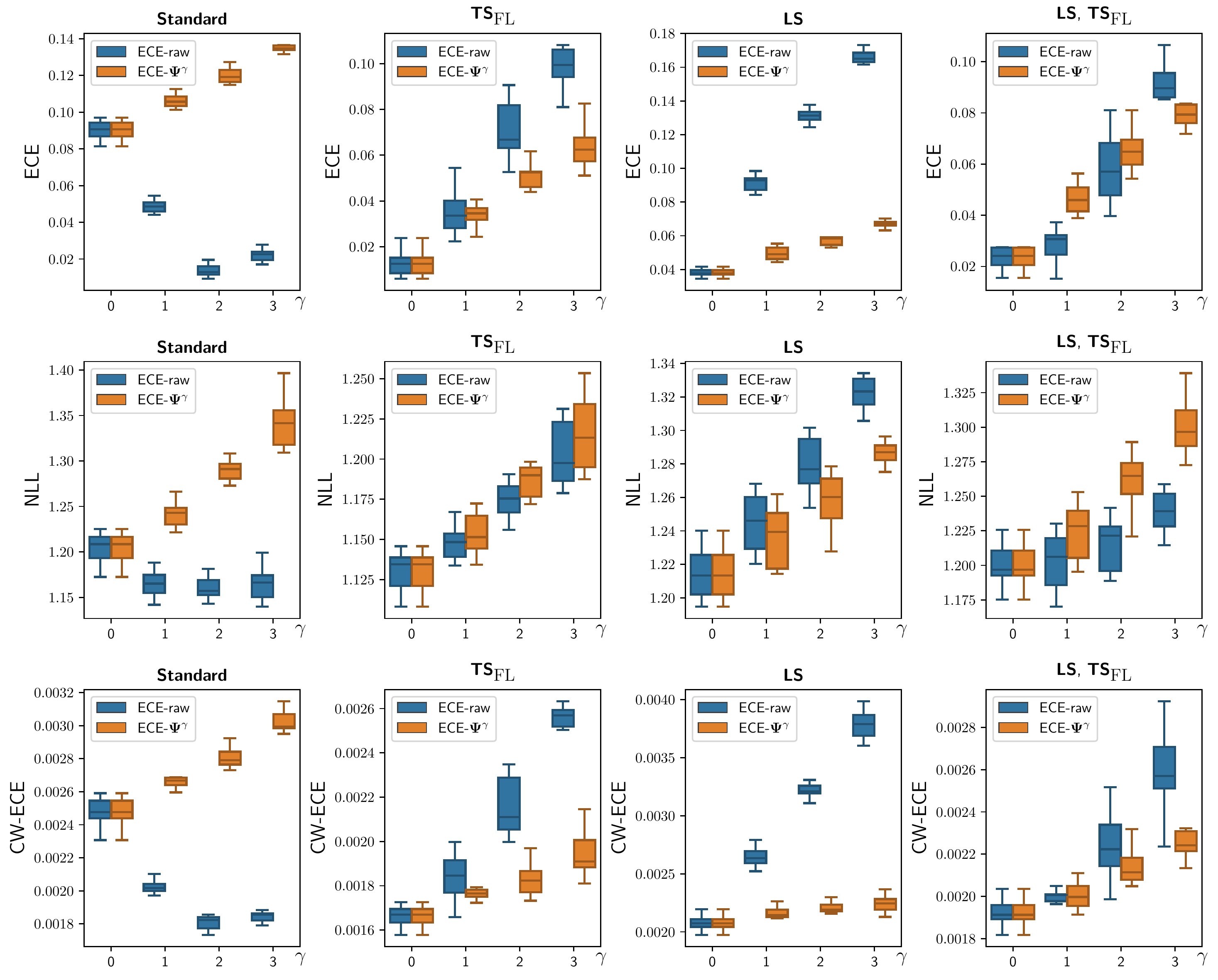}
    \caption{Box plots of ECE, NLL, and CW-ECE for CIFAR100 and ResNet20. See each graph's title for the particular details.}
\end{figure*}    

\begin{figure*}
    \centering
    \includegraphics[width=0.65\textwidth]{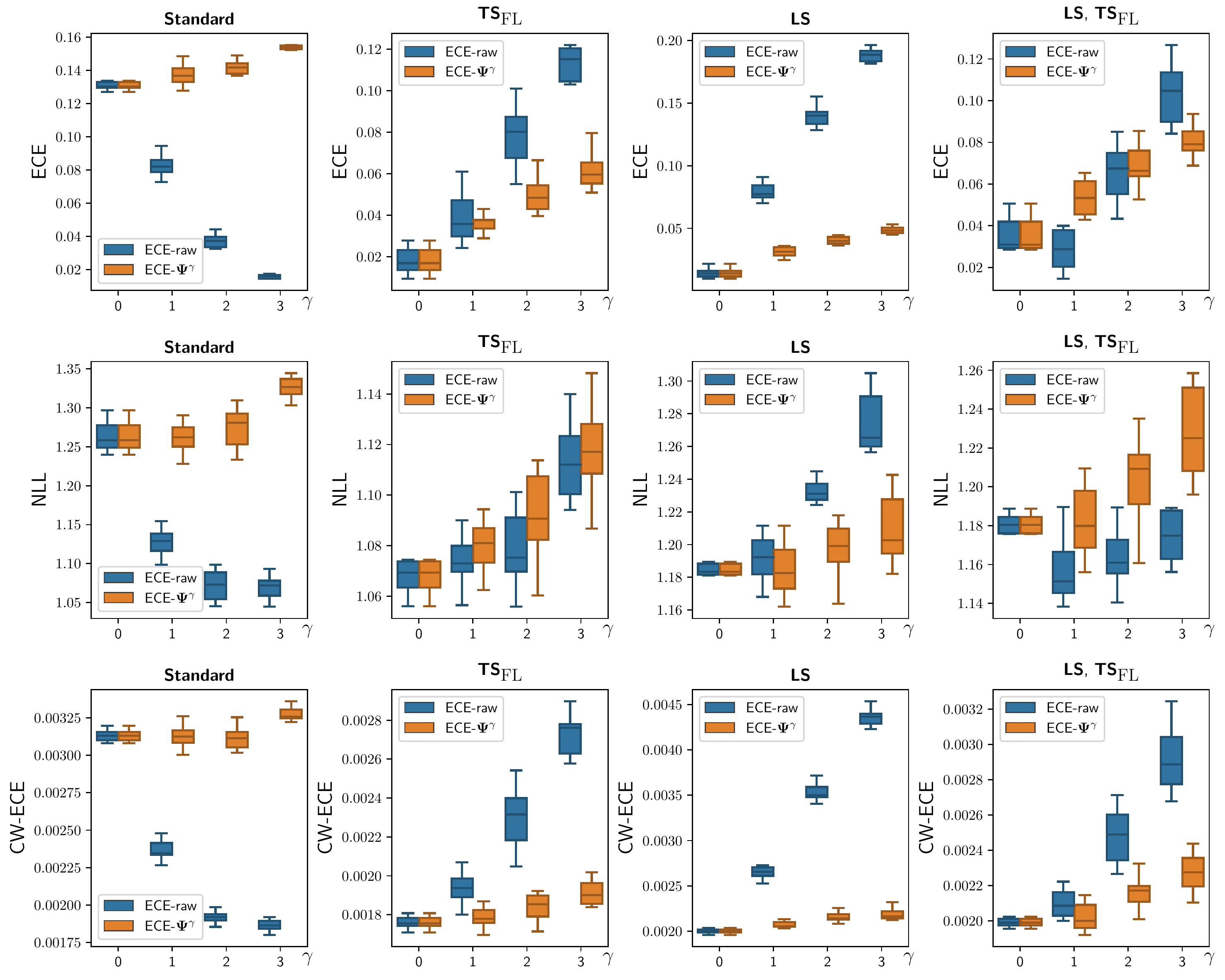}
    \caption{Box plots of ECE, NLL, and CW-ECE for CIFAR100 and ResNet44. See each graph's title for the particular details.}
\end{figure*}    

\begin{figure*}
    \centering
    \includegraphics[width=0.65\textwidth]{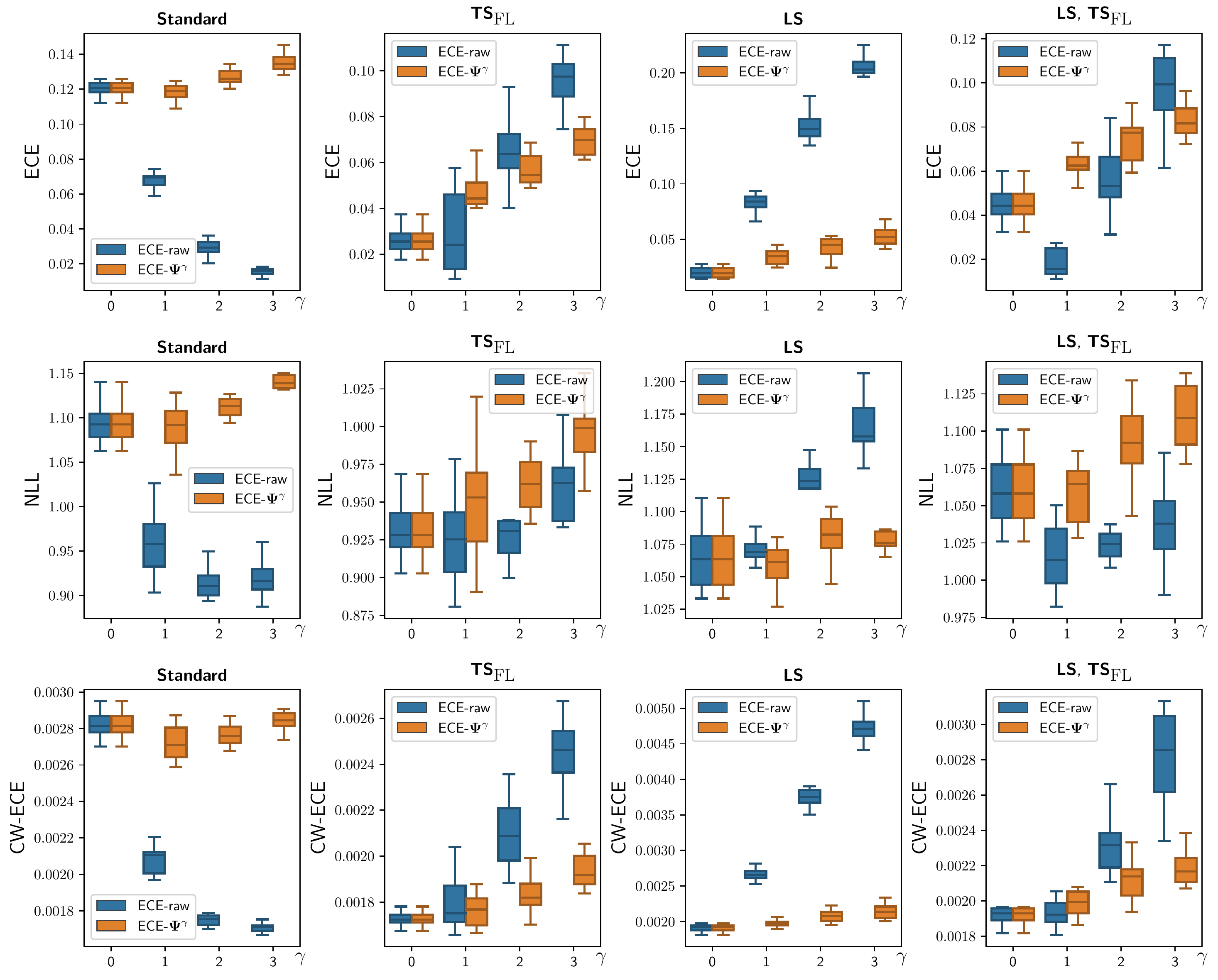}
    \caption{Box plots of ECE, NLL, and CW-ECE for CIFAR100 and ResNet110. See each graph's title for the particular details.}
    \label{app:fig-box-cifar100-res110}
\end{figure*} 

\begin{thebibliography}{56}
\providecommand{\natexlab}[1]{#1}
\providecommand{\url}[1]{\texttt{#1}}
\expandafter\ifx\csname urlstyle\endcsname\relax
  \providecommand{\doi}[1]{doi: #1}\else
  \providecommand{\doi}{doi: \begingroup \urlstyle{rm}\Url}\fi

\bibitem[Al~Rahhal et~al.(2019)Al~Rahhal, Bazi, Almubarak, Alajlan, and
  Al~Zuair]{al2019dense}
Mohamad~Mahmoud Al~Rahhal, Yakoub Bazi, Haidar Almubarak, Naif Alajlan, and
  Mansour Al~Zuair.
\newblock Dense convolutional networks with focal loss and image generation for
  electrocardiogram classification.
\newblock \emph{IEEE Access}, 7:\penalty0 182225--182237, 2019.

\bibitem[Bao and Sugiyama(2020)]{bao2020calibrated2}
Han Bao and Masashi Sugiyama.
\newblock Calibrated surrogate maximization of linear-fractional utility in
  binary classification.
\newblock In \emph{International Conference on Artificial Intelligence and
  Statistics}, pages 2337--2347. PMLR, 2020.

\bibitem[Bartlett et~al.(2006)Bartlett, Jordan, and McAuliffe]{bartlett2006}
Peter~L Bartlett, Michael~I Jordan, and Jon~D McAuliffe.
\newblock Convexity, classification, and risk bounds.
\newblock \emph{Journal of the American Statistical Association}, 101\penalty0
  (473):\penalty0 138--156, 2006.

\bibitem[Boyd and Vandenberghe(2004)]{boyd2004convex}
Stephen Boyd and Lieven Vandenberghe.
\newblock \emph{Convex optimization}.
\newblock Cambridge university press, 2004.

\bibitem[Buja et~al.(2005)Buja, Stuetzle, and Shen]{buja2005loss}
Andreas Buja, Werner Stuetzle, and Yi~Shen.
\newblock Loss functions for binary class probability estimation and
  classification: Structure and applications.
\newblock \emph{Working draft}, 2005.

\bibitem[Chang et~al.(2018)Chang, Zhang, Ye, Huang, Wang, and
  Yao]{chang2018brain}
Jie Chang, Xiaoci Zhang, Minquan Ye, Daobin Huang, Peipei Wang, and Chuanwen
  Yao.
\newblock Brain tumor segmentation based on 3d unet with multi-class focal
  loss.
\newblock In \emph{2018 11th International Congress on Image and Signal
  Processing, BioMedical Engineering and Informatics (CISP-BMEI)}, pages 1--5.
  IEEE, 2018.

\bibitem[Charoenphakdee et~al.(2019)Charoenphakdee, Lee, and
  Sugiyama]{charoenphakdee2019symmetric}
Nontawat Charoenphakdee, Jongyeong Lee, and Masashi Sugiyama.
\newblock On symmetric losses for learning from corrupted labels.
\newblock \emph{ICML}, 2019.

\bibitem[Charoenphakdee et~al.(2020)Charoenphakdee, Cui, Zhang, and
  Sugiyama]{charoenphakdee2020classification}
Nontawat Charoenphakdee, Zhenghang Cui, Yivan Zhang, and Masashi Sugiyama.
\newblock Classification with rejection based on cost-sensitive classification.
\newblock \emph{arXiv preprint arXiv:2010.11748}, 2020.

\bibitem[Chen et~al.(2019)Chen, Fang, and Liu]{chen2019fr}
Mingqiang Chen, Lin Fang, and Huafeng Liu.
\newblock Fr-net: Focal loss constrained deep residual networks for
  segmentation of cardiac mri.
\newblock In \emph{2019 IEEE 16th International Symposium on Biomedical Imaging
  (ISBI 2019)}, pages 764--767. IEEE, 2019.

\bibitem[Chow(1970)]{chow1970}
C.~K. Chow.
\newblock On optimum recognition error and reject tradeoff.
\newblock \emph{IEEE Transactions on information theory}, 16\penalty0
  (1):\penalty0 41--46, 1970.

\bibitem[Clanuwat et~al.(2018)Clanuwat, Bober-Irizar, Kitamoto, Lamb, Yamamoto,
  and Ha]{clanuwat2018deep}
Tarin Clanuwat, Mikel Bober-Irizar, Asanobu Kitamoto, Alex Lamb, Kazuaki
  Yamamoto, and David Ha.
\newblock Deep learning for classical japanese literature.
\newblock \emph{arXiv preprint arXiv:1812.01718}, 2018.

\bibitem[Cortes and Vapnik(1995)]{cortes1995support}
Corinna Cortes and Vladimir Vapnik.
\newblock Support-vector networks.
\newblock \emph{Machine learning}, 20\penalty0 (3):\penalty0 273--297, 1995.

\bibitem[DeGroot and Fienberg(1983)]{degroot1983comparison}
Morris~H DeGroot and Stephen~E Fienberg.
\newblock The comparison and evaluation of forecasters.
\newblock \emph{Journal of the Royal Statistical Society: Series D (The
  Statistician)}, 32\penalty0 (1-2):\penalty0 12--22, 1983.

\bibitem[Feuerverger and Rahman(1992)]{feuerverger1992some}
Andrey Feuerverger and Sheikh Rahman.
\newblock Some aspects of probability forecasting.
\newblock \emph{Communications in statistics-theory and methods}, 21\penalty0
  (6):\penalty0 1615--1632, 1992.

\bibitem[Gao and Zhou(2015)]{gao2015consistency}
Wei Gao and Zhi-Hua Zhou.
\newblock On the consistency of auc pairwise optimization.
\newblock In \emph{IJCAI}, pages 939--945, 2015.

\bibitem[Ghosh et~al.(2017)Ghosh, Kumar, and Sastry]{ghosh2017robust}
Aritra Ghosh, Himanshu Kumar, and PS~Sastry.
\newblock Robust loss functions under label noise for deep neural networks.
\newblock In \emph{AAAI}, pages 1919--1925, 2017.

\bibitem[Gneiting and Raftery(2007)]{gneiting2007strictly}
Tilmann Gneiting and Adrian~E Raftery.
\newblock Strictly proper scoring rules, prediction, and estimation.
\newblock \emph{Journal of the American statistical Association}, 102\penalty0
  (477):\penalty0 359--378, 2007.

\bibitem[Guo et~al.(2017)Guo, Pleiss, Sun, and Weinberger]{guo2017calibration}
Chuan Guo, Geoff Pleiss, Yu~Sun, and Kilian~Q Weinberger.
\newblock On calibration of modern neural networks.
\newblock \emph{ICML}, 2017.

\bibitem[He et~al.(2016)He, Zhang, Ren, and Sun]{he2016deep}
Kaiming He, Xiangyu Zhang, Shaoqing Ren, and Jian Sun.
\newblock Deep residual learning for image recognition.
\newblock In \emph{CVPR}, pages 770--778, 2016.

\bibitem[Hinton et~al.(2015)Hinton, Vinyals, and Dean]{hinton2015distilling}
Geoffrey Hinton, Oriol Vinyals, and Jeff Dean.
\newblock Distilling the knowledge in a neural network.
\newblock \emph{arXiv preprint arXiv:1503.02531}, 2015.

\bibitem[Krizhevsky and Hinton(2009)]{cifar10}
Alex Krizhevsky and Geoffrey Hinton.
\newblock Learning multiple layers of features from tiny images.
\newblock Technical report, Citeseer, 2009.

\bibitem[Kull et~al.(2019)Kull, Nieto, K{\"a}ngsepp, Silva~Filho, Song, and
  Flach]{kull2019beyond}
Meelis Kull, Miquel~Perello Nieto, Markus K{\"a}ngsepp, Telmo Silva~Filho, Hao
  Song, and Peter Flach.
\newblock Beyond temperature scaling: Obtaining well-calibrated multi-class
  probabilities with dirichlet calibration.
\newblock In \emph{NeurIPS}, pages 12316--12326, 2019.

\bibitem[LeCun(1998)]{lecun1998mnist}
Yann LeCun.
\newblock The mnist database of handwritten digits.
\newblock \emph{http://yann. lecun. com/exdb/mnist/}, 1998.

\bibitem[Lichman et~al.(2013)]{dataset3}
Moshe Lichman et~al.
\newblock {UCI} machine learning repository, 2013.

\bibitem[Lin et~al.(2017)Lin, Goyal, Girshick, He, and
  Doll{\'a}r]{lin2017focal}
Tsung-Yi Lin, Priya Goyal, Ross Girshick, Kaiming He, and Piotr Doll{\'a}r.
\newblock Focal loss for dense object detection.
\newblock In \emph{CVPR}, 2017.

\bibitem[Liu and Guo(2020)]{liu2019peer}
Yang Liu and Hongyi Guo.
\newblock Peer loss functions: Learning from noisy labels without knowing noise
  rates.
\newblock \emph{ICML}, 2020.

\bibitem[Lotfy et~al.(2019)Lotfy, Shubair, Navab, and
  Albarqouni]{lotfy2019investigation}
Mayar Lotfy, Raed~M Shubair, Nassir Navab, and Shadi Albarqouni.
\newblock Investigation of focal loss in deep learning models for femur
  fractures classification.
\newblock In \emph{2019 International Conference on Electrical and Computing
  Technologies and Applications (ICECTA)}, pages 1--4. IEEE, 2019.

\bibitem[Menon et~al.(2020)Menon, Rawat, Reddi, Kim, and
  Kumar]{menon2020distillation}
Aditya~Krishna Menon, Ankit~Singh Rawat, Sashank~J Reddi, Seungyeon Kim, and
  Sanjiv Kumar.
\newblock Why distillation helps: a statistical perspective.
\newblock \emph{arXiv preprint arXiv:2005.10419}, 2020.

\bibitem[Mozannar and Sontag(2020)]{mozannar2020consistent}
Hussein Mozannar and David Sontag.
\newblock Consistent estimators for learning to defer to an expert.
\newblock \emph{ICML}, 2020.

\bibitem[Mukhoti et~al.(2020)Mukhoti, Kulharia, Sanyal, Golodetz, Torr, and
  Dokania]{mukhoti2020calibrating}
Jishnu Mukhoti, Viveka Kulharia, Amartya Sanyal, Stuart Golodetz, Philip~HS
  Torr, and Puneet~K Dokania.
\newblock Calibrating deep neural networks using focal loss.
\newblock \emph{NeurIPS}, 2020.

\bibitem[M{\"u}ller et~al.(2019)M{\"u}ller, Kornblith, and
  Hinton]{muller2019does}
Rafael M{\"u}ller, Simon Kornblith, and Geoffrey~E Hinton.
\newblock When does label smoothing help?
\newblock In \emph{NeurIPS}, pages 4694--4703, 2019.

\bibitem[Naeini et~al.(2015)Naeini, Cooper, and
  Hauskrecht]{naeini2015obtaining}
Mahdi~Pakdaman Naeini, Gregory~F Cooper, and Milos Hauskrecht.
\newblock Obtaining well calibrated probabilities using bayesian binning.
\newblock In \emph{AAAI}, volume 2015, page 2901. NIH Public Access, 2015.

\bibitem[Netzer et~al.(2011)Netzer, Wang, Coates, Bissacco, Wu, and Ng]{svhn}
Yuval Netzer, Tao Wang, Adam Coates, Alessandro Bissacco, Bo~Wu, and Andrew~Y
  Ng.
\newblock Reading digits in natural images with unsupervised feature learning.
\newblock 2011.

\bibitem[Ni et~al.(2019)Ni, Charoenphakdee, Honda, and
  Sugiyama]{ni2019calibration}
Chenri Ni, Nontawat Charoenphakdee, Junya Honda, and Masashi Sugiyama.
\newblock On the calibration of multiclass classification with rejection.
\newblock In \emph{NeurIPS}, pages 2586--2596, 2019.

\bibitem[Niculescu-Mizil and Caruana(2005)]{niculescu2005predicting}
Alexandru Niculescu-Mizil and Rich Caruana.
\newblock Predicting good probabilities with supervised learning.
\newblock In \emph{ICML}, pages 625--632, 2005.

\bibitem[Nordstr{\"o}m et~al.(2020)Nordstr{\"o}m, Bao, L{\"o}fman, Hult, Maki,
  and Sugiyama]{nordstrom2020calibrated}
Marcus Nordstr{\"o}m, Han Bao, Fredrik L{\"o}fman, Henrik Hult, Atsuto Maki,
  and Masashi Sugiyama.
\newblock Calibrated surrogate maximization of dice.
\newblock In \emph{International Conference on Medical Image Computing and
  Computer-Assisted Intervention}, pages 269--278. Springer, 2020.

\bibitem[Platt(1999)]{platt1999probabilistic}
John Platt.
\newblock Probabilistic outputs for support vector machines and comparisons to
  regularized likelihood methods.
\newblock \emph{Advances in large margin classifiers}, 10\penalty0
  (3):\penalty0 61--74, 1999.

\bibitem[Reid and Williamson(2010)]{reid2010}
Mark~D Reid and Robert~C Williamson.
\newblock Composite binary losses.
\newblock \emph{JMLR}, 11:\penalty0 2387--2422, 2010.

\bibitem[Romdhane and Pr(2020)]{romdhane2020electrocardiogram}
Taissir~Fekih Romdhane and Mohamed~Atri Pr.
\newblock Electrocardiogram heartbeat classification based on a deep
  convolutional neural network and focal loss.
\newblock \emph{Computers in Biology and Medicine}, 123:\penalty0 103866, 2020.

\bibitem[Shi et~al.(2018)Shi, Meng, Wang, Lin, and Li]{shi2018normalized}
Yunsheng Shi, Jun Meng, Jian Wang, Hongfei Lin, and Yumeng Li.
\newblock A normalized encoder-decoder model for abstractive summarization
  using focal loss.
\newblock In \emph{CCF International Conference on Natural Language Processing
  and Chinese Computing}, pages 383--392. Springer, 2018.

\bibitem[Shu et~al.(2019)Shu, Wang, Chen, Hu, Cai, and
  Lin]{shu2019pathological}
Wenting Shu, Shaoyu Wang, Qiang Chen, Yun Hu, Zhengwei Cai, and Runlong Lin.
\newblock Pathological image classification of breast cancer based on residual
  network and focal loss.
\newblock In \emph{Proceedings of the 2019 3rd International Conference on
  Computer Science and Artificial Intelligence}, pages 211--214, 2019.

\bibitem[Shuford et~al.(1966)Shuford, Albert, and
  Massengill]{shuford1966admissible}
Emir~H Shuford, Arthur Albert, and H~Edward Massengill.
\newblock Admissible probability measurement procedures.
\newblock \emph{Psychometrika}, 31\penalty0 (2):\penalty0 125--145, 1966.

\bibitem[Sun et~al.(2019)Sun, Dong, Ma, Sutcliffe, He, Chen, and
  Feng]{sun2019drug}
Xia Sun, Ke~Dong, Long Ma, Richard Sutcliffe, Feijuan He, Sushing Chen, and Jun
  Feng.
\newblock Drug-drug interaction extraction via recurrent hybrid convolutional
  neural networks with an improved focal loss.
\newblock \emph{Entropy}, 21\penalty0 (1):\penalty0 37, 2019.

\bibitem[Tewari and Bartlett(2007)]{tewari2007consistency}
Ambuj Tewari and Peter~L Bartlett.
\newblock On the consistency of multiclass classification methods.
\newblock \emph{JMLR}, 8\penalty0 (May):\penalty0 1007--1025, 2007.

\bibitem[Tong et~al.(2020)Tong, Liang, Zhang, Chen, Sangaiah, Zheng, Wan, Yue,
  and Yang]{tong2020pulmonary}
Chao Tong, Baoyu Liang, Mengze Zhang, Rongshan Chen, Arun~Kumar Sangaiah,
  Zhigao Zheng, Tao Wan, Chenyang Yue, and Xinyi Yang.
\newblock Pulmonary nodule detection based on isodata-improved faster rcnn and
  3d-cnn with focal loss.
\newblock \emph{ACM Transactions on Multimedia Computing, Communications, and
  Applications (TOMM)}, 16\penalty0 (1s):\penalty0 1--9, 2020.

\bibitem[Tripathi et~al.(2019)Tripathi, Kumar, Ramesh, Singh, and
  Yenigalla]{tripathi2019focal}
Suraj Tripathi, Abhay Kumar, Abhiram Ramesh, Chirag Singh, and Promod
  Yenigalla.
\newblock Focal loss based residual convolutional neural network for speech
  emotion recognition.
\newblock \emph{arXiv preprint arXiv:1906.05682}, 2019.

\bibitem[Uematsu and Lee(2017)]{uematsu2017theoretically}
Kazuki Uematsu and Yoonkyung Lee.
\newblock On theoretically optimal ranking functions in bipartite ranking.
\newblock \emph{Journal of the American Statistical Association}, 112\penalty0
  (519):\penalty0 1311--1322, 2017.

\bibitem[Ulloa et~al.(2020)Ulloa, Veloz, Allende-Cid, and
  Allende]{ulloa2020improving}
Gustavo Ulloa, Alejandro Veloz, H{\'e}ctor Allende-Cid, and H{\'e}ctor Allende.
\newblock Improving multiple sclerosis lesion boundaries segmentation by
  convolutional neural networks with focal learning.
\newblock In \emph{International Conference on Image Analysis and Recognition},
  pages 182--192. Springer, 2020.

\bibitem[Vapnik(1998)]{vapnik1998statistical}
Vladimir Vapnik.
\newblock \emph{Statistical learning theory. 1998}, volume~3.
\newblock Wiley, New York, 1998.

\bibitem[Vernet et~al.(2011)Vernet, Reid, and Williamson]{vernet2011composite}
Elodie Vernet, Mark~D Reid, and Robert~C Williamson.
\newblock Composite multiclass losses.
\newblock In \emph{NeurIPS}, pages 1224--1232, 2011.

\bibitem[Vongkulbhisal et~al.(2019)Vongkulbhisal, Vinayavekhin, and
  Visentini-Scarzanella]{vongkulbhisal2019unifying}
Jayakorn Vongkulbhisal, Phongtharin Vinayavekhin, and Marco
  Visentini-Scarzanella.
\newblock Unifying heterogeneous classifiers with distillation.
\newblock In \emph{CVPR}, pages 3175--3184, 2019.

\bibitem[Williamson et~al.(2016)Williamson, Vernet, and
  Reid]{williamson2016composite}
Robert~C Williamson, Elodie Vernet, and Mark~D Reid.
\newblock Composite multiclass losses.
\newblock \emph{JMLR}, 17\penalty0 (1):\penalty0 7860--7911, 2016.

\bibitem[Xiao et~al.(2017)Xiao, Rasul, and Vollgraf]{xiao2017fashion}
Han Xiao, Kashif Rasul, and Roland Vollgraf.
\newblock Fashion-mnist: a novel image dataset for benchmarking machine
  learning algorithms.
\newblock \emph{arXiv preprint arXiv:1708.07747}, 2017.

\bibitem[Xu et~al.(2020)Xu, Cao, Dong, Yue, Li, and Tong]{xu2020focal}
Guoping Xu, Hanqiang Cao, Youli Dong, Chunyi Yue, Kexin Li, and Yubing Tong.
\newblock Focal loss function based deeplabv3+ for pathological lymph node
  segmentation on pet/ct.
\newblock In \emph{Proceedings of the 2020 2nd International Conference on
  Intelligent Medicine and Image Processing}, pages 24--28, 2020.

\bibitem[Yuan and Wegkamp(2010)]{yuan2010}
Ming Yuan and Marten Wegkamp.
\newblock Classification methods with reject option based on convex risk
  minimization.
\newblock \emph{JMLR}, 11:\penalty0 111--130, 2010.

\bibitem[Zhang(2004)]{zhang2004statisticalmulti}
Tong Zhang.
\newblock Statistical analysis of some multi-category large margin
  classification methods.
\newblock \emph{JMLR}, 5\penalty0 (Oct):\penalty0 1225--1251, 2004.

\end{thebibliography}
\end{document}